\documentclass{article} % For LaTeX2e

\usepackage[table]{xcolor}  
\usepackage{iclr2026_conference,times}

\iclrfinalcopy

\usepackage{amsmath,amsfonts,bm}

\def\eqref#1{equation~\ref{#1}}
\def\1{\bm{1}}

\def\mF{{\bm{F}}}

\def\mP{{\bm{P}}}

\DeclareMathAlphabet{\mathsfit}{\encodingdefault}{\sfdefault}{m}{sl}
\SetMathAlphabet{\mathsfit}{bold}{\encodingdefault}{\sfdefault}{bx}{n}

\newcommand{\E}{\mathbb{E}}

\newcommand{\R}{\mathbb{R}}

\DeclareMathOperator*{\argmax}{arg\,max}
\DeclareMathOperator*{\argmin}{arg\,min}

\usepackage{hyperref}
\usepackage{url}

\usepackage{amsmath}
\usepackage{amssymb}
\usepackage{mathtools}
\usepackage{amsthm}

\usepackage{microtype}
\usepackage{graphicx}
\usepackage{booktabs} % for professional tables

\usepackage{algorithm}
\usepackage{algpseudocode}
\usepackage{graphicx} 
\usepackage{todonotes}

\usepackage{xcolor}
\usepackage{dsfont}
\usepackage{nicefrac}
\usepackage{tcolorbox}
\usepackage{tabularx}
\usepackage{pifont}
\usepackage[inline]{enumitem}
\usepackage{multirow}
\usepackage{caption}
\usepackage{wrapfig}
\usepackage[export]{adjustbox}
\usepackage[list=true]{subcaption}
\usepackage{colortbl}
\definecolor{lightgray}{RGB}{242, 242, 242}
\usepackage{float}
\usepackage[capitalize,noabbrev]{cleveref}
\crefname{assumption}{Assumption}{Assumptions}
\crefname{algorithm}{Algorithm}{Algorithms}

\usepackage{longtable}
\usepackage[title]{appendix}

\usepackage{tcolorbox}
\usepackage{xspace}
\DeclareRobustCommand{\ie}{i.e.,\@\xspace}
  
\DeclareRobustCommand{\eg}{e.g.,\@\xspace}
\DeclareRobustCommand{\wrt}{w.r.t.\@\xspace}
\usepackage[capitalize]{cleveref}
\usepackage{tikz}
\usepackage{tikz-cd}
\usetikzlibrary{shapes,backgrounds}
\usepackage{array}
\usepackage{enumitem}

\usepackage{wrapfig}
\usepackage[export]{adjustbox}
\usepackage[list=true]{subcaption}
\usepackage[normalem]{ulem}
\usepackage{rotating}   % for \rotatebox (rotates text)
\usepackage{cleveref}
\usepackage{adjustbox}
\usepackage{twoopt}

\usepackage[T1]{fontenc}
\usepackage[utf8]{inputenc}
\usepackage{amsmath,amssymb}
\usepackage{booktabs}      
\usepackage{multirow}        

\usepackage{acronym}		% for acronyms
\usepackage{amsthm}
\usepackage{thmtools, thm-restate}
\declaretheorem[numberwithin=section]{thm}
\declaretheorem[sibling=thm]{theorem}
\declaretheorem[sibling=thm]{lemma}

\declaretheorem[numberwithin=section]{assumption}
\declaretheorem[]{proof sketch}
\declaretheorem[]{definition}
\declaretheorem[]{proposition}

\usepackage{amsfonts}[mathscr]
\usepackage{amssymb}
\usepackage{amsmath}
\DeclareMathOperator*{\EV}{\mathbb{E}}

\newcommand{\A}{\mathcal{A}}
\newcommand{\Lfunc}{\mathcal{L}}

\newcommand{\X}{\mathcal{X}}

\newcommand{\D}{\mathcal{D}}

\newcommand{\Q}{\mathcal{Q}}
\newcommand{\G}{\mathcal{G}}
\newcommand{\mypar}[1]{\textbf{#1.}}

\newcommand{\entropy}{\mathcal{H}}
\newcommand{\der}{\mathrm{d}}

\newcommand{\noise}{U}
\newcommand{\bias}{b}
\newcommand{\hist}{\mathcal{T}}

\newcommand{\step}{\gamma}

\usepackage{pifont}
\usepackage{makecell}
\definecolor{darkgreen}{RGB}{0,100,0}
\definecolor{darkorange}{RGB}{255,140,0}

\newcommand{\AlgNameLong}{Flow-Expander \xspace}
\newcommand{\AlgNameShort}{\textsc{\small{FE}}\xspace}
\newcommand{\AlgNameDef}{\textbf{F}low \textbf{E}xpander  (\textsc{\small{FE}}\xspace)}

\newcommand{\AlgNameDefLocal}{\textbf{L}ocal \textbf{FE}    (\textsc{\small{L-FE}}\xspace)}
\newcommand{\AlgNameShortLocal}{\textsc{\small{L-FE}}\xspace}

\newcommand{\AlgNameDefGlobal}{\textbf{G}lobal \textbf{FE}    (\textsc{\small{G-FE}}\xspace)}
\newcommand{\AlgNameShortGlobal}{\textsc{\small{G-FE}}\xspace}

\newcommand{\AlgNameDefExplore}{\textbf{N}oised \textbf{S}pace \textbf{E}xploration    (\textsc{\small{NSE}}\xspace)}
\newcommand{\AlgNameShortExplore}{\textsc{\small{NSE}}\xspace}

\newcommand{\AlgNameShortFDC}{\textsc{\small{FDC}}\xspace}
\newcommand{\AlgNameShortSMEME}{\textsc{\small{S-MEME}}\xspace}
\newcommand{\AlgNameShortCONSTR}{\textsc{\small{CONSTR}}\xspace}

\newcommand{\FineTuningSolver}{\textsc{\small{FineTuningSolver}}\xspace}

\newcommand{\ExpandThenProject}{\textsc{\small{ExpandThenProject}}\xspace}

\definecolor{myviolet}{rgb}{0.6, 0.4, 0.8}
\definecolor{mygreen}{rgb}{0.0, 0.5, 0.0}
\definecolor{myorange}{rgb}{1., 0.8, 0.17}

\newcommand{\debug}[1]{#1}

\newcommand{\newmacro}[2]{\newcommand{#1}{{#2}}}		% for shorthand definitions
\newcommand{\dual}{h}
\newcommand{\run}{k}

\newcommand{\state}{\dual}
\newcommand{\curr}[1][\state]{\debug{#1}^{\run}}		% for current value (X by default)
\renewcommand{\next}[1][\state]{\debug{#1}^{\run+1}}		% for current value (X by default)

\newcommand{\efftime}{\tau}
\newcommand{\apt}[2][]{\state^{#1}(#2)}	
\newcommand{\ctime}{t}

\newmacro{\temp}{\eta}		% for learning rate
\newmacro{\points}{\mathcal{Z}}		% for point set
\newmacro{\intpoints}{\points^{\circ}}		%for point set interior
\newmacro{\point}{\dual}		% for generic point
\newmacro{\pointalt}{\alt\point}		% for alternate point
\newmacro{\ctimealt}{s}		% for dummy continuous time
\newmacro{\cstart}{0}		% for continuous time start

\newmacro{\horizon}{T}		% for horizon

\newmacro{\vecfield}{V}		% for vector field

\newmacro{\signal}{V}		% for signal
\newmacro{\error}{W}		% for error
\newmacro{\brown}{W}		% for Wiener process

\newmacro{\dstate}{Y}		% for other iterate

\newmacro{\flowmap}{\Theta}		% for (semi)flows
\newcommandtwoopt{\flow}[2][\ctime][\point]{\flowmap_{#1}(#2)}
\newmacro{\minmax}{\Phi}		% for minmax objective

\newmacro{\minvar}{x}		% for minimization variable
\newmacro{\minvaralt}{\alt x}		% for alternate minvar
\newmacro{\minvars}{\mathcal{X}}		% for minvar space

\newmacro{\maxvar}{y}		% for maximization variable
\newmacro{\maxvaralt}{\alt y}		% for alternate maxvar
\newmacro{\maxvars}{\mathcal{Y}}		% for maxvar space

\newmacro{\minsol}{\sol[\minvar]}		% for minimization solution
\newmacro{\maxsol}{\sol[\maxvar]}		% for maximization solution

\newcommand{\sol}[1][\point]{#1^{\ast}}		% for solutions (x by default)
\newmacro{\set}{\mathcal{S}}		% for generic set

\newmacro{\open}{\mathcal{U}}		% for open sets
\newmacro{\closed}{\mathcal{C}}		% for closed sets
\newmacro{\cpt}{\mathcal{K}}		% for compact sets
\newmacro{\nhd}{\mathcal{U}}		% for neighborhoods

\newacro{APT}{asymptotic pseudotrajectory}
\newacroplural{APT}[APTs]{asymptotic pseudotrajectories}
\newacro{GD}{gradient dynamics}
\newacro{GF}{gradient flow}
\newacro{ICT}{internally chain-transitive}
\newacro{MDS}{martingale difference sequence}
\newacro{NE}{Nash equilibrium}
\newacroplural{NE}[NE]{Nash equilibria}
\newacro{ODE}{ordinary differential equation}
\newacro{SA}{stochastic approximation}
\newacro{SFO}{stochastic first-order oracle}
\newacro{SG}{stochastic gradient}
\newacro{SP}{saddle-point}
\newacro{WAC}{weak asymptotic coercivity}

\newacro{AH}{Arrow\textendash Hurwicz}
\newacro{BDG}{Burkholder\textendash Davis\textendash Gundy}
\newacro{ConO}{consensus optimization}
\newacro{RM}{Robbins\textendash Monro}
\newacro{KW}{Kiefer\textendash Wolfowitz}
\newacro{GDA}{gradient descent/ascent}
\newacro{SGA}{symplectic gradient adjustment}
\newacro{SGD}{stochastic gradient descent}
\newacro{SGDA}{stochastic gradient descent/ascent}
\newacro{SPSA}{simultaneous perturbation stochastic approximation}
\newacro{ASGDA}[alt-SGDA]{alternating stochastic gradient descent/ascent}
\newacro{SEG}{stochastic extra-gradient}
\newacro{EG}{extra-gradient}
\newacro{PEG}{Popov's extra-gradient}
\newacro{RG}{reflected gradient}
\newacro{OG}{optimistic gradient}
\newacro{PPM}{proximal point method}

\newacro{GAN}{generative adversarial network}
\newacro{NN}{neural network}
\newacro{FTRL}{``follow the regularized leader''}
\newacro{CGD}{Competitive Gradient Descent}
\newacro{wp1}[w.p.$1$]{with probability $1$}

\definecolor{pastelblueold}{RGB}{56,146,236}
\definecolor{pastelblue}{RGB}{43,115,187}
\definecolor{pastelgreen}{RGB}{63,159,95}

\hypersetup{
  colorlinks=true,
  citecolor=pastelgreen,
  linkcolor=pastelblue,    % internal references (equations, figures, tables)
  urlcolor=pastelblue      % URLs
}

\title{Verifier-Constrained Flow Expansion \\ for Discovery Beyond the Data}

\author{
Riccardo {De Santi}\textsuperscript{$12$}\thanks{Equal contribution. Corresponding author: \texttt{rdesanti@ethz.ch}.}\;, Kimon Protopapas\textsuperscript{$1$}\footnotemark[1]\;, Ya-Ping Hsieh\textsuperscript{$1$}, Andreas Krause\textsuperscript{$12$} \\
\textsuperscript{$1$}ETH Zürich, \textsuperscript{$2$}ETH AI Center \\
\texttt{\{rdesanti,kprotopapas,krausea\}@ethz.ch}, \texttt{\{yaping.hsieh\}@inf.ethz.ch}
}

\begin{document}

\maketitle

\begin{abstract}
\looseness -1 Flow and diffusion models are typically pre-trained on limited available data (e.g., molecular samples), covering only a fraction of the valid design space (e.g., the full molecular space). As a consequence, they tend to generate samples from only a narrow portion of the feasible domain. This is a fundamental limitation for scientific discovery applications, where one typically aims to sample valid designs beyond the available data distribution. To this end, we address the challenge of leveraging access to a verifier (e.g., an atomic bonds checker), to adapt a pre-trained flow model so that its induced density expands beyond regions of high data availability, while preserving samples validity. We introduce formal notions of \emph{strong} and \emph{weak verifiers} and propose algorithmic frameworks for \emph{global} and \emph{local flow expansion} via probability-space optimization. Then, we present \AlgNameDef, a scalable mirror descent scheme that provably tackles both problems by verifier-constrained entropy maximization over the flow process noised state space. Next, we provide a thorough theoretical analysis of the proposed method, and state convergence guarantees under both idealized and general assumptions. Ultimately, we empirically evaluate our method on both illustrative, yet visually interpretable settings, and on a molecular design task showcasing the ability of \AlgNameShort to expand a pre-trained flow model increasing conformer diversity while preserving validity.
\end{abstract}

\addtocontents{toc}{\protect\setcounter{tocdepth}{-1}}

\section{Introduction} 
\label{sec:introduction}
\begin{wrapfigure}{r}{0.33\textwidth}
  \centering 
  \includegraphics[width=0.33\textwidth]{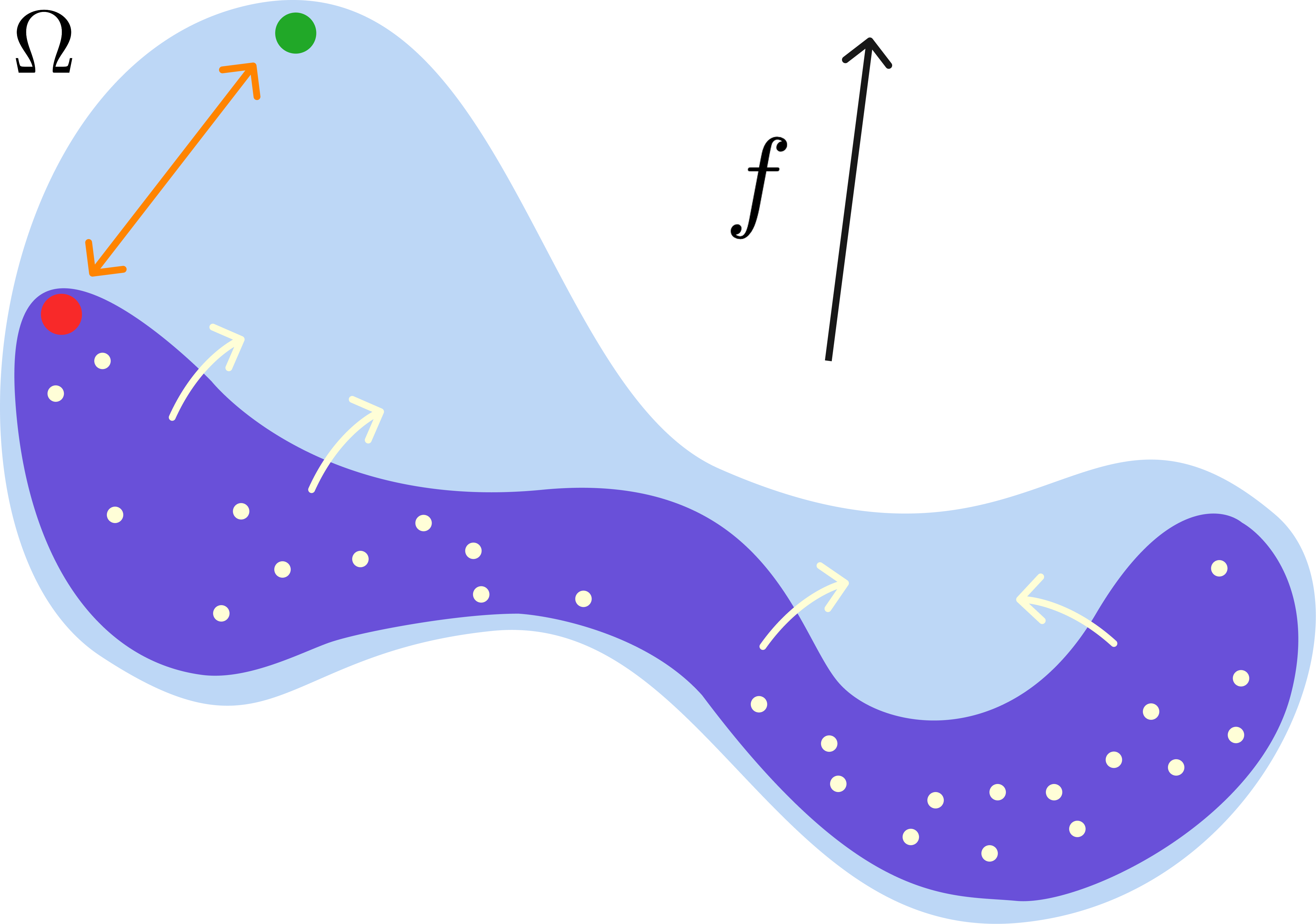}
  \caption{\looseness -1 Limited coverage of the valid design space leads to generating sub-optimal samples for downstream optimization tasks.}
  \label{fig:partial_manifold_coverage} 
\end{wrapfigure}

\looseness -1 Recent years have seen major progress in large-scale generative modeling. In particular, flow~\citep{lipman2022flow, lipman2024flow} and diffusion models~\citep{sohl2015deep, song2019generative, ho2020denoising} now produce high-fidelity samples and have been applied successfully across domains such as chemistry~\citep{hoogeboom2022equivariant}, biology~\citep{corso2022diffdock}, and robotics~\citep{chi2023diffusion}. These models are typically trained via divergence minimization objectives, such as score~\citep{song2020score} or flow matching~\citep{lipman2022flow}, to approximate the distribution induced by training data (\eg molecular samples), which typically only cover a tiny subset of the full valid design space. As a consequence, pre-trained generative models concentrate their density over a narrow region of valid designs, and are unlikely to generate valid samples beyond areas of high data availability. This is a fundamental limitation for scientific discovery tasks such as material design and drug discovery, where one typically wishes to generate valid designs beyond the data distribution. In particular, limited coverage of the valid design space leads to an irreducible sub-optimality gap in \emph{generative optimization}~\citep{santi2025flow, uehara2024feedback, li2024diffusion} problems, where one aims to generate samples maximizing a task-specific utility function $f: \X \to \R$ (\eg binding affinity for protein docking), as illustrated in Figure \ref{fig:partial_manifold_coverage}.

\looseness -1 Prior work has addressed this issue through manifold-exploration schemes that re-balance a pre-trained model’s density over diverse, promising modes~\citep[\eg][]{de2025provable, santi2025flow, celik2025dime}. However, the validity signal learned by a pre-trained flow model diminishes outside high-data regions. Therefore, seemingly promising low-probability modes that such methods would further explore, could turn out to be invalid. This highlights the need to inject further validity information into the exploration process via an external \emph{verifier}~\citep{botta2025query, wang2024multi}: formally, a function $v: \X \to \{0,1\}$ that provides data-specific validity signal. Luckily, there exists more-or-less accurate verifiers for a wide variety of real-world discovery applications, such as atomic-bond checkers for drug discovery~\citep[\eg][]{oboyle2011openbabel}, protein folding predictors for protein design~\citep[\eg][]{jumper2021highly}, as well as physics-based simulators for mechanical and material design~\citep[\eg][]{kresse1996efficient}. Motivated by these insights, in this work we advance flow and diffusion-based design space exploration methods~\citep{de2025provable, santi2025flow} by asking the following question:
\begin{center} 
   \emph{How can we leverage a given verifier to adapt a flow or diffusion model to generate designs beyond high data-availability regions while preserving validity?}
\end{center} 
\looseness -1 Answering this would contribute to the algorithmic-theoretical foundations of \emph{generative exploration}, and enable applications of flow-based exploration schemes in diverse scientific discovery tasks.

\paragraph{Our approach}  
\looseness -1 We address this challenge by formally introducing two verifier types. A \emph{strong verifier} is a function $v: \X \to \{0,1\}$ that characterizes validity exactly (\ie $v(x)=1$ iff $x$ is valid). A \emph{weak verifier} instead acts as a \emph{filter}: it rejects certain invalid designs but misses others (formally $v(x)=0 \implies$ $x$ is invalid). While the former is arguably rare in scientific discovery applications, the latter is ubiquitous. For instance, most molecular checkers examine specific constraints (\eg atomic bonds, graph topology, or conformer geometry), ruling out certain invalid samples, but without guaranteeing validity. We show that strong verifiers allow to adapt a pre-trained model to \emph{globally} expand over the entire valid design space. While this is not the case for weak verifiers, they can also be leveraged for a more conservative, \emph{local} expansion. To this end, we introduce mathematical frameworks for \emph{global} and \emph{local flow expansion} via verifier-constrained entropy maximization (Sec. \ref{sec:problem_setting}). Next, we propose \AlgNameDef, a scalable mirror descent scheme acting over the flow process noised state space that provably tackles both problems by sequentially alternating  expansion and projection steps (Sec. \ref{sec:algorithm}). We provide theoretical guarantees for \AlgNameShort, showing convergence results under both idealized and general assumptions via mirror-flow theory (Sec. \ref{sec:theory}). Ultimately, we evaluate our method on both illustrative, yet visually interpretable settings, and on a molecular design task, showcasing the ability of \AlgNameShort to expand a pre-trained flow model to increase molecular conformer diversity while better preserving validity than current flow-based exploration schemes (Sec. \ref{sec:experiments}).

\paragraph{{Our contributions}} In this work, we provide the following contributions:

\begin{itemize}[noitemsep,topsep=0pt,parsep=0pt,partopsep=0pt,leftmargin=*]
    \item \looseness -1 A formalization of \emph{Global} and \emph{Local Flow Expansion} via verifier-constrained entropy maximization, which formally capture the practically relevant problem of expanding the coverage of a pre-trained flow or diffusion model by integrating information from an available strong or weak verifier (Sec. \ref{sec:problem_setting}).
    \item \looseness -1 {\em \AlgNameDef}, a principled probability-space optimization scheme that provably solves both problems introduced via constrained entropy maximization over the flow noised state space (Sec. \ref{sec:algorithm}).
    \item \looseness -1  \AlgNameDefExplore{}, a noised state space unconstrained exploration scheme, obtained as a by product, that outperforms existing flow-based methods for high-dim. exploration (Sec. \ref{sec:algorithm}).
    \item A theoretical analysis of the proposed algorithm providing convergence guarantees under both simplified and realistic assumptions via mirror-flow theory (Sec. \ref{sec:theory}).
    \item \looseness -1 An experimental evaluation of \AlgNameShort, showcasing its practical relevance on both visually interpretable illustrative settings, and on a molecular design task aiming to increase conformer diversity. (Sec. \ref{sec:experiments}).
\end{itemize}

\section{Background and Notation} 
\label{sec:background}
\mypar{Mathematical Notation}
Using $\X \subseteq \R^d$ to refer to the design space (an arbitrary set), we denote the set of Borel probability measures on $\X$ with $\mP(\X)$, and the set of functionals over the set of probability measures $\mP(\X)$ as $\mF(\X)$. Given an integer $N$, we define $[N] \coloneqq \{1, \ldots, N\}$. 

\mypar{Flow-based Generative Modeling}
Generative models aim to approximately replicate and sample from a data distribution $p_{data}$. Flow models tackle this problem by modeling a \emph{flow}, which incrementally transforms samples $X_0 = x_0$ from a source distribution $p_0$ into samples $X_1=x_1$ from the target distribution $p_{data}$ \citep{lipman2024flow, farebrother2025temporal}. Formally, a \emph{flow} is a time-dependent map $\psi: [0,1]\times \R^d \to \R$ such that $\psi: (t,x) \to \psi_t(x)$. A \emph{generative flow model} is then defined as a continuous-time Markov process $\{X_t\}_{0 \leq t \leq 1}$ generated by applying a flow $\psi_t$ to $X_0$, i.e. $X_t = \psi_t(X_0), \ t \in [0, 1]$ such that $X_1 = \psi_1(X_0) \sim p_{data}$. In the context of flow modeling, the flow $\psi$ is defined by a \emph{velocity field} $u: [0,1] \times \R^d \to \R^d$, which is a vector field implicitly defining $\psi$ via the following ordinary differential equation (ODE), typically referred to as \emph{flow ODE}:
\begin{equation}
    \frac{\der}{\der t} \psi_t(x) = u_t(\psi_t(x)) \label{eq:flow_diff_eq} \, , \; \; \psi_0(x) = 0
\end{equation}
We write $\{p_t\}_{t \in [0, 1}$ to refer to the probability path of \emph{marginal densities} of the flow model, \ie $X_t = \psi_t(X_0) \sim p_t$, and denote by $p^u$ the probability path of marginal densities induced by the velocity field $u$. In practice, Flow Matching (FM)~\cite[][]{lipman2024flow} can be used to estimate a velocity field $u^{\theta}$ s.t. the probability path $p^{u_\theta}$ satisfies $p^{u_\theta}_0 = p_0$ and $p^{u_\theta}_1 = p_{data}$, where $p_0$ denotes the source distribution, and $p_{data}$ the target data distribution. Typically FM is rendered tractable by defining $p^u_t$ as the marginal of a conditional density $p^u_t(\cdot | x_0, x_1)$, \eg:
\begin{equation}
    X_t \; |\; X_0, X_1 = \kappa_t X_0 + \omega_t X_1
\end{equation}
\looseness -1 where $\kappa_0 = \omega_1 = 1$ and $\kappa_1 = \omega_0 = 0$ (e.g., $\kappa_t = 1- t$ and $\omega_t = t$). Then $u^\theta$ can be learned by regressing onto the conditional velocity field $u(\cdot | x_1)$ \citep{lipman2022flow}. 
Interestingly, diffusion models~\citep{song2019generative} (DMs) admit an equivalent ODE‐based formulation with identical marginal densities to their original SDE dynamics~\citep[Chapter 10]{lipman2024flow}. Consequently, while in this work we adopt the notation of flow models, our contributions carry over directly to DMs.

\mypar{Continuous-time Reinforcement Learning}
We formulate continuous-time reinforcement learning (RL) as a specific class of finite-horizon optimal control problems \citep{wang2020reinforcement, jia2022policy, treven2023efficient, zhao2024scores}. Given a state space $\X$ and an action space $\A$, we consider the transition dynamics governed by the following ODE:
\begin{equation} 
      \frac{\der}{\der t} \psi_t(x) = a_t(\psi_t(x)) \label{eqn_continuous_RL} 
\end{equation}
\looseness -1 where $a_t \in \A$ is a selected action. We consider a state space $\X \coloneqq \R^d \times [0,1]$, and denote by (Markovian) deterministic policy a function $\pi_t(X_t) \coloneqq \pi(X_t, t) \in \A$ mapping a state $(x,t) \in \X$ to an action $a \in \A$ such that $a_t = \pi(X_t, t)$, and denote with $p_t^\pi$ the marginal density at time $t$ induced by policy $\pi$. Considering the continuous-time reinforcement learning formulation above, a velocity field $u^{pre}$ can be interpreted as an action process $a^{pre}_t \coloneqq u^{pre}(X_t, t)$, where $a_t^{pre}$ is determined by a continuous-time RL policy via $a_t^{pre} = \pi^{pre}(X_t, t)$~\citep{de2025provable}. Therefore, we can express the flow ODE induced by a pre-trained flow model by replacing $a_t$ with $a^{pre}$ in Eq. \ref{eqn_continuous_RL}, and denote the pre-trained model by its policy $\pi^{pre}$, which induces a density $p^{pre}_1 \coloneqq p^{\pi^{pre}}_1$ approximating $p_{data}$.

\section{Problem Statement: Global and Local Flow Expansion}
\label{sec:problem_setting}
\begin{figure*}[t]
    \centering
    \begin{subfigure}{0.55\textwidth} 
        \centering
        \includegraphics[width=\textwidth, keepaspectratio]{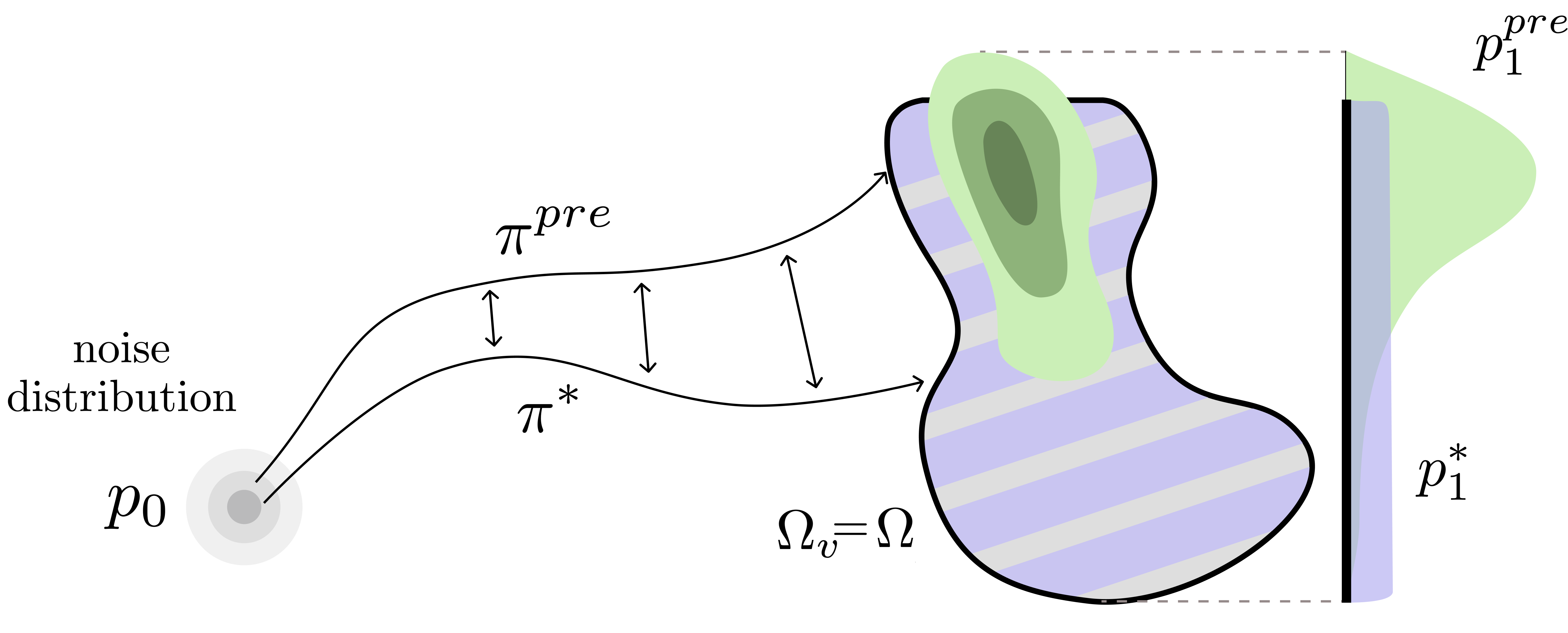}
        \caption{Global flow expansion}
        \label{fig:process_drawing}
    \end{subfigure}%
    \hspace{15pt}
    \begin{subfigure}{0.3\textwidth} 
        \centering
        \raisebox{-1.3ex}[0pt][0pt]{
        \includegraphics[width=\textwidth, keepaspectratio]{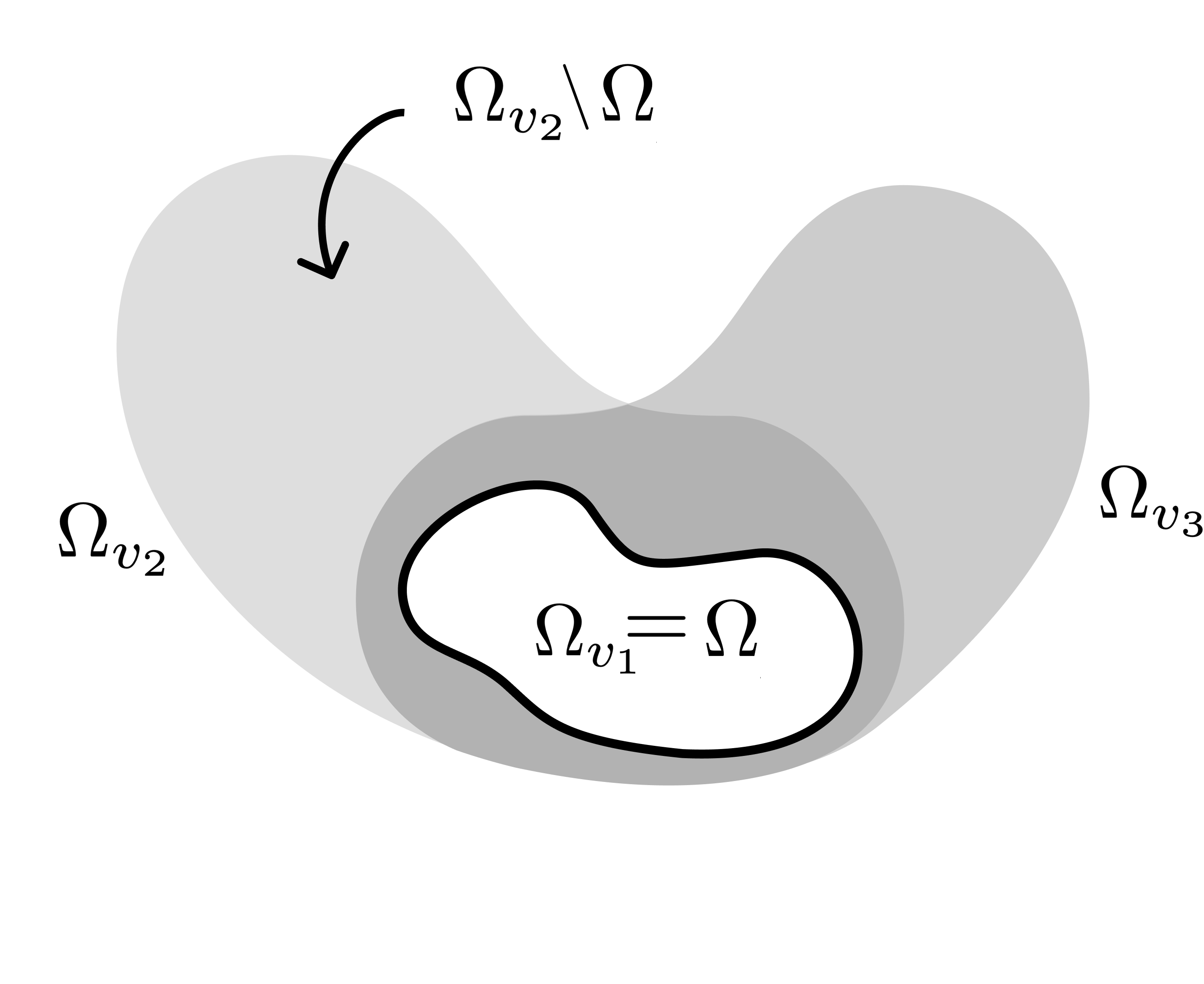}
        }
        \caption{Strong and weak verifier sets}
        \label{fig:verifiers_drawing}
    \end{subfigure}
    \caption{\looseness -1 (\ref{fig:process_drawing}) Pre-trained and globally expanded flow model inducing densities $p^{pre}_1$ and optimal density $p_1^*$.  (\ref{fig:verifiers_drawing}) Valid design space $\Omega$, strong and weak verifiers $\Omega_{v_i}$, $i \in [3]$, and their compositions.}
    \label{fig:joint_top_figures} 
\end{figure*}

\looseness -1 Given a pre-trained flow model $\pi^{pre}$ inducing a density $p_1^{pre}$ that covers sufficiently only a limited region of the \emph{valid design space}\footnote{In this work, we consider the valid design space to be an unknown, yet bounded set.} $\Omega \subseteq \R^d$ (\eg a molecular space, see Fig. \ref{fig:partial_manifold_coverage}), we aim to adapt it by leveraging a verifier (\eg an atomic bonds checker) to compute a model $\pi^*$, inducing a process: 
\begin{equation} 
      \frac{\der}{\der t} \psi_t(x) = a^*_t(\psi_t(x)) \quad  \text{with} \quad a_t^* = \pi^*(x,t), \label{eq:controlled_ODE} 
\end{equation}
\looseness -1 such that its density $p_1^* \coloneqq p_1^{\pi^*}$ is more uniformly distributed over the valid design space than \smash{$p_1^{pre}$}. To this end, we first denote by \emph{verifier} a scalar function $v: \X \to \{0,1\}$, and indicate by $\Omega_v = \{x \in \X : v(x) = 1\}$ the \emph{verifier-set} induced by $v$. Next, we classify any verifier $v$ as \emph{strong} or \emph{weak} depending on the relationship between its verifier-set $\Omega_v$ and the valid design space $\Omega$. 

\begin{restatable}[Strong Verifier]{definition}{strongVerifer}
\label{definition:strong_verifier}
We denote by \emph{strong verifier} a function \smash{$v: \X \to  \{0,1\}$} s.t. $\Omega_v = \Omega$.
\end{restatable}
\looseness -1 By Def. \ref{definition:strong_verifier}, $v(x) = 1 \iff x \in \Omega$, hence a strong verifier fully characterizes the valid design space $\Omega$.

\subsection{An Idealized Problem: Global Flow Expansion via Strong Verifiers}
Given a pre-trained flow model $\pi^{pre}$ and a  strong verifier $v: \X \to \{0,1\}$ as defined within Def. \ref{definition:strong_verifier}, one can capture the problem of computing a new flow model $\pi^*$ such that its marginal density \smash{$p_1^* \coloneqq p_1^{\pi^*}$} covers $\Omega$ uniformly via the following verifier-constrained entropy maximization problem.
\begin{tcolorbox}[colframe=white!, top=2pt,left=2pt,right=2pt,bottom=2pt]
\begin{center}
\textbf{Global Flow Expansion via Verifier-Constrained Entropy Maximization}
\begin{align}
      \pi^* \in \argmax_{\pi : p_0^* = p_0^{pre}}\;  \mathcal{H}(p_1^\pi)  \quad \text{ subject to } \quad  s.t. \quad p^{\pi}_1 \in \mP(\Omega_{v}) \label{eq:global_flow_expansion_problem}
\end{align}
\end{center}
\end{tcolorbox}
In this formulation, the constraint $p_0^\pi = p_0^{pre}$ enforces that the marginal density at $t=0$ matches the pre-trained model marginal, and $\entropy \in \mF(\Omega_{v})$ denotes the differential entropy functional expressed as: 
\begin{equation}
 \entropy(\mu) = - \int \der \mu \log \frac{\der \mu}{dx}, \quad \mu \in \mP(\Omega_{v}) \label{eq:entropy_functional_def}
\end{equation}
\looseness -1 where $\Omega_v$ is the bounded verifier-set induced by $v$. 
Crucially, Problem \ref{eq:global_flow_expansion_problem} computes a flow model $\pi^*$ inducing the density $p_1^{\pi^*}$ with maximum entropy among all densities supported on $\Omega_v$. Therefore, the optimal density $p_1^{\pi^*}$ according to Problem \ref{eq:global_flow_expansion_problem} corresponds to the uniform density over the entire valid design space $\Omega$, \ie $p_1^{\pi^*} = \mathcal{U}(\Omega)$ - as uniforms are the entropy-maximizing densities on bounded sets and $\Omega_v = \Omega$ due to $v$ being a strong verifier. 
Notably, Problem \ref{eq:global_flow_expansion_problem} does not depend on the prior generative model $\pi^{pre}$. In fact, since the strong verifier $v$ fully characterizes the valid design space $\Omega$, prior information is not required to compute the maximally explorative, yet valid flow model $\pi^*$. 

\looseness -1 Problem \ref{eq:global_flow_expansion_problem} provides a sharp data-free objective for verifier-based flow/diffusion model learning, well capturing the ideal goal of a uniform prior over the valid design space for subsequent use in downstream tasks. Nonetheless, as discussed in Sec. \ref{sec:introduction}, strong verifiers are arguably rare in most scientific discovery applications (\eg material design, drug discovery). Towards overcoming such limitation, in the following we sharpen our notion of verifier to one that is ubiquitous in real-world discovery tasks.
\subsection{A Realistic Framework: Local Flow Expansion via Weak Verifiers}
\looseness -1 We first relax the notion of strong verifier introduced in Def. \ref{definition:strong_verifier} to the following one of \emph{weak verifier}.
\begin{restatable}[Weak Verifier]{definition}{weakVerifer}
\label{definition:weak_verifier}
\looseness -1 We denote by \emph{weak verifier} a function $v: \X \to \{0,1\}$ s.t. $\Omega_v \supset \Omega$.
\end{restatable}
\looseness -1 As Fig. \ref{fig:verifiers_drawing} illustrates, Def. \ref{definition:weak_verifier} requires only the one-sided condition $v(x) = 0 \implies x \notin \Omega$; unlike strong verifiers, it does not guarantee $v(x) = 1 \implies x \in \Omega$ (\ie $v$ cannot certify membership in $\Omega$). Instead, it represents a superset $\Omega_v \supset \Omega$ and effectively acts as a \emph{filter}. Moreover, multiple weak verifiers $\{v_i\}$ can be combined, yielding $\Omega_v = \bigcap_i \Omega_{v_i}$, which is typically tighter to $\Omega$ for more diverse verifiers $v_i$, \eg checking atomic bonds, molecular graph topology, and conformer geometry.  

\looseness -1 Given this new realistic notion of verifier, the global flow expansion Problem \ref{eq:global_flow_expansion_problem} would evidently no longer compute the desired flow model. In fact, for a weak verifier $v$ it holds $\Omega_v \supset \Omega$,  therefore the optimal flow density $p^* = \mathcal{U}(\Omega_v)$ would generate invalid designs over $\Omega_v \setminus \Omega$, as shown in Fig. \ref{fig:verifiers_drawing}. Moreover, weak verifiers typically induce unbounded verifier sets, which would even render Problem \ref{eq:global_flow_expansion_problem} ill-posed. 
To address these issues, we introduce the \emph{local flow expansion} problem, which aims to locally expand the prior flow model $\pi^{pre}$ by integrating information from both $v$ and $\pi^{pre}$.
\begin{tcolorbox}[colframe=white!, top=2pt,left=2pt,right=2pt,bottom=2pt]
\begin{center}
\textbf{Local Flow Expansion via KL-regularized Verifier-Constrained Entropy Maximization}
\begin{align}
      \pi^* \in \argmax_{\pi : p_0^* = p_0^{pre}}\;  \mathcal{H}(p_1^\pi) - \alpha \D_{KL}(p_1^\pi \| p_1^{pre})  \quad \text{ subject to } \quad  s.t. \quad p^{\pi}_1 \in \mP(\Omega_{v})  \label{eq:local_flow_expansion_problem}
\end{align}
\end{center}
\end{tcolorbox}
\looseness -1 Here, the weak verifier $v$ acts as a filter preventing the entropy term from driving exploration into verifier-rejected regions. Since $v$ cannot detect all invalid areas, expansion must remain conservative and leverage the validity signal encoded in the prior model. This is achieved via the $\alpha$-weighted KL divergence between the density $p_1^\pi$ induced by the fine-tuned model, and $p_1^{pre}$. Crucially, this term enforces $\pi^*$ to preserve prior validity signal, thus preventing $\pi^*$ from allocating density in regions unlikely according to $\pi^{pre}$, even if valid according to the weak verifier. For sufficiently large $\alpha$, the density induced by the expanded flow model $\pi^*$ stays arbitrarily close to the prior in probability space - hence \emph{local} expansion. In practice, the choice of $\alpha$ should reflect the degree of risk-aversion versus novelty-seeking toward the discovery task at hand, as well as the quality of the weak verifier $v$ (\ie how tightly $\Omega_v$ approximates $\Omega$). Interestingly, in the limit of $\Omega_v \to \Omega$, $\alpha$ should clearly be set to $0$, which naturally retrieves the presented global flow expansion Problem \ref{eq:global_flow_expansion_problem} as a sub-case.

\section{\AlgNameLong: Scalable Global and Local Expansion via Verifier-Constrained Noised Space Entropy Maximization} 
\label{sec:algorithm}
\looseness -1 In the following, we propose \AlgNameDef, which provably solves the global and local flow expansion problems (see Eq. \ref{eq:global_flow_expansion_problem} and \ref{eq:local_flow_expansion_problem}). To this end, we first lift their formulations from the probability space associated to the last time-step marginal $p_1^\pi$ to the entire flow process \smash{$\mathbf{Q}^\pi = \{p^\pi_t\}_{t\in [0,1]}$}.
\begin{tcolorbox}[colframe=white!, top=2pt,left=2pt,right=2pt,bottom=2pt]
\begin{center}
\textbf{Flow Expansion via Verifier-Constrained Noised Space Entropy Maximization}
\begin{align}
      \pi^* \in \argmax_{\pi : p_0^\pi = p_0^{pre}}\;  \Lfunc \left(\mathbf{Q}^\pi\right) \coloneqq \int_0^1  \lambda_t \mathcal{G}_t(p_t^\pi) \; \der t \quad \text{ subject to } \quad  \EV_{x \sim p_1^\pi}[v(x)] = 1  \label{eq:noised_flow_expansion_problem}
\end{align}
\end{center}
\end{tcolorbox}
\looseness -1 Under this unifying formulation, $\G_t: \mP(\X) \to \R$ is a functional over densities $p^\pi_t$ induced by flow $\pi$. We note that under general regularity assumptions, an optimal policy $\pi^*$ for Problem \ref{eq:noised_flow_expansion_problem} is optimal also for the global and local flow expansion problems (see Eq. \ref{eq:global_flow_expansion_problem} and \ref{eq:local_flow_expansion_problem}) if the functional $\G$ is defined as:
\begin{equation}
\begin{aligned}
\underbrace{\mathcal{G}_t(p_t^\pi) = \mathcal{H}(p_t^\pi)}_{\text{Global Flow Expansion}}
\qquad
\underbrace{\mathcal{G}_t(p_t^\pi) = \mathcal{H}(p_t^\pi) - \alpha_t \,\mathrm{D}_{\mathrm{KL}}\!\big(p_t^\pi \,\Vert\, p_t^{\mathrm{pre}}\big)}_{\text{Local Flow Expansion ($\alpha_1 = \alpha$)}}
\end{aligned}
\end{equation}
\looseness -1 Before introducing \AlgNameShort, we first recall the standard notion of first variation of $\G$ over a space of probability measures~\citep[cf.][]{hsieh2019finding}. A functional $\G \in \mF(\X)$, where $\G: \mP(\X) \to \R$, has first variation at $\mu \in \mP(\X)$ if there exists a function $\delta \G(\mu) \in \mF(\X)$ such that for all $\mu' \in \mP(\X)$ it holds that:
\begin{equation*}
    \G(\mu + \epsilon \mu') = \G(\mu) + \epsilon \langle \mu', \delta \G(\mu) \rangle + o(\epsilon). 
\end{equation*}
\looseness -1 where the inner product is an expectation. Intuitively, $\delta \G(\mu)$ can be interpreted as an infinite-dimensional gradient over probability measures. Given this concept of first variation, \AlgNameShort solves Problem \ref{eq:noised_flow_expansion_problem} by computing a process $\mathbf{Q}^{k}$ at each iteration $k \in [K]$, via the following mirror descent step:
\begin{tcolorbox}[colframe=white!, top=2pt,left=2pt,right=2pt,bottom=2pt]
\begin{center}
\textbf{(MD Step) Constrained and Regularized Process Surprise Maximization}
\begin{equation}
      \mathbf{Q}^{k} \in \argmax_{\mathbf{Q} : p_0 = p^{k-1}_0} \langle \delta \Lfunc(\mathbf{Q}^{k-1}), \mathbf{Q}\rangle - \frac{1}{\step^{k}} D_{KL}\left(  \mathbf{Q}  \| \mathbf{Q}^{k-1} \right) \; \text{s.t.} \;  \EV_{x \sim p_1}[v(x)] = 1  \label{eq:MD_step}
\end{equation}
\end{center}
\end{tcolorbox}

\looseness -1 While the MD step in Eq. \ref{eq:MD_step} might seem abstract, the following Lemma \ref{lemma:first_var_process} hints at a more practical formulation of the above through the lens of stochastic optimal control~\citep{fleming2012deterministic}.

\begin{restatable}[First Variation of Flow Process Functionals]{lemma}{FirstVarProcess}
\label{lemma:first_var_process}
For objectives defined in the form of Eq. \ref{eq:noised_flow_expansion_problem}, we have: 
    \begin{equation}
    \langle \delta \Lfunc(\mathbf{Q}^k), \mathbf{Q} \rangle =   \int_0^1 \lambda_t \EV_{\quad \mathbf{Q}} \left[ \delta \G_t(p^k_t)  \right]  \der t. 
    \end{equation}
\end{restatable}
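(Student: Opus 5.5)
The plan is to compute the first variation of $\Lfunc$ directly from its definition. I perturb the process in the direction of a second process and differentiate under the time integral. Fix $\mathbf{Q}^k=\{p^k_t\}_{t\in[0,1]}$ and a second process $\mathbf{Q}=\{p_t\}_{t\in[0,1]}$. The key structural fact is that $\Lfunc(\mathbf{Q}^\pi)=\int_0^1\lambda_t\G_t(p_t^\pi)\,\der t$ depends on the process only through its one-time marginals, and the marginal map $\mathbf{Q}\mapsto p_t$ is linear: the time-$t$ marginal of $\mathbf{Q}^k+\epsilon\mathbf{Q}$ is $p^k_t+\epsilon p_t$. Hence
\begin{equation*}
\Lfunc(\mathbf{Q}^k+\epsilon\mathbf{Q})=\int_0^1\lambda_t\,\G_t(p^k_t+\epsilon p_t)\,\der t .
\end{equation*}

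First, for each fixed $t$, I apply the definition of first variation of $\G_t$ at $p^k_t$ in direction $p_t$. This gives $\G_t(p^k_t+\epsilon p_t)=\G_t(p^k_t)+\epsilon\langle p_t,\delta\G_t(p^k_t)\rangle+o_t(\epsilon)$, where $\langle p_t,\delta\G_t(p^k_t)\rangle=\EV_{x\sim p_t}[\delta\G_t(p^k_t)(x)]$. Integrating against $\lambda_t$ and subtracting $\Lfunc(\mathbf{Q}^k)$, I divide by $\epsilon$ and let $\epsilon\to0$. This yields
\begin{equation*}
\langle\delta\Lfunc(\mathbf{Q}^k),\mathbf{Q}\rangle=\int_0^1\lambda_t\,\EV_{x\sim p_t}\big[\delta\G_t(p^k_t)(x)\big]\,\der t .
\end{equation*}

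Finally, I rewrite each inner expectation as an expectation over the path. A function of $X_t$ alone has the same expectation under $\mathbf{Q}$ as under its marginal $p_t$, so $\EV_{x\sim p_t}[\delta\G_t(p^k_t)(x)]=\EV_{\mathbf{Q}}[\delta\G_t(p^k_t)(X_t)]$. This is exactly the claimed identity. It also identifies $\delta\Lfunc(\mathbf{Q}^k)$ as the path functional $\omega\mapsto\int_0^1\lambda_t\,\delta\G_t(p^k_t)(X_t(\omega))\,\der t$, after one use of Fubini to exchange $\EV_{\mathbf{Q}}$ and $\int\der t$.

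The main obstacle is justifying the exchange of the limit $\epsilon\to0$ with the time integral, that is, showing $\int_0^1\lambda_t\,o_t(\epsilon)\,\der t=o(\epsilon)$. I would assume the regularity the paper alludes to: $\lambda_t$ is integrable, and the remainders are uniformly controlled in $t$. One way to get this is to require $\epsilon\mapsto\G_t(p^k_t+\epsilon p_t)$ to be differentiable on $[0,\epsilon_0]$ with derivative bounded by an integrable function of $t$; dominated convergence then applies.

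For the concrete choices $\G_t=\entropy$ or $\G_t=\entropy-\alpha_t\D_{KL}(\cdot\|p^{pre}_t)$, I would verify this by hand. Here $\delta\G_t(p)=-\log p-1$, plus $-\alpha_t(\log p-\log p^{pre}_t+1)$ in the local case. Boundedness of $\Omega_v$ and positivity of the densities on the relevant support make these log terms integrable, so this should be the only genuinely technical step.
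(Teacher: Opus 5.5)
The paper never proves this lemma: it is asserted in Section~\ref{sec:algorithm} and not revisited in the appendix. Your argument is therefore the verification the paper leaves implicit, and it is correct under the regularity you state. The needed ingredients are exactly the ones you use:
linearity of the marginal map $\mathbf{Q}\mapsto p_t$, which lets the paper's non-normalized perturbation $\mu+\epsilon\mu'$ pass from processes to marginals; the pointwise-in-$t$ expansion of $\mathcal{G}_t$; the exchange of $\epsilon\downarrow 0$ with $\int_0^1\mathrm{d}t$; and $\mathbb{E}_{\mathbf{Q}}[f(X_t)]=\mathbb{E}_{p_t}[f]$. Because you compute the directional derivative itself, the identity holds for whichever representative of $\delta\Lfunc(\mathbf{Q}^k)$ one picks.

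The limit exchange is easier than you suggest. For both choices, $\epsilon\mapsto\mathcal{G}_t(p^k_t+\epsilon p_t)$ is concave, since $\mathcal{H}$ is concave, $D_{KL}(\cdot\|p^{pre}_t)$ is convex and $\alpha_t\ge 0$. So the difference quotients increase monotonically as $\epsilon\downarrow 0$. With $\lambda_t\ge 0$, monotone convergence gives the exchange once the quotient at a single $\epsilon_0$ is integrable in $t$, with no uniform control of remainders. Pointwise concavity of $-x\log x$ likewise handles differentiation under $\int\mathrm{d}x$. In your dominated-convergence version, note that $\lambda_t g(t)$, not just $g$, must be integrable.

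Your last paragraph is not right as stated. Boundedness of $\Omega_v$ constrains only $p_1$, while for $t<1$ the marginals typically have full support on $\mathbb{R}^d$ (Gaussian source), and positivity of densities says nothing about tails. What is actually needed is:
\begin{itemize}
\item $p_t\ll p^k_t$; otherwise the one-sided derivative of $\mathcal{H}$ in direction $p_t$ is $+\infty$ and no finite first variation exists;
\item $\mathbb{E}_{p_t}|\log p^k_t|<\infty$, and analogously for $\log p^{pre}_t$;
\item integrability of $\lambda_t\,\mathbb{E}_{p_t}|\delta\mathcal{G}_t(p^k_t)|$ near $t=1$, where $\log p^k_t$ may degenerate and $\lambda_t$ must compensate.
\end{itemize}
A clean sufficient condition is to restrict to directions with $D_{KL}(\mathbf{Q}\|\mathbf{Q}^k)<\infty$. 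These are essentially the only competitors in Eq.~\ref{eq:MD_step}, whose KL term is taken relative to the same base point. Data processing gives $D_{KL}(p_t\|p^k_t)\le D_{KL}(\mathbf{Q}\|\mathbf{Q}^k)$, hence absolute continuity. Combined with $\int p_t|\log p_t|<\infty$ and $\mathbb{E}_{p_t}[(\log(p_t/p^k_t))^-]\le 1$, this yields integrability of $\log p^k_t$ under $p_t$.
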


\looseness -1 Lemma \ref{lemma:first_var_process} factorizes $\langle \delta \mathcal{L}(\mathbf{Q}^{k - 1}), \mathbf{Q}\rangle$ into an integral over the flow process of terms $f_t(x) := \lambda_t \delta \mathcal{G}_t(p^k_t)(x)$. Crucially, this time-decomposition allows to rewrite the MD step (Eq. \ref{eq:MD_step}) as the following standard constrained control-affine optimal control problem\footnote{We leave standard dynamical system constraints~\citep[\eg Equation 13][]{domingo2024adjoint} as implicit.}~\citep{domingo2024adjoint}:
\begin{tcolorbox}[colframe=white!, top=2pt,left=2pt,right=2pt,bottom=2pt]
\begin{center}
\textbf{Constrained and Regularized Process Surprise Maximization via Fine-Tuning}
\begin{align*}
    \min_{\pi} \; \EV \left[ \int_0^1 \frac{1}{2}\|\pi(X_t,t)\|^2 - f_t(X_t,t) \; \der t \right] s.t. \EV_{x \sim p_1}[v(x)] = 1, \text{ with } f_t(X_t,t) = \gamma_t \delta \G_t(p_t^k)(x)
\end{align*}
\end{center}
\end{tcolorbox}

\begin{algorithm}[t]
\caption{\ExpandThenProject}
\begin{algorithmic}[1]
    \State{\textbf{Input:} $\pi^{k - 1}$: current flow model,$\nabla_{x_t}\delta \mathcal{G}$: gradients of functional grad., $\gamma_k$: inverse update step-size, $\{\lambda_t\}_{t \in [0, 1]}$: integral weighting coefficients , $v$: verifier, $\eta_k$: fine-tuning strength}
    \State{\textbf{Expansion} step:
    \begin{equation}
        \Tilde{\pi}^{k} \leftarrow \FineTuningSolver(\pi^{k-1}, \nabla_{x_t}\delta \mathcal{G}_t, \lambda_t, \gamma_k)
    \end{equation}
    }
    \State{\textbf{Projection} step:
    \begin{equation}
       \pi^{k} \leftarrow \FineTuningSolver(\Tilde{\pi}^{k}, \log v, \eta_k)
    \end{equation}
    } 
    \State{\textbf{Output:} Fine-tuned policy $\pi^k$}
\label{alg:expand_then_project}
\end{algorithmic}
\end{algorithm}

\begin{algorithm}[t] %H or t
    \small
    \caption{\AlgNameDef} 
    \label{alg:algorithm}
        \begin{algorithmic}[1]
        \State{\textbf{Input: } \looseness -1 $\pi^{pre}: $ pre-trained flow model, $\{\alpha_t\}_{t \in [0,1]}:$ KL-regularization coefficients, $\{\gamma_k \}_{k=1}^{K}:$ inverse update step-sizes, $\{\lambda_t \}_{t \in [0,1]}:$  integral weighting coefficients, $v$: verifier, $\{\eta_k\}_{k = 1}^K$}: projection strength schedule
        \State{\textbf{Init:} $\pi_0 \coloneqq \pi^{pre} $}
        \For{$k=1, 2, \hdots, K$}
        \State{Set: 
        \begin{equation}
        \begin{aligned}
        \nabla_{x_t}\delta\mathcal{G}_t(p_t^{k-1}) &=
        \left\{
        \begin{aligned} \label{eq:running-cost}
        &- s^{\pi^{k-1}}_t
        &&  \quad \text{\AlgNameDefGlobal} \\
        &- s^\pi_t - \alpha_t\left(s^\pi_t - s^\textrm{pre}_t\right)
        && \quad \text{\AlgNameDefLocal}
        \end{aligned}
        \right.
        \end{aligned}
        \end{equation}
        }
        \State{Fine-tune $\pi_{k-1}$ into $\pi_k$ via Algorithm \ref{alg:expand_then_project}:
            \begin{equation*}
                \pi_k \leftarrow \ExpandThenProject(\pi^{k - 1}, \nabla_{x_t}\delta \mathcal{G}_t, \gamma_k, \{\lambda_t\}_{t \in [0,1]}, v, \eta_k) 
            \end{equation*}
        }
        \EndFor
        \State{\textbf{Output: } policy $\pi \coloneqq \pi_{K}$} 
        \end{algorithmic}
\end{algorithm}

\looseness -1 
\looseness -1 Concretely, we compute a flow $\pi^k$ inducing $\mathbf{Q}^k$ (Eq. \ref{eq:MD_step}) via \ExpandThenProject (see Alg. \ref{alg:expand_then_project}),  which decouples constrained optimization into sequential expansion and projection steps:
\paragraph{Expansion step.}
The unconstrained expansion step is performed over the noised state space, which can be tackled by extending established control (or RL) based methods for fine-tuning with the running cost $f_t(X_t,t) = \gamma_t \delta \G_t(p_t^k)(x)$, effectively computing a process $\Tilde{\mathbf{Q}}^k$ such that:
\begin{equation}\label{eq:main_expansion}
    \Tilde{\mathbf{Q}}^k \in \argmax_{\mathbf{Q} : p_0 = p^\textrm{pre}_0 } \langle \delta \mathcal{L}(\mathbf{Q}^{k - 1}), \mathbf{Q}\rangle - \frac{1}{\gamma_k} D_{\textrm{KL}}(\mathbf{Q} || \mathbf{Q}^{k - 1})
\end{equation}
\paragraph{Projection step.}
Given $\Tilde{\mathbf{Q}}^k$, the projection step adapts the flow $\Tilde{\pi}^k$ to enforce the constraint in Eq. \ref{eq:MD_step} via reward-guided fine-tuning~\citep[\eg][Sec. 8.2]{uehara2024understanding}:
\begin{equation}\label{eq:main_proj}
     \mathbf{Q}^k \in \argmax_{\mathbf{Q} : p_0 = p^\textrm{pre}_0 } \EV_{x \sim p_1} \left[\log v(x)\right] - D_{\textrm{KL}}(\mathbf{Q} || \Tilde{\mathbf{Q}}^k) 
\end{equation}

This \ExpandThenProject scheme provably computes the optimal flow for the MD step in Eq. \ref{eq:MD_step}.
\begin{proposition} \label{prop:expandthenproject}
    The \ExpandThenProject scheme in Alg. \ref{alg:expand_then_project} solves optimization problem \ref{eq:MD_step}, \ie it returns a flow model $\pi^k$ inducing a process $\mathbf{Q}^k$ that is a solution to \ref{eq:MD_step}. 
    Formally, the following holds:
    \begin{equation}
        \mathbf{Q}^k \in \argmax_{\mathbf{Q} : p_0 = p^{k-1}_0} \langle \delta \Lfunc(\mathbf{Q}^{k-1}), \mathbf{Q}\rangle - \frac{1}{\step^{k}} D_{KL}\left(\mathbf{Q}  \| \mathbf{Q}^{k-1} \right) \; \text{s.t.} \;  \EV_{x \sim p_1}[v(x)] = 1
    \end{equation}
\end{proposition}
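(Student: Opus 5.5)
The proof works by writing both steps of \ExpandThenProject in closed form as Gibbs-type reweightings of path measures, and then checking that their composition equals the closed-form solution of the MD step in Eq.~\ref{eq:MD_step}. Throughout, $\mathbf{Q}$ ranges over path measures with fixed initial marginal $p_0 = p_0^{k-1} = p_0^{pre}$. We use the standard fact (Donsker--Varadhan / Gibbs variational principle) that for a measurable $F$ and reference $\mathbf{P}$,
\begin{equation*}
\argmax_{\mathbf{Q}} \; \EV_{\mathbf{Q}}[F] - \tfrac{1}{c} D_{KL}(\mathbf{Q}\|\mathbf{P}) \quad \text{is attained at} \quad \der\mathbf{Q}^\star \propto e^{cF}\der\mathbf{P}.
\end{equation*}
If the initial marginal is pinned, the normalization is done conditionally on $X_0$, i.e.\ $\der\mathbf{Q}^\star(\cdot\mid x_0) \propto e^{cF}\der\mathbf{P}(\cdot\mid x_0)$. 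This follows from the chain rule of KL applied to the disintegration over $X_0$.

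\textbf{Step 1: closed form of the MD step.} By Lemma~\ref{lemma:first_var_process}, the linear term is $\EV_{\mathbf{Q}}[F^k]$ with
\begin{equation*}
F^k(X) \coloneqq \int_0^1 \lambda_t\, \delta\G_t(p_t^{k-1})(X_t)\, \der t .
\end{equation*}
The constraint $\EV_{p_1}[v]=1$ with $v\in\{0,1\}$ is equivalent to $\mathbf{Q}(X_1\in\Omega_v)=1$, i.e.\ $\mathbf{Q}\ll \mathbf{Q}^{k-1}|_{\{X_1\in\Omega_v\}}$. Restricting the Gibbs principle to this set, the MD step is solved by
\begin{equation*}
\der\mathbf{Q}^\star(\cdot\mid x_0) \propto v(X_1)\, e^{\step^k F^k}\, \der\mathbf{Q}^{k-1}(\cdot\mid x_0).
\end{equation*}
Equivalently, this is the unconstrained problem with the term $\EV_{\mathbf{Q}}[\log v(X_1)]$ added, using the convention $\log 0=-\infty$.

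\textbf{Step 2: composition of the two steps.} The expansion step in Eq.~\ref{eq:main_expansion} is the same problem without the constraint. It gives $\der\tilde{\mathbf{Q}}^k \propto e^{\step^k F^k}\der\mathbf{Q}^{k-1}$, normalized conditionally on $x_0$. The projection step in Eq.~\ref{eq:main_proj} is again a Gibbs problem, with $F=\log v(X_1)$ and $c=1$. It gives $\der\mathbf{Q}^k \propto e^{\log v(X_1)}\der\tilde{\mathbf{Q}}^k = v(X_1)\,\der\tilde{\mathbf{Q}}^k$. Since $v\in\{0,1\}$, the projection step is exactly conditioning on $\{X_1\in\Omega_v\}$, so the choice of $\eta_k$ is immaterial in the idealized solver. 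Multiplying the two densities, the intermediate normalization constant cancels inside the final normalization, and $\mathbf{Q}^k$ coincides with $\mathbf{Q}^\star$ from Step 1. The output therefore satisfies the constraint and attains the maximum. The final ingredient is that \FineTuningSolver returns a flow $\pi^k$ whose induced process is this tilted path measure; this is the standard correspondence between control-affine optimal control / reward fine-tuning and exponential tilting (e.g.\ \citet{domingo2024adjoint,uehara2024understanding}), which we assume is realized exactly.

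\textbf{Main obstacle.} The delicate part is the well-posedness of the constrained Gibbs step, since $\log v$ takes the value $-\infty$. Two conditions are needed.
\begin{itemize}[noitemsep,topsep=0pt,leftmargin=*]
\item The normalizer $Z(x_0)=\EV_{\tilde{\mathbf{Q}}^k}[v(X_1)\mid x_0]$ must be positive for $p_0$-a.e.\ $x_0$. Otherwise the feasible set is empty. I would assume this, or derive it inductively: $\mathbf{Q}^{k-1}$ is already supported on $\Omega_v$ at time $1$, and $\tilde{\mathbf{Q}}^k\sim\mathbf{Q}^{k-1}$ because $e^{\step^k F^k}>0$.
\item $e^{\step^k F^k}$ must be integrable under $\mathbf{Q}^{k-1}$.
\end{itemize}
For optimality on the constrained set, I would argue directly rather than through Gibbs with infinite potentials. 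For any feasible $\mathbf{Q}$ with $D_{KL}(\mathbf{Q}\|\mathbf{Q}^{k-1})<\infty$, one has the identity
\begin{equation*}
\EV_{\mathbf{Q}}[F^k]-\tfrac{1}{\step^k}D_{KL}(\mathbf{Q}\|\mathbf{Q}^{k-1}) = \tfrac{1}{\step^k}\EV_{p_0}\!\left[\log Z^\star(x_0)\right] - \tfrac{1}{\step^k}D_{KL}(\mathbf{Q}\|\mathbf{Q}^\star),
\end{equation*}
which holds because $\mathbf{Q}$ puts zero mass where $v=0$. Nonnegativity of KL then gives maximality, with equality iff $\mathbf{Q}=\mathbf{Q}^\star$.

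Since the Lagrangian treatment of the boundary case $\log 0$ is where a careless argument could break, I would present the proof through this identity.
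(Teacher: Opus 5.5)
Your proof is correct. It rests on the same identity as the paper's: a linear term minus $\frac{1}{\gamma_k}D_{KL}(\cdot\|\mathbf{P})$ equals minus $\frac{1}{\gamma_k}$ times a KL to the Gibbs tilt of $\mathbf{P}$, up to a term fixed by $p_0$. You deploy it differently, though.

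The paper never computes the constrained optimizer. It writes $\log\frac{d\tilde{\mathbf{Q}}^k}{d\mathbf{Q}^{k-1}}=\gamma_k\,\delta\mathcal{L}(\mathbf{Q}^{k-1})+\mathrm{const}$ and concludes that the MD objective is $-\frac{1}{\gamma_k}D_{KL}(\mathbf{Q}\|\tilde{\mathbf{Q}}^k)$ up to a constant. Hence the argmax over \emph{any} admissible set is the I-projection of $\tilde{\mathbf{Q}}^k$ onto that set. It then notes that the $\log v$-reward step is exactly this I-projection, since $\log v\in\{0,-\infty\}$.

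You instead write all three solutions as explicit Gibbs densities, multiply the Radon--Nikodym derivatives so the normalizers cancel, and certify optimality with the identity relative to the constrained tilt $\mathbf{Q}^\star$. The paper's route is shorter, and its first half does not depend on the form of the constraint. Yours yields the optimizer in closed form and is more careful exactly where the paper is loose:
\begin{itemize}
\item The normalizer depends on $x_0$; the paper's ``const'' is really $-\log Z(X_0)$, harmless only because $p_0$ is pinned.
\item You state the feasibility and integrability conditions explicitly.
\item You show that any $\eta_k>0$ is immaterial.
\item Your identity has the correct sign and $1/\gamma_k$ scaling, which the paper's displayed version does not.
\end{itemize}

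Two refinements are worth making. First, your inductive argument for $Z(x_0)>0$ has no base case. For $k=1$ you must genuinely assume $\mathbf{Q}^{pre}(X_1\in\Omega_v\mid x_0)>0$ for $p_0$-a.e.\ $x_0$. The same observation also shows that the exact projection is the identity for every $k\ge 2$, because $\tilde{\mathbf{Q}}^k\ll\mathbf{Q}^{k-1}$ is already feasible. Second, like the paper, you implicitly need $\mathbf{Q}$ to be the law of the stochastic (memoryless-noise) dynamics used by the solver. For the deterministic ODE flows of Sec.~\ref{sec:background}, the conditionals given $x_0$ are Dirac masses, so the conditional tilt would do nothing.
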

Notice that this step could alternatively be performed via constrained reward-guided fine-tuning~\citep[\eg][]{gutjahr2025constrained}.
Finally, we present \AlgNameDef $ $ in Alg. \ref{alg:algorithm}, which effectively approximates the mirror descent scheme presented above by iteratively applying \ExpandThenProject. 

\paragraph{Complete algorithm execution.} 
\looseness -1 In practice, \AlgNameShort takes as inputs: a pre-trained flow model \smash{$\pi^{pre}$}, KL-regularization coefficients $\{\alpha_t\}_{t\in [0,1]}$, the number of iterations $K$, inverse step-sizes \smash{$\{\gamma_k \}_{k=1}^{K}$}, integral weighting coefficients \smash{$\{\lambda_t\}_{t\in [0,1]}$}, a verifier $v$, and a projection strength schedule \smash{$\{\eta_k\}_{k=1}^K$}. At each iteration $k \in [K]$, \AlgNameShort computes the gradient (\wrt $x$) of the first variation at the previous policy $\pi_{k-1}$, \ie \smash{$\nabla_x \delta \G_t ( p^{\pi^{k-1}}_1)$} (line $4$). Then, \AlgNameShort computes the flow model $\pi_k$ via the \ExpandThenProject scheme (see Alg. \ref{alg:expand_then_project}), which takes in input the current flow model $\pi^{k-1}$, the computed gradients, and the verifier $v$ - and returns the updated flow model $\pi^k$. Ultimately, \AlgNameShort returns the final policy $\pi \coloneqq \pi_K$.

\looseness -1 \paragraph{Closed-form gradient expressions.} \AlgNameShort operates using trajectory reward gradients $\nabla_{x_t} \delta \mathcal{G}_t(p^\pi_t)$. In fact, while such rewards are difficult to estimate, their gradients admit close-form expressions~\citep{de2025provable} that can be approximated via available quantities:
\begin{equation}
\begin{aligned}
\underbrace{\nabla_x \delta \mathcal{H}(p_t^\pi) = -s_t^{\pi}}_{\text{Global Flow Expansion}}
\qquad
\underbrace{\nabla_{x_t}\delta\mathcal{H}(p_t^\pi)
        - \alpha_t \, \nabla_x \delta \mathrm{D}_{\mathrm{KL}}\!\big(p_t^\pi \,\Vert\, p_t^{\mathrm{pre}}\big) = - s^{\pi}_t - \alpha_t\left(s^{\pi}_t - s^\textrm{pre}_t\right)}_{\text{Local Flow Expansion ($\alpha_1 = \alpha$)}} \label{eq:grad_expressions}
\end{aligned}
\end{equation}

\looseness -1 and can simply be plugged into any first-order fine-tuning solver yielding a scalable method. The gradients in Eq. \ref{eq:grad_expressions} are expressed in terms of the \textit{score function} $s^\pi_t(x) = \nabla \log p^\pi_t(x)$, which can be approximated via the score network in the case of diffusion models~\citep{de2025provable}, and expressed via a linear transformation of the learned velocity field for flows~\citep{domingo2024adjoint}:
\begin{equation}\label{eq:score-transform}
    s^\pi_t(x) = \frac{1}{\kappa_t(\tfrac{\dot\omega_t}{\omega_t}\kappa_t - \dot\kappa_t)}\left(\pi(x, t)  - \frac{\dot\omega_t}{\omega_t}x\right) \,  
\end{equation}

Prior work for flow-based exploration relies only on the terminal score $s_1^\pi$~\citep{de2025provable, santi2025flow}. Nonetheless, by Eq. \ref{eq:score-transform} the score diverges as $t \to 1$, creating instabilities. While this can be partially managed by approximating $s_1^\pi \approx s^\pi_{1 - \epsilon}$ for $\epsilon > 0$, determining the correct $\epsilon$ can be challenging in practice. Our algorithm, by leveraging score information along the entire flow process, offers a natural and principled solution to this issue by choosing small $\lambda_t$ (\eg $\lambda_t = 0$, see Eq. \ref{eq:noised_flow_expansion_problem}) for $t \to 1$.

\looseness -1 {Beyond verifier-constrained settings, the algorithmic idea of noised space exploration introduced above also yields an improved \emph{unconstrained} exploration scheme. We denote by \AlgNameDefExplore{} the unconstrained algorithm obtained from \AlgNameShort{} by simply removing the projection step (line~17 of Alg.~\ref{alg:expand_then_project}). For completeness, we report the pseudocode of \AlgNameShortExplore{} in Alg.~\ref{alg:algorithm_nse}. As shown empirically in Sec.~\ref{sec:experiments}, \AlgNameShortExplore{} stabilizes diffusion and flow-based exploration  in higher-dimensional settings, leading to significantly better performance than existing methods \eg \citep{santi2025flow}.

\begin{algorithm}[t] %H or t
    \small
    \caption{ \AlgNameDefExplore } 
    \label{alg:algorithm_nse}
        \begin{algorithmic}[1]
        \State{\textbf{Input: } \looseness -1 $\pi^{pre}: $ pre-trained flow model, $\{\alpha_t\}_{t \in [0,1]}:$ KL-regularization coefficients, $\{\gamma_k \}_{k=1}^{K}:$ inverse update step-sizes, $\{\lambda_t \}_{t \in [0,1]}:$  integral weighting coefficients}
        \State{\textbf{Init:} $\pi_0 \coloneqq \pi^{pre} $}
        \For{$k=1, 2, \hdots, K$}
        % \State{Compute $p^{k-1}_1$ via Ito density estimation}
        \State{Set: 
        \begin{equation}
        \nabla_{x_t}\delta\mathcal{G}_t(p_t^{k-1}) = - s^\pi_t - \alpha_t\left(s^\pi_t - s^\textrm{pre}_t\right)
        \end{equation}
        }
        \State{\textbf{Expansion} step, fine-tune $\pi_{k-1}$ into $\pi_k$ via:
        \begin{equation}
            \pi^{k} \leftarrow \FineTuningSolver(\pi^{k-1}, \nabla_{x_t}\delta \mathcal{G}_t, \lambda_t, \gamma_k)
        \end{equation}
        }
        \EndFor
        \State{\textbf{Output: } policy $\pi \coloneqq \pi_{K}$} 
        \end{algorithmic}
\end{algorithm}

\section{Guarantees for \AlgNameLong}
\label{sec:theory}
\newcommand{\Qmd}{\mathbf{Q}_\sharp}

\looseness -1 We aim to show that \AlgNameShort admits \emph{provable guarantees} ensuring reliable behavior in practice. To this end, we leverage the flexible framework of \emph{constrained mirror descent}, a classical optimization method that has recently found successful applications in sampling and generative modeling \citep{karimi2024sinkhorn, de2025provable, santi2025flow}. We analyze two regimes. First, an \textbf{idealized setting}, where each step of Eq. \ref{eq:MD_step} can be computed \emph{exactly} - leading to sharp step-size prescriptions and fast, polynomial convergence rates. Then, a \textbf{realistic setting}, where each MD step can only be solved \emph{approximately} - for which we show asymptotic convergence to the optimal solution under mild noise and bias assumptions.

\paragraph{Idealized setting.}  
\looseness -1 We state that the exact updates case admits finite-time convergence guarantee:  
\begin{tcolorbox}[colframe=white!, top=2pt,left=2pt,right=2pt,bottom=2pt]
\begin{restatable}[Convergence guarantee in the idealized process-level setting]{theorem}{trajConvexCase}
\label{theorem:convex_case_convergence}
Consider the objective $\Lfunc$ defined in \cref{eq:noised_flow_expansion_problem}, and let 
$\lambda^\star \coloneqq \int_0^1 \lambda_t\mathrm{d}t$.  
Let $\{\mathbf{Q}^{k}\}$ be the iterates generated by \cref{eq:MD_step} with 
$\gamma_k = 1/\lambda^\star$ for all $k \in [K]$. Then
\begin{equation}
    \Lfunc(\mathbf{Q}^*) - \Lfunc(\mathbf{Q}^K) 
    \leq \frac{\lambda^\star}{K} \, D_{KL}\!\left(\mathbf{Q}^* \,\|\, \mathbf{Q}^{pre}\right),
\end{equation}
where $\mathbf{Q}^* \in \argmax_{\mathbf{Q}} \Lfunc(\mathbf{Q})$.
\end{restatable}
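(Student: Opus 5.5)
The plan is to use the standard relative-smoothness / relative-strong-concavity analysis of mirror descent with the KL divergence as Bregman divergence (Lu, Freund and Nesterov), lifted to path space. The objective is $\Lfunc(\mathbf{Q}) = \int_0^1 \lambda_t \G_t(p_t)\,\der t$, and the constraint set $\{\mathbf{Q} : p_0 = p_0^{pre},\ \EV_{p_1}[v]=1\}$ is convex in $\mathbf{Q}$.

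\textbf{Step 1: relative smoothness of $-\Lfunc$ with constant $\lambda^\star$ w.r.t.\ $D_{KL}$ on processes.} For $\G_t = \entropy$ (and also for $\entropy - \alpha_t D_{KL}(\cdot\|p_t^{pre})$, since the KL term only adds a linear part plus a negative entropy), the Bregman identity
\[
\G_t(p) - \G_t(q) - \langle \delta\G_t(q), p - q\rangle = -(1+\alpha_t)\, D_{KL}(p\|q)
\]
gives concavity and exact curvature. I will then use data processing for marginals, $D_{KL}(p_t\|q_t) \le D_{KL}(\mathbf{Q}\|\mathbf{Q}')$. Integrating against $\lambda_t$ and using Lemma~\ref{lemma:first_var_process} for the linearization yields
\[
\Lfunc(\mathbf{Q}) \ge \Lfunc(\mathbf{Q}') + \langle \delta\Lfunc(\mathbf{Q}'), \mathbf{Q}-\mathbf{Q}'\rangle - \lambda^\star D_{KL}(\mathbf{Q}\|\mathbf{Q}')
\]
in the global case. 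In the local case the constant is $\int \lambda_t(1+\alpha_t)\,\der t$, which I would either absorb into $\lambda^\star$ by convention or state as the constant. Concavity, i.e.\ relative strong concavity with constant $0$, follows from the same identity.

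\textbf{Step 2: three-point inequality for the constrained MD step.} Since Eq.~\ref{eq:MD_step} maximizes a linear functional minus $\frac1{\gamma}D_{KL}(\cdot\|\mathbf{Q}^{k-1})$ over a convex set, the optimality conditions give, for every feasible $\mathbf{Q}$,
\[
\langle \delta\Lfunc(\mathbf{Q}^{k-1}), \mathbf{Q} - \mathbf{Q}^k\rangle \le \tfrac{1}{\gamma}\big[D_{KL}(\mathbf{Q}\|\mathbf{Q}^{k-1}) - D_{KL}(\mathbf{Q}\|\mathbf{Q}^k) - D_{KL}(\mathbf{Q}^k\|\mathbf{Q}^{k-1})\big].
\]
Proposition~\ref{prop:expandthenproject} guarantees that the iterates are exactly these solutions.

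\textbf{Step 3: combining.} With $\gamma = 1/\lambda^\star$, apply smoothness at $(\mathbf{Q}^k, \mathbf{Q}^{k-1})$ and concavity at $(\mathbf{Q}^*, \mathbf{Q}^{k-1})$. The $D_{KL}(\mathbf{Q}^k\|\mathbf{Q}^{k-1})$ terms cancel, giving
\[
\Lfunc(\mathbf{Q}^*) - \Lfunc(\mathbf{Q}^k) \le \lambda^\star\big[D_{KL}(\mathbf{Q}^*\|\mathbf{Q}^{k-1}) - D_{KL}(\mathbf{Q}^*\|\mathbf{Q}^k)\big].
\]
Taking $\mathbf{Q}=\mathbf{Q}^{k-1}$ in Step 2 shows that $\Lfunc(\mathbf{Q}^k)$ is monotone nondecreasing. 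Telescoping over $k=1,\dots,K$ then yields $K(\Lfunc(\mathbf{Q}^*)-\Lfunc(\mathbf{Q}^K)) \le \lambda^\star D_{KL}(\mathbf{Q}^*\|\mathbf{Q}^0)$, with $\mathbf{Q}^0=\mathbf{Q}^{pre}$.

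\textbf{Main obstacle.} The hardest part is Step 1 at the process level. I must make rigorous the claim that the path-space KL dominates every time-marginal KL uniformly, which is the data-processing inequality, and that $\delta\Lfunc$ on processes equals the $\lambda_t$-weighted marginal first variations, which is Lemma~\ref{lemma:first_var_process}. I must also handle the requirement that $\mathbf{Q}^*$ is feasible with $D_{KL}(\mathbf{Q}^*\|\mathbf{Q}^{pre})<\infty$. A finiteness caveat arises here because flow processes are deterministic given $x_0$, so path-space KL reduces to an initial-law KL. This would break the argument unless processes are read as their marginal path laws or a small diffusion is added. I would handle this by interpreting $\mathbf{Q}$ as a measure on trajectories with the KL defined as in the stochastic optimal control formulation.
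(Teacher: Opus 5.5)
Your proposal is correct and follows essentially the same route as the paper's proof: $\lambda^\star$-smoothness of $-\Lfunc$ relative to the KL divergence (the Bregman divergence of $D_{KL}(\cdot\,\|\,\mathbf{Q}^0)$), the three-point inequality for the constrained MD step, and telescoping using monotonicity of $\Lfunc(\mathbf{Q}^k)$, as in the Lu--Freund--Nesterov relative-smoothness analysis. The paper only asserts the relative smoothness, whereas your Step~1 derives it from the exact Bregman identity for the entropy together with data processing for the time marginals; your remark that for the local functional the constant should be $\int_0^1\lambda_t(1+\alpha_t)\,\der t$ rather than $\lambda^\star$ is a legitimate caveat that the paper does not address.
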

\end{tcolorbox}
\paragraph{General setting.}  
Recall that $\mathbf{Q}^k$ is the $k$-th iterate of \AlgNameShort.  
In realistic scenarios, however, Eq. \ref{eq:MD_step} can only be solved approximately, so we interpret the update as \emph{approximating} the idealized iteration:  
\begin{equation}
    \mathbf{Q}^k_\sharp \in \argmax_{\mathbf{Q} : p_0 = p^{k-1}_0} 
     \, \langle \delta \Lfunc(\mathbf{Q}^{k-1}), \mathbf{Q}\rangle 
    - \tfrac{1}{\gamma^{k}} D_{KL}\!\left(\mathbf{Q} \,\|\, \mathbf{Q}^{k-1} \right) 
    \quad \text{s.t.} \quad \EV_{x \sim p_1}[v(x)] = 1.
\end{equation}
To measure deviations from these idealized iterates, let $\hist_k$ be the filtration up to step $k$, and decompose the oracle into its \emph{bias} and \emph{noise} components:
\begin{align}
    \bias_k &\coloneqq \EV \!\left[ \delta \Lfunc(\mathbf{Q}^k) - \delta \Lfunc(\Qmd^k) \,\big|\, \hist_k \right], \\
    \noise_k &\coloneqq \delta \Lfunc(\mathbf{Q}^k) - \delta \Lfunc(\Qmd^k) - \bias_k. 
\end{align}
Here, $\bias_k$ captures systematic error while $\noise_k$ is conditionally mean-zero.  
Under mild assumptions on noise and bias (see \crefrange{asm:precompact}{asm:approximate}), we obtain the following guarantee.  
\begin{tcolorbox}[colframe=white!, top=2pt,left=2pt,right=2pt,bottom=2pt]
\begin{restatable}[Convergence guarantee in the general process-level setting (informal)]{theorem}{trajGeneralCase}
\label{theorem:general_case_convergence}
Suppose the oracle satisfies finite-variance noise and vanishing bias, and let the step-sizes $\{\gamma_k\}$ follow the Robbins--Monro rule 
($\sum_k \gamma_k = \infty$, $\sum_k \gamma_k^2 < \infty$).  
Then the iterates $\{\mathbf{Q}^k\}$ generated by \AlgNameShort satisfy
\begin{equation}
    \mathbf{Q}^k \rightharpoonup \mathbf{Q}^* \quad \text{a.s.},
\end{equation}
where $\mathbf{Q}^* \in \argmax_{\mathbf{Q}} \Lfunc(\mathbf{Q})$.
\end{restatable}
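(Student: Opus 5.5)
We will cast the iteration of \AlgNameShort as a stochastic approximation of a continuous-time mirror flow on the space of processes, and conclude via the asymptotic pseudotrajectory (APT) machinery of Benaïm--Hirsch. The mirror map is the negative entropy of $\mathbf{Q}$ relative to $\mathbf{Q}^{pre}$, restricted to the convex feasible set $\mathcal{C} \coloneqq \{\mathbf{Q} : p_0 = p_0^{pre},\ \EV_{x\sim p_1}[v(x)] = 1\}$. We work in the dual variable $h^k \coloneqq \log \frac{\der \mathbf{Q}^k}{\der \mathbf{Q}^{pre}}$, taken modulo constants and the normal cone of $\mathcal{C}$.

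By Lemma \ref{lemma:first_var_process} and the KL-regularized form of Eq. \ref{eq:MD_step}, the exact step $\Qmd^k$ has density proportional to $\mathbf{Q}^{k-1}\exp(\step^k \delta\Lfunc(\mathbf{Q}^{k-1}))$ restricted to $\mathcal{C}$. Proposition \ref{prop:expandthenproject} identifies this step with the projection. The realized iterate therefore obeys the dual recursion
\[
h^{k} = h^{k-1} + \step^k\bigl(\delta\Lfunc(\mathbf{Q}^{k-1}) + \bias_{k-1} + \noise_{k-1}\bigr) + \text{(normal-cone term)},
\]
where the bias and noise are as defined above.

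\textbf{Step 1: the limiting flow.} The mean field is the mirror flow $\dot h_t = \delta\Lfunc(\mathbf{Q}_t)$, with $\mathbf{Q}_t = \nabla \phi^*(h_t)$ projected onto $\mathcal{C}$. Along this flow, $\frac{\der}{\der t}\Lfunc(\mathbf{Q}_t) \ge 0$. Moreover, concavity of $\Lfunc$ (entropy, minus $\alpha_t$ times KL, integrated against $\lambda_t \ge 0$) gives a Lyapunov function
\[
E(t) \coloneqq D_{KL}(\mathbf{Q}^* \| \mathbf{Q}_t), \qquad \dot E \le -\bigl(\Lfunc(\mathbf{Q}^*) - \Lfunc(\mathbf{Q}_t)\bigr) \le 0.
\]
This is the continuous analogue of the telescoping argument behind Theorem \ref{theorem:convex_case_convergence}. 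It follows that every internally chain-transitive (ICT) set of the flow lies in $\argmax_{\mathcal{C}} \Lfunc$.

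\textbf{Step 2: APT property.} We interpolate $h^k$ piecewise-linearly on the time grid $\tau_k = \sum_{j\le k}\step^j$ and show that it is an APT of the flow.
\begin{itemize}
\item Noise: $\sum_k \step^k \noise_k$ converges almost surely by the martingale convergence theorem. This uses the finite conditional variance of $\noise_k$ and $\sum_k(\step^k)^2 < \infty$.
\item Bias: vanishing bias, $\|\bias_k\|\to 0$, makes its contribution over windows of length $T$ negligible.
\item Precompactness (\cref{asm:precompact}): this ensures tightness, so the interpolated trajectory stays in a compact set in the weak topology. Here Lipschitz continuity of $\delta\Lfunc$ on that set is also needed.
\end{itemize}
With these ingredients, Benaïm's theorem gives that the limit set of $\{h^k\}$ is ICT for the mirror flow.

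\textbf{Step 3: convergence.} Combining Steps 1 and 2, every weak limit point of $\mathbf{Q}^k$ maximizes $\Lfunc$. To upgrade this to convergence of the whole sequence, we use the Lyapunov function $E$ together with a quasi-Fejér argument. We show that $D_{KL}(\mathbf{Q}^*\|\mathbf{Q}^k)$ converges almost surely, via a Robbins--Siegmund inequality in which the three-point identity for KL supplies the descent term. A limit point of $\mathbf{Q}^k$ in $\argmax$ then forces the limit of this KL to be $0$, which yields $\mathbf{Q}^k \rightharpoonup \mathbf{Q}^*$ almost surely.

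\textbf{Main obstacle.} The hardest step is making the mirror flow and the APT argument rigorous in infinite-dimensional path space. Specifically, we must establish well-posedness and Lipschitz continuity of $\delta\Lfunc$, which involves score terms that may blow up as $t\to1$, and handle the constraint's normal-cone term. We would address both by leaning on the paper's standing assumptions (\crefrange{asm:precompact}{asm:approximate}): boundedness of $\delta\Lfunc$ on the precompact set, and $\lambda_t$ vanishing near $t=1$. Throughout, all arguments are carried out in the weak topology.
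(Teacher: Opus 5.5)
Your proposal follows essentially the same route as the paper: a KL mirror map relative to $\mathbf{Q}^{pre}=\mathbf{Q}^0$, the dual mirror flow, a proof that the interpolated dual iterates form a precompact asymptotic pseudotrajectory (the paper's claim (C1)), and a concavity-based Lyapunov argument placing the internally chain-transitive sets in the argmax (claim (C2)), closed by Bena\"{\i}m's limit-set theorem. The differences are refinements: the paper uses $\mathcal{L}$ itself as the strict Lyapunov function (via the spherical Hellinger--Kantorovich gradient-flow structure) instead of your $D_{KL}(\mathbf{Q}^*\|\mathbf{Q}_t)$, ignores the verifier constraint, and reaches the single-point conclusion only through a strict-convexity claim, whereas you add a normal-cone treatment and a quasi-Fej\'er Step~3. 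In Step~3, weak convergence $\mathbf{Q}^{k_j}\rightharpoonup\mathbf{Q}^*$ alone does not give $D_{KL}(\mathbf{Q}^*\|\mathbf{Q}^{k_j})\to 0$, since KL is only lower semicontinuous; extract the limit point via the $L_\infty$ precompactness of the dual iterates from Assumption~\ref{asm:precompact}, which makes the log-densities converge uniformly and so the KL vanish.
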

\end{tcolorbox}

\begingroup
  \captionsetup[subfigure]{aboveskip=1.7pt, belowskip=0pt}
\newlength{\imgw}
\setlength{\imgw}{0.25\textwidth}
\newlength{\imgwl}
\setlength{\imgwl}{0.33\textwidth}
\begin{figure*}[t]
    \centering
    % row 1
    \begin{subfigure}{\imgw}
      \centering
      \includegraphics[width=\textwidth]{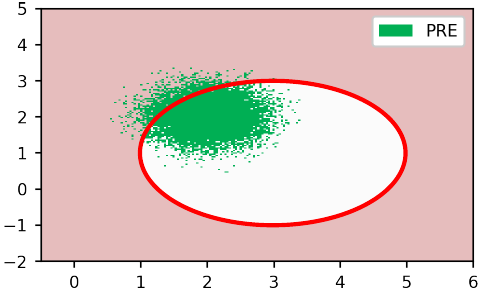}
      \caption{Pre-trained samples}
      \label{fig:toy_top_a}
    \end{subfigure}%
    \begin{subfigure}{\imgw}
      \centering
      \includegraphics[width=\textwidth]{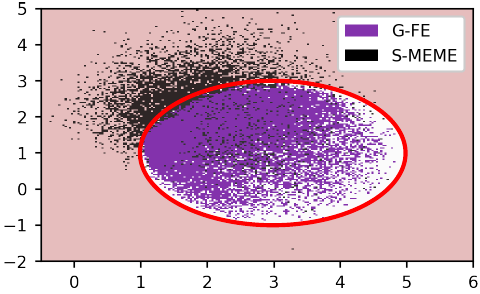}
      \caption{G-\AlgNameShort vs S-MEME}
      \label{fig:toy_top_b}
    \end{subfigure}%
    \begin{subfigure}{\imgw}
      \centering
      \includegraphics[width=\textwidth]{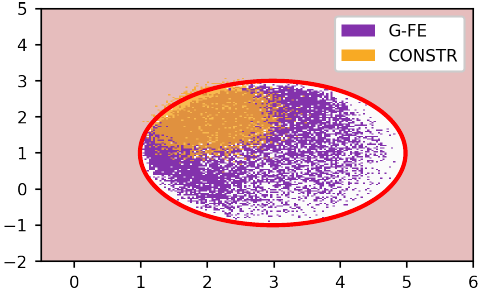}
      \caption{G-FE vs CONSTR}
      \label{fig:toy_top_c}
    \end{subfigure}%
    \begin{subfigure}{\imgw}
      \centering
      \includegraphics[width=\textwidth]{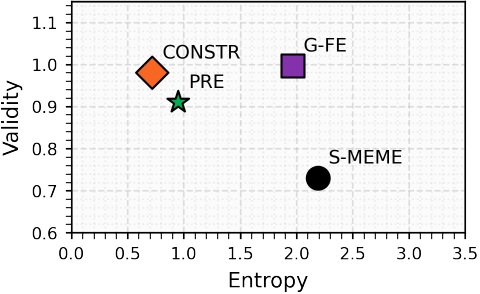}
      \caption{Entropy-Validity}
      \label{fig:toy_top_d}
    \end{subfigure}%
    \\[0.4em]
    % row 2 (repeat)
    \begin{subfigure}{\imgw}
      \centering
      \includegraphics[width=\textwidth]{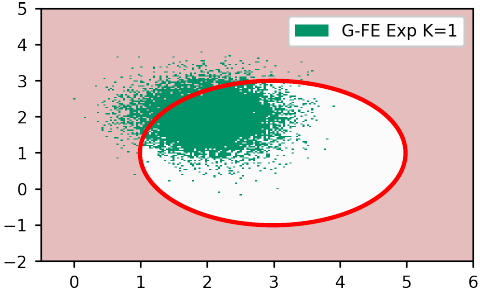}
      \caption{G-FE Expansion $K$=$1$}
      \label{fig:toy_mid_a}
    \end{subfigure}%
    \begin{subfigure}{\imgw}
      \centering
      \includegraphics[width=\textwidth]{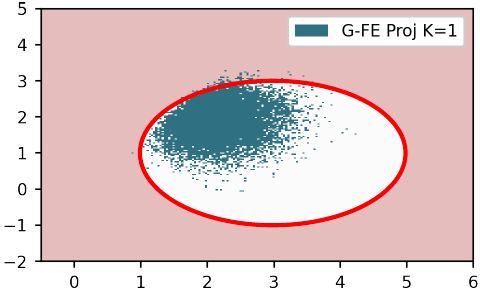}
      \caption{G-FE Projection $K$=$1$}
      \label{fig:toy_mid_b}
    \end{subfigure}%
    \begin{subfigure}{\imgw}
      \centering
      \includegraphics[width=\textwidth]{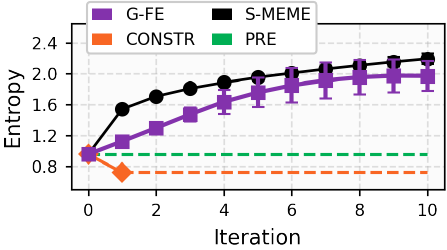}
      \caption{Entropy evaluation}
      \label{fig:toy_mid_c}
    \end{subfigure}%
    \begin{subfigure}{\imgw}
      \centering
      \includegraphics[width=\textwidth]{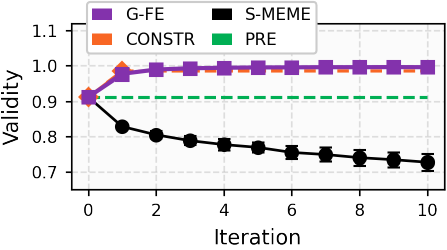}
      \caption{Validity evaluation}
      \label{fig:toy_mid_d}
    \end{subfigure}%
    \caption{\looseness-1 (top) \AlgNameDefGlobal $ $ expands the pre-trained flow model $\pi^{pre}$(\ref{fig:toy_top_a}) into $\pi^*$ (violet, \ref{fig:toy_top_b}), increasing coverage (\ie entropy), while preserving validity (\ie red ellipse interior). Compared with  the unconstrained exploration \AlgNameShortSMEME method, and constrained generation (\AlgNameShortCONSTR), \AlgNameDefGlobal $ $ shows best-of-both-worlds behaviour: achieving near-optimal entropy and validity (Fig. \ref{fig:toy_top_d}). } 
    \label{fig:experiments_fig_1}
\end{figure*}
\endgroup

\section{Experimental Evaluation} 
\label{sec:experiments}

\looseness -1 We analyze the ability of \AlgNameDefGlobal $ $ and \AlgNameDefLocal $ $ to expand flow model densities while preserving validity of generated samples, and compare their performance against recent flow-based exploration methods, namely \AlgNameShortFDC~\citep{santi2025flow}, and \AlgNameShortSMEME~\citep{de2025provable}, as well as a standard constrained generation scheme, denoted by \AlgNameShortCONSTR~\citep[Sec. 8.2][]{uehara2024understanding}. We present experiments on two visually interpretable settings, followed by a molecular design task aiming to increase conformer diversity (more details are provided in Apx. \ref{sec:experimental_detail}).

\mypar{Global Flow Expansion via Strong Verifier}
\looseness -1 We run \AlgNameShortGlobal on a pre-trained model $\pi^{pre}$ to globally expand its density $p_1^{pre}$ over the valid design space (red ellipse in Figs. \ref{fig:toy_top_a}-\ref{fig:toy_top_c}, \ref{fig:toy_mid_a}-\ref{fig:toy_mid_b}). As shown in Fig. \ref{fig:toy_top_b} and \ref{fig:toy_top_c}, \AlgNameShortGlobal (violet) run with $\eta=2$ and $K=10$ , expands into previously uncovered areas (lower right), staying within the valid region. In comparison, \AlgNameShortSMEME (black, Fig. \ref{fig:toy_top_b}) predictably fails to restrict density to the valid region (light red area). Symmetrically, \AlgNameShortCONSTR (see Fig. \ref{fig:toy_top_c}, orange) confines density to the valid space but fails to expand it. Fig. \ref{fig:toy_top_d} shows that \AlgNameShortGlobal explores nearly as much as \AlgNameShortSMEME (\ie $1.97$ vs. $2.17$ entropy), while retaining significantly higher validity: $0.99$ against $0.73$ of \AlgNameShortSMEME. Remarkably, \AlgNameShortGlobal preserves the same degree of validity of \AlgNameShortCONSTR while exploring significantly more ($1.97$ versus $0.72$ entropy). Figs. \ref{fig:toy_mid_a}-\ref{fig:toy_mid_b} show the first expand-then-project steps of \AlgNameShortGlobal and Figs. \ref{fig:toy_mid_c}-\ref{fig:toy_mid_d} show entropy and validity estimates with $95\%$ CI over 5 seeds for \AlgNameShortGlobal and all baselines. In summary, \AlgNameShortGlobal achieves both near-optimal exploration and validity.

\mypar{Local Flow Expansion via Weak Verifier}
We consider a pre-trained flow model $\pi^{pre}$ whose density $p_1^{pre}$ is concentrated in a central high-density region, with low-probability \emph{promising} modes 
\begin{wrapfigure}{r}{0.27\textwidth}
  \centering 
  \includegraphics[width=0.27\textwidth]{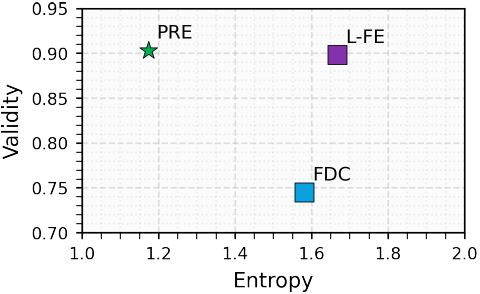}
  \caption{\looseness -1 Entropy-Validity}
  \label{fig:local_pareto_fig} 
\end{wrapfigure}
on either side (see Fig. \ref{fig:toy_local_a}). Crucially, while the two right-most modes are valid, the left one is not.
We fine-tune $\pi^{pre}$ via \AlgNameShortLocal for $K=8$ iterations and $\alpha = 0.99$ to expand its induced density over diverse modes - \ie perform \emph{mode discovery}~\citep{santi2025flow, morshed2025diverseflow}.
As shown in Fig. \ref{fig:toy_local_b} \AlgNameShortFDC, a KL-regularized entropy maximization scheme, predictably increases diversity over plausible modes by redistributing density to the invalid left one. \AlgNameShortLocal, however, leverages a weak verifier (gray circled area in Fig. \ref{fig:toy_local_c}) to prevent allocating more density to that invalid region, and even removes density from that region. 
\ref{fig:toy_local_a}, top). Effectively, \AlgNameShortLocal uses the weak verifier to perform a form of mode selection, \ie filtering out invalid modes during the expansion process. As shown in Fig. \ref{fig:local_pareto_fig}, \AlgNameDefLocal $ $ achieves high entropy (\ie $1.67$ versus $1.17$ and $1.58$ of \AlgNameShortLocal), while preserving high validity, namely $0.89$ compared to $0.74$ of \AlgNameShortFDC, almost fully preserving the prior model's validity of $0.9$.

\begingroup
  \captionsetup[subfigure]{aboveskip=1.7pt, belowskip=0pt}
% \subsection{Experiments on illustrative settings}
\setlength{\imgw}{0.25\textwidth}
\setlength{\imgwl}{0.33\textwidth}
\begin{figure*}[t]
    \centering
    % row 1
    \begin{subfigure}{\imgwl}
      \centering
      \includegraphics[width=\textwidth]{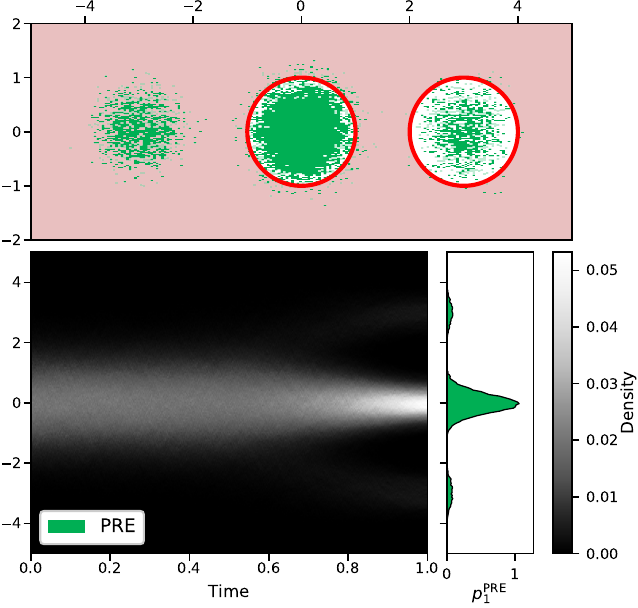}
      \caption{Pre-trained samples}
      \label{fig:toy_local_a}
    \end{subfigure}%
    \begin{subfigure}{\imgwl}
      \centering
      \includegraphics[width=\textwidth]{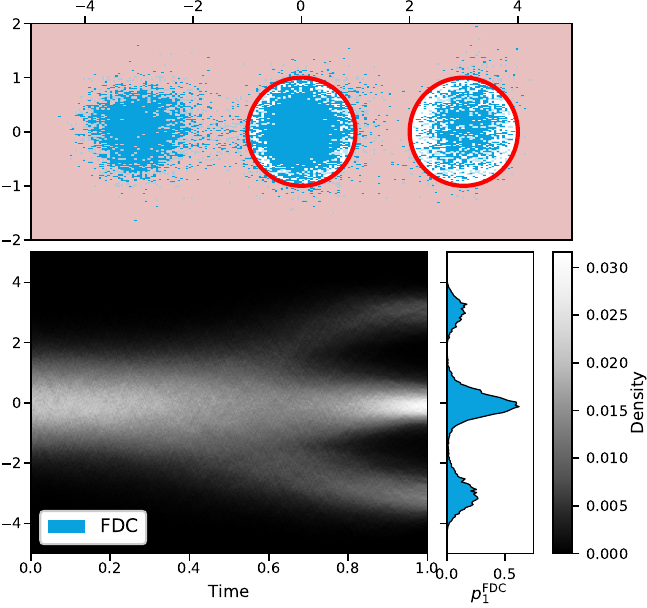}
      \caption{\AlgNameShortFDC samples}
      \label{fig:toy_local_b}
    \end{subfigure}%
    \begin{subfigure}{\imgwl}
      \centering
      \includegraphics[width=\textwidth]{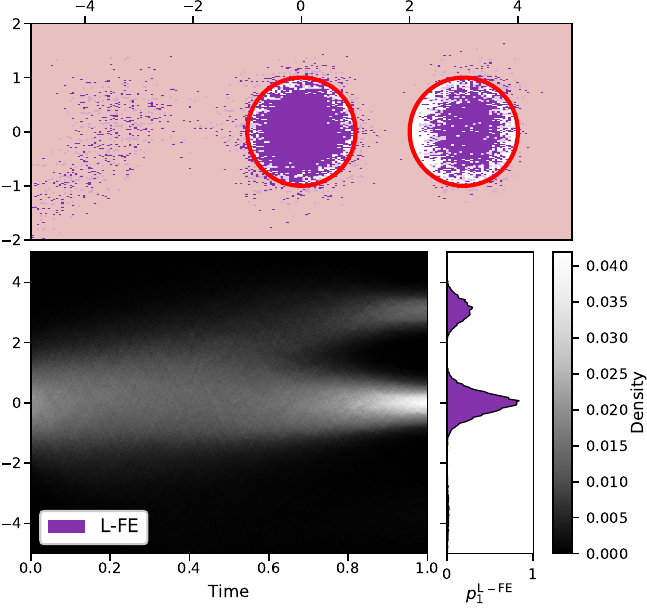}
      \caption{\AlgNameShortLocal samples}
      \label{fig:toy_local_c}
    \end{subfigure}%
    \\[0.4em]
    % row 3 (repeat)
    \begin{subfigure}{\imgw}
      \centering
      \includegraphics[width=\textwidth]{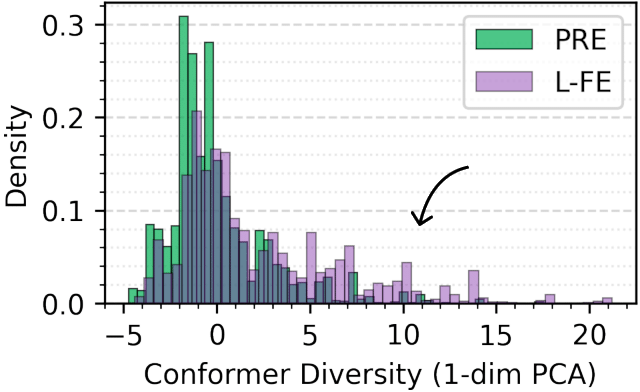}
      \caption{QM9 visual diversity}
      \label{fig:qm9_a}
    \end{subfigure}%
    \begin{subfigure}{\imgw}
      \centering
      \includegraphics[width=\textwidth]{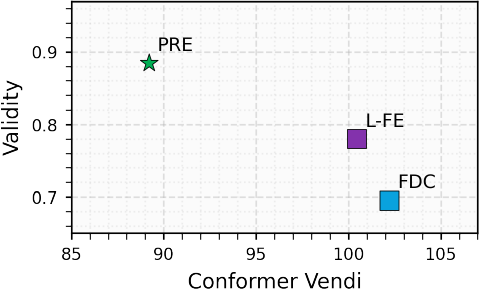}
      \caption{C. Vendi - Validity}
      \label{fig:qm9_b}
    \end{subfigure}%
    \begin{subfigure}{\imgw}
      \centering
      \includegraphics[width=\textwidth]{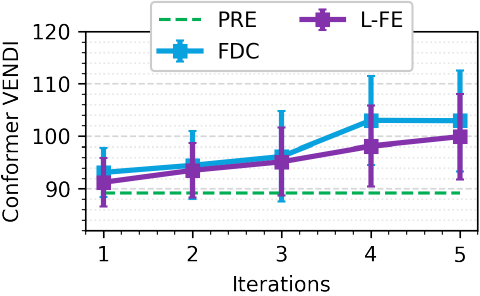}
      \caption{Conformer Vendi}
      \label{fig:qm9_c}
    \end{subfigure}%
    \begin{subfigure}{\imgw}
      \centering
      \includegraphics[width=\textwidth]{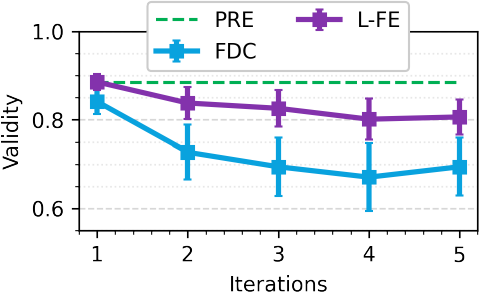}
      \caption{Validity}
      \label{fig:qm9_d}
    \end{subfigure}%
    \caption{\looseness-1 (top) \AlgNameShortLocal (yellow, \ref{fig:toy_local_c}) expands the pre-trained flow model $\pi^{pre}$ (green, \ref{fig:toy_local_a}) over promising yet verifier-filtered modes, while \AlgNameShortFDC (blue, \ref{fig:toy_local_b}) expands $\pi^{pre}$ over all plausible modes leading to increased density in invalid regions (left mode in Fig. \ref{fig:toy_local_b}). (bottom) \AlgNameShort increases visual (\ref{fig:qm9_a}), and quantitative diversity (\ref{fig:qm9_c}), while preserving higher validity than \AlgNameShortFDC (\ref{fig:qm9_b}-\ref{fig:qm9_d})} 
    \label{fig:experiments_fig_2}
\end{figure*}
\endgroup

\mypar{\AlgNameShortLocal increases molecular conformer diversity for de-novo design on QM9} 
\looseness -1 In this experiment, we aim to increase diversity of molecular conformers in a molecular design task.
We run \AlgNameShort on FlowMol CTMC~\citep{dunn2024mixed} pre-trained on QM9 dataset~\citep{ramakrishnan2014quantum}. Our weak verifier is a filter excluding molecules for which any two atoms are closer than 0.975 Ångstroms (Å), and validity is evaluated via RDKit~\citep{rdkit} sanitization paired with the aforementioned check. We evaluate diversity of molecular conformers by a \emph{conformer} VENDI ~\citep{friedman2022vendi} metric (see Apx. \ref{sec:vendi}) capturing diversity over sampled conformers via their fingerprints.
\AlgNameShortLocal, run for $K=5$ iterations and $\alpha = 9$, quantitatively increases diversity compared to the pretrained model (Fig \ref{fig:qm9_b}, VENDI of 100 vs 89). This is visually shown in Fig \ref{fig:qm9_a}, a histogram plot of a $1$-dim PCA projection of molecular fingerprints (see Apx. \ref{sec:experimental_detail} for further details). In particular, \AlgNameShortLocal (violet) expands the pre-trained flow model to explore promising and verifier-certified modes of the pre-trained model density (see Fig. \ref{fig:qm9_a}). Crucially, \AlgNameShortLocal achieves a similar degree of conformer diversity (100 vs 103) to \AlgNameShortFDC, an unconstrained exploration scheme, while preserving significantly higher sample validity, \ie 81\% vs 69\%, as shown in Figs. \ref{fig:qm9_b}, \ref{fig:qm9_c}, and \ref{fig:qm9_d}.

\mypar{\AlgNameShortLocal increases molecular conformer diversity for de-novo design on GEOM-Drugs} 
\looseness -1 In this experiment, we aim to increase the diversity of generated molecular conformers in a molecular design task with drug-like molecules. We run \AlgNameShort on FlowMol CTMC~\citep{dunn2024mixed} pre-trained on GEOM-Drugs  ~\citep{axelrod2022geom}. As in the previous experimental setting, the weak verifier employed is a filter excluding molecules for which any two atoms are closer than 0.975 Ångstroms (Å), and validity is evaluated via the RDKit~\citep{rdkit} sanitization operation paired with the aforementioned check. We evaluate diversity of molecular conformers by a \emph{conformer} VENDI ~\citep{friedman2022vendi} metric (see Apx. \ref{sec:vendi}) capturing diversity over sampled conformers via their fingerprints.
\AlgNameShortLocal, run for $K=3$ iterations, $\alpha = \nicefrac{1}{9}$, and $\eta=5$, achieves higher diversity (529 vs 476) and validity (82\% vs 72\%) than the pre-trained model, as shown in Fig. \ref{fig:fig_new_a}.  Similarly, \AlgNameShortLocal induces higher diversity (529 vs 508) than \AlgNameShortFDC, a recent diffusion-based unconstrained exploration method~\citep{santi2025flow}, while preserving significantly higher sample validity, \ie 82\% vs 66\%, as shown in Fig. \ref{fig:fig_new_a}.

% We report further experimental details (\eg detailed hyperparameters) and an ablation study in Appendix \ref{sec:experimental_detail}.

\looseness - 1 Moreover, within Apx. \ref{sec:ablationstudy}, we report an ablation study for the proposed method parameters. Note that \AlgNameShortLocal performs consistently better than \AlgNameShortExplore. Since \AlgNameShortExplore corresponds to \AlgNameShortLocal with $\eta = 0$, this result illustrates the working mechanism and importance of the projection step (\ie $\eta > 0$). Interestingly, \AlgNameShortGlobal, which is equal to \AlgNameShortLocal with $\alpha_t = 0$, shows performance on par with \AlgNameShortLocal for very conservative parameters ($\gamma=0.0002$, $K=3$), while gradually degrading for less conservative parametrizations. This behaviour is likely due to the implicit KL-regularization between iterates within the mirror descent update step (see Eq. \ref{eq:MD_step}), which implies prior regularization for small $K$.

\mypar{\AlgNameShortExplore achieves higher exploration performance against current methods} 
\looseness -1 We evaluate \AlgNameShortExplore, the unconstrained exploration variant of \AlgNameShortLocal obtained by removing the projection step (see Alg. \ref{alg:algorithm_nse} in Sec. \ref{sec:algorithm} for further details), to perform flow-based design space (unconstrained) exploration. We consider FlowMol CTMC~\citep{dunn2024mixed} pre-trained on GEOM-Drugs dataset ~\citep{axelrod2022geom}, and report in Fig. \ref{fig:fig_new_b} the results for \AlgNameShortExplore with $K=3$ and $\alpha=0$. We observe that \AlgNameShortExplore achieves consistently higher diversity (\ie 519 vs 508) and validity (\ie 74\% vs 66\%) against \AlgNameShortFDC ~\citep{santi2025flow}, a state-of-the-art method for flow-based unconstrained exploration. 

\mypar{\AlgNameShortLocal and \AlgNameShortExplore have computational costs comparable to current exploration schemes} 
\looseness -1 We report in Fig. \ref{fig:fig_new_c} a comparison of computational cost, measured via the method runtime (seconds [s]) of \AlgNameShortLocal and \AlgNameShortExplore compared against \AlgNameShortFDC ~\citep{santi2025flow}. One can notice that although the schemes proposed within this work (\ie \AlgNameShortLocal and \AlgNameShortExplore) perform exploration over the entire flow process noised state space, they do not incur in significantly higher computational cost compared with \AlgNameShortFDC.

\begingroup
  \captionsetup[subfigure]{aboveskip=1.7pt, belowskip=0pt}
% \subsection{Experiments on illustrative settings}
\setlength{\imgw}{0.25\textwidth}
\setlength{\imgwl}{0.33\textwidth}
\begin{figure*}[t]
    \centering
    % row 1
    \begin{subfigure}{\imgw}
      \centering
      \includegraphics[width=\textwidth]{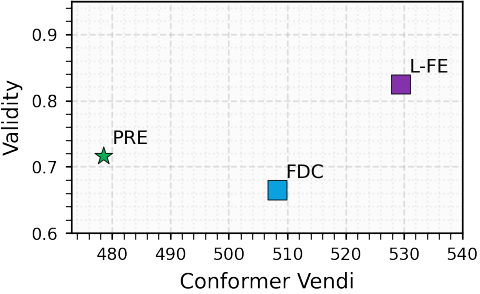}
      \caption{\AlgNameShortLocal on GEOM-Drugs}
      \label{fig:fig_new_a}
    \end{subfigure}%
    \begin{subfigure}{\imgw}
      \centering
      \includegraphics[width=\textwidth]{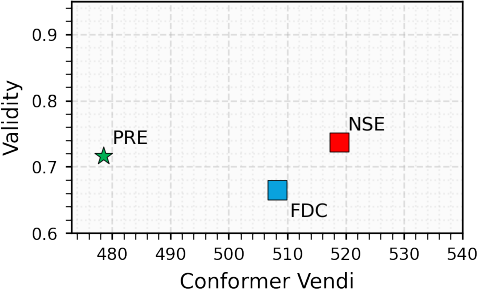}
      \caption{\AlgNameShortExplore vs FDC}
      \label{fig:fig_new_b}
    \end{subfigure}%
    \begin{subfigure}{\imgw}
      \centering
      \includegraphics[width=\textwidth]{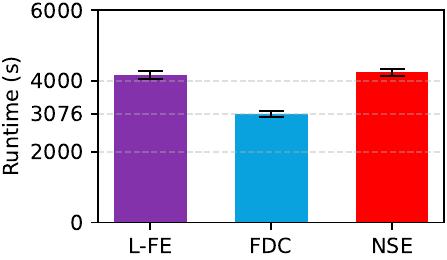}
      \caption{Computational Cost}
      \label{fig:fig_new_c}
    \end{subfigure}%
    \begin{subfigure}{\imgw}
      \centering
      \includegraphics[width=\textwidth]{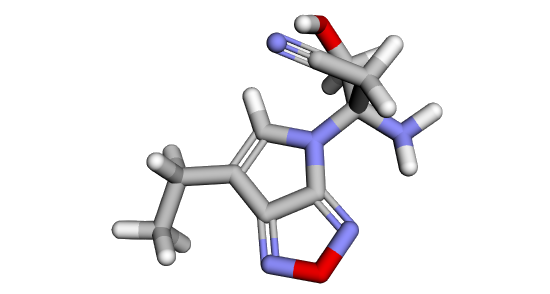}
      \caption{GEOM-drugs molecule}
      \label{fig:fig_new_d}
    \end{subfigure}%
    \\[0.4em]
    \caption{\looseness-1 (\ref{fig:fig_new_a}) \AlgNameShortLocal (violet) achieves higher diversity and validity than \AlgNameShortFDC on GEOM-Drugs. (\ref{fig:fig_new_b}) \AlgNameShortExplore, the unconstrained variant of \AlgNameShortLocal, exhibits superior performance compared with \AlgNameShortFDC, a state-of-the-art diffusion-based unconstrained exploration method. (\ref{fig:fig_new_c}) Computational cost comparison of \AlgNameShortLocal, \AlgNameShortFDC, and \AlgNameShortExplore. (\ref{fig:fig_new_d}) Representative drug-like molecule generated via \AlgNameShortLocal.} 
    \label{fig:experiments_fig_3}
\end{figure*}
\endgroup

\section{Related Work} 
\label{sec:related_works}

\paragraph{Diffusion and flow based design space exploration}
\looseness -1 Recent works introduced methods for flow based design space exploration via maximization of entropy functionals~\citep{de2025provable, santi2025flow} or approximations~\citep{celik2025dime}. While these methods explore by leveraging information from a prior model, \AlgNameShort directs exploration either $(i)$ exclusively via a verifier (\ie global expansion, see \ref{eq:global_flow_expansion_problem}), or $(ii)$ combining verifier information with prior validity cues (\ie local expansion, see \ref{eq:local_flow_expansion_problem}). Moreover, while current schemes explore only the last time-step state space, we lift the exploration task to the entire flow process, providing a principled solution to the score divergence problem mentioned in Sec. \ref{sec:algorithm}.

\mypar{Maximum state entropy exploration}
\looseness -1 Maximum state entropy exploration, introduced by \citet{hazan2019maxent}, tackles the pure-exploration problem of maximizing the entropy of the state distribution induced by a policy over a dynamical system's state space~\citep[\eg][]{lee2019efficient, mutti2021task, guo2021geometric, de2024geometric}. The flow expansion problems (Eq. \ref{eq:global_flow_expansion_problem} and \ref{eq:local_flow_expansion_problem}) are closely related, with $p^\pi_1$ representing the state distribution induced by policy $\pi$ over a subset of the flow process state space (\ie for time-step $t=1$). Recent studies have tackled maximum entropy exploration with finite sample budgets~\citep[\eg][]{de2024global, prajapat2023submodular, mutti2023convex, mutti2022importance, mutti2022challenging}, which could be relevant for future work, \eg  design space exploration under a limited samples constraint.

\mypar{Sample diversity in diffusion models generation}
\looseness -1 A well-known limitation of flow-based generation is limited sample diversity. This problem has been recently addressed by numerous studies~\citep[\eg][]{corso2023particle, um2023don, kirchhof2024sparse, sadat2024cadsunleashingdiversitydiffusion, um2025self, klarner2024context}.
Crucially, such methods are complementary to ours. In fact, they can be applied to promote diverse sampling from the expanded model produced by \AlgNameShort. In particular, whereas these works aim to maximize diversity of a fixed diffusion model or flow model, we aim to sequentially fine-tune a pre-trained flow model so that its induced density is permanently expanded over the valid design space.  Moreover, our formulations (Eq. \ref{eq:global_flow_expansion_problem}, \ref{eq:local_flow_expansion_problem}) and \AlgNameShort scheme increase diversity while integrating validity signal from a chosen verifier.

\section{Conclusion} 
\label{sec:conclusions}
\looseness -1 This work tackles the fundamental challenge of leveraging a verifier (\eg an atomic bonds checker), to expand a pre-trained model's density beyond regions of high data availability, while preserving validity of the generated samples. To this end, we introduce notions of \emph{strong} and \emph{weak} verifiers and cast \emph{global} and \emph{local flow expansion} as probability-space optimization problems. We present \AlgNameDef, a scalable mirror-descent scheme that \emph{provably} solves both problems via verifier-constrained entropy maximization over the flow process noised state space. We provide a thorough analysis showing convergence guarantees for \AlgNameShort under idealized and general assumptions by employing recent mirror-flow theory. Ultimately, we empirically evaluate our method on both illustrative settings, and a molecular design task showcasing the ability of \AlgNameShort to increase molecular conformer diversity while preserving better levels of validity than current flow and diffusion-based exploration methods.

\section*{Acknowledgements}
This publication was supported by the ETH AI Center doctoral fellowship to Riccardo De Santi. The project has received funding from the Swiss
National Science Foundation under NCCR Catalysis grant number 180544 and NCCR Automation grant agreement 51NF40 180545. 

\bibliography{biblio}
\bibliographystyle{iclr2026_conference}

\newpage
\appendix
\section{Appendix}
\tableofcontents
\newpage
\addtocontents{toc}{\protect\setcounter{tocdepth}{2}}

\section{Derivation of Gradients of First Variation}
\label{sec:app_gradients}
In this section we present derivations of the results in \eqref{eq:running-cost} relating the gradient of the first variation of the trajectory rewards to the score function. We derive the result for \AlgNameShortLocal, the result for \AlgNameShortGlobal follows as a subcase.

First, recall the trajectory rewards for \AlgNameShortLocal:

\begin{align}
    \nabla_{x_t}\delta \mathcal{G}_t(p^\pi_t) &= \nabla_{x_t}\delta \big(\mathcal{H}(p^\pi_t) - \alpha_t \mathcal{D}_\textrm{KL}(p^\pi_t || p^\textrm{pre}_t)\big) \\
    &= \nabla_{x_t}\delta \mathcal{H}(p^\pi_t) - \alpha_t \nabla_{x_t}\delta\mathcal{D}_\textrm{KL}(p^\pi_t || p^\textrm{pre}_t)\, . & \text{(by linearity)}
\end{align}

Thus it suffices to show derivations for $\nabla_{x_t}\delta \mathcal{H}(p^\pi_t)$ and $\nabla_{x_t}\delta\mathcal{D}_\textrm{KL}(p^\pi_t || p^\textrm{pre}_t)$. Starting with the entropy functional, recalling its definition as $\mathcal{H}(p^\pi_t) = - \int_0^1p^\pi_t(x) \log p^\pi_t(x) dx$ we have:

\begin{align}
    \nabla_{x_t} \delta \mathcal{H}(p^\pi_t) &= \nabla_{x_t} (1 - \log p^\pi_t) \\
    &= -\nabla_{x_t} \log p^\pi_t \\
    &= -s^\pi_t
\end{align}

Similarly for the second term:

\begin{align}
    \nabla_{x_t} \delta \mathcal{D}_{\textrm{KL}}(p^\pi_t || p^\textrm{pre}_t) &= \nabla_{x_t} \int p^\pi_t \log p^\pi_t - \\
    &= \nabla_{x_t}(\log p^\pi_t - 1 - \log p^\textrm{pre}_t) \\
    &= s^\pi_t - s^\textrm{pre}_t
\end{align}
\newpage

\section{Proof of Proposition \ref{prop:expandthenproject}}
\label{sec:proof_proposition}
In this section we show that the optimization problem in \ref{eq:MD_step} can be decomposed into an unconstrained expansion step followed by a projection into the constrained set. We start by defining the following processes:

\begin{equation} \label{eq:constrained_exp}
    \mathbf{Q}^k \in \argmax_{\mathbf{Q} : p_0 = p^{k - 1}_0 } \langle \delta \mathcal{L}(\mathbf{Q}^{k - 1}), \mathbf{Q}\rangle - \frac{1}{\gamma_k} \mathcal{D}_{\textrm{KL}}(\mathbf{Q} || \mathbf{Q}^{k - 1}) \quad \textrm{s.t.} \quad \EV_{x \sim q_1} \left[v(x)\right] 
\end{equation}

\begin{equation}\label{eq:formal_expansion}
    \Tilde{\mathbf{Q}}^k \in \argmax_{\mathbf{Q} : p_0 = p^{k - 1}_0 } \langle \delta \mathcal{L}(\mathbf{Q}^{k - 1}), \mathbf{Q}\rangle - \frac{1}{\gamma_k} \mathcal{D}_{\textrm{KL}}(\mathbf{Q} || \mathbf{Q}^{k - 1})
\end{equation}

\begin{equation}\label{eq:formal_projection}
    \Bar{\mathbf{Q}}^k \in \argmin_{\mathbf{Q} : p_0 = p^{k - 1}_0} \mathcal{D}_{\textrm{KL}}(\mathbf{Q} || \Tilde{\mathbf{Q}}^k) \; \textrm{s.t.} \; \EV_{x \sim p_1} [v(x)]= 1
\end{equation}

\begin{equation}\label{eq:true_projection}
    \hat{\mathbf{Q}}^k \in \argmax_{\mathbf{Q} : p_0 = p^{k - 1}_0 } \EV_{x \sim p_1}\left[\log v(x)\right] - \frac{1}{\gamma_k} \mathcal{D}_{\textrm{KL}}(\mathbf{Q} || \mathbf{Q}^{k - 1})
\end{equation}

letting $p^k_t$, $\Tilde{p}^k_t$, $\Bar{p}^k_t$, $\hat{p}^k_t$ refer to their respective marginal densities a time $t$. Note that $\Tilde{\mathbf{Q}}^k$ is the output of the projection step in \ref{eq:main_expansion}, and that $\hat{\mathbf{Q}}^k$ is the output of the projection step in \ref{eq:main_proj}. The following Lemma asserts that solving the optimization problem in \eqref{eq:MD_step} is equivalent to solving the expansion step of \ref{eq:formal_expansion} followed by the formal information projection step of \ref{eq:formal_projection}.

\begin{lemma}
    Let $\mathbf{Q}^{k - 1}$ be the process associated with the previous iterate $\pi^{k - 1}$, and let $\Tilde{\mathbf{Q}}^k$ and $\Bar{\mathbf{Q}}^k$ be defined as above. Then $\Bar{\mathbf{Q}}^k = \mathbf{Q}^k$.
\end{lemma}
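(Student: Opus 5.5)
The argument rests on the Gibbs (Donsker--Varadhan) variational characterization of KL-regularized linear maximization over path measures. I will show that the objective in \eqref{eq:constrained_exp} differs from $-\frac{1}{\gamma_k}\mathcal{D}_{\textrm{KL}}(\mathbf{Q}\,\|\,\Tilde{\mathbf{Q}}^k)$ only by a constant that does not depend on $\mathbf{Q}$. Once this holds, maximizing the former and minimizing the latter over the same feasible set give the same argmax, and the lemma follows.

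\textbf{Step 1: closed form of the expansion step.} Set $r(\omega) \coloneqq \int_0^1 \lambda_t\, \delta\G_t(p^{k-1}_t)(X_t(\omega))\,\der t$, a function on path space. By Lemma \ref{lemma:first_var_process}, $\langle \delta\Lfunc(\mathbf{Q}^{k-1}), \mathbf{Q}\rangle = \EV_{\mathbf{Q}}[r]$. The problem in \eqref{eq:formal_expansion} then has the maximizer
\[
\frac{\der \Tilde{\mathbf{Q}}^k}{\der \mathbf{Q}^{k-1}}(\omega) = \frac{\exp(\gamma_k r(\omega))}{Z(x_0)}, \qquad Z(x_0) = \EV_{\mathbf{Q}^{k-1}}\!\left[e^{\gamma_k r} \,\middle|\, X_0 = x_0\right].
\]
I normalize conditionally on $X_0$ so that the initial marginal $p_0^{k-1}$ is preserved. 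This means the tilt is applied to the conditional path laws given $X_0$, and I will check that this respects the constraint $p_0 = p_0^{k-1}$.

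\textbf{Step 2: the key identity.} Take any $\mathbf{Q}$ with $p_0 = p_0^{k-1}$ and $\mathbf{Q}\ll\mathbf{Q}^{k-1}$. Expanding the log-density ratio gives
\[
\mathcal{D}_{\textrm{KL}}(\mathbf{Q}\,\|\,\Tilde{\mathbf{Q}}^k) = \mathcal{D}_{\textrm{KL}}(\mathbf{Q}\,\|\,\mathbf{Q}^{k-1}) - \gamma_k \EV_{\mathbf{Q}}[r] + \EV_{x_0\sim p_0^{k-1}}[\log Z(x_0)].
\]
The last term depends only on the fixed initial marginal, so it is the same for every feasible $\mathbf{Q}$. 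Multiplying by $-1/\gamma_k$ shows that the objective of \eqref{eq:constrained_exp} equals $-\frac{1}{\gamma_k}\mathcal{D}_{\textrm{KL}}(\mathbf{Q}\,\|\,\Tilde{\mathbf{Q}}^k)$ plus a constant. The verifier constraint $\EV_{p_1}[v]=1$ is identical in \eqref{eq:constrained_exp} and \eqref{eq:formal_projection}, so both problems have the same feasible set and the same argmax. Hence $\Bar{\mathbf{Q}}^k = \mathbf{Q}^k$, with set equality of argmax sets if the solutions are not unique. For $\mathbf{Q}\not\ll\mathbf{Q}^{k-1}$ both objectives are $-\infty$, because $\Tilde{\mathbf{Q}}^k$ and $\mathbf{Q}^{k-1}$ are mutually absolutely continuous.

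\textbf{Main obstacle: well-posedness.} The hardest part is justifying Step 1 rigorously. This requires $0<Z<\infty$ and $\EV_{\mathbf{Q}}[|r|]<\infty$, which in turn needs integrability assumptions on $\lambda_t\,\delta\G_t$. These are delicate for the entropy first variation $-\log p_t$ near $t=1$, where $p_t$ may concentrate. I would assume $\lambda_t$ is chosen, for instance vanishing near $t=1$ as the paper suggests, so that $r$ is bounded on the relevant support. I would also need the feasible set of \eqref{eq:formal_projection} to be nonempty, meaning $\Tilde{\mathbf{Q}}^k$ puts mass on $\Omega_v$; since $\Tilde{\mathbf{Q}}^k \sim \mathbf{Q}^{k-1}$, this is inherited from $\mathbf{Q}^{k-1}$. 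Finally, there is a subtlety: path measures induced by deterministic ODE flows are degenerate given $X_0$, so the KL between them is only informative through the initial condition. I would handle this by working in the usual stochastic-control (diffusion) relaxation of the process in which the fine-tuning solver operates, and I would state this explicitly as the setting of the lemma.
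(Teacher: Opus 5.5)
Your proposal is correct and follows the same route as the paper: use the Gibbs closed form $\log \frac{\der \Tilde{\mathbf{Q}}^k}{\der \mathbf{Q}^{k-1}} = \gamma_k r + \text{const}$ to rewrite the MD objective as $-\frac{1}{\gamma_k}\mathcal{D}_{\textrm{KL}}(\mathbf{Q}\,\|\,\Tilde{\mathbf{Q}}^k)$ plus a $\mathbf{Q}$-independent constant. The argmax over any common feasible set, in particular the verifier-constrained one, then coincides with the KL projection. Your version is cleaner than the paper's: you get the $1/\gamma_k$ scaling and the sign right, and you make explicit that the normalizer may depend on $X_0$ and is constant across feasible $\mathbf{Q}$ only because $p_0 = p_0^{k-1}$ is fixed.
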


\begin{proof}
    First, note that the processes $\mathbf{Q}^{k - 1}$ and $\Tilde{\mathbf{Q}}^k_t$ satisfy the following relationship (see e.g. \cite{domingo2024adjoint} equation 22):

    \begin{equation}
        \log \frac{d\Tilde{\mathbf{Q}}^k}{d\mathbf{Q}^{k -1}}(X) = \gamma_k \delta \mathcal{L}(\mathbf{Q}^{k - 1})(X)  + \textrm{const}\, .
    \end{equation}

    which implies the following equality for an arbitrary process $q$ (taking the expectation and rearranging):

    \begin{equation}
        \langle \delta \mathcal{L}(\mathbf{Q}^{k - 1}), \mathbf{Q}\rangle  - \gamma_k D_{\textrm{KL}}(\mathbf{Q} || \mathbf{Q}^{k - 1})  = \gamma_k D_{\textrm{KL}}( \mathbf{Q} || \Tilde{\mathbf{Q}}^k) - \textrm{const} \, .
    \end{equation}

    Therefore the equation below holds for any arbitrary set of processes $A$:

    \begin{equation}
    \argmax_{\mathbf{Q}\in A\\} \langle \delta \mathcal{L}(\mathbf{Q}^{k -1}), q\rangle - \frac{1}{\gamma_k} D_{\textrm{KL}}(\mathbf{Q} || \mathbf{Q}^{k - 1}) = \argmin_{q\in A} D_{\textrm{KL}}(\mathbf{Q} || \Tilde{\mathbf{Q}}^k)
    \end{equation}

    and thus also holds for the set $A = \{\mathbf{Q}  \; \textrm{s.t.} \; p_0 = p_0 ^{k - 1} \,\textrm{and} \, \EV_{x \sim p_1}\left[v(x)\right] = 1\}$: the set of feasible solutions to \ref{eq:MD_step}.

\end{proof}

Finally, the following Lemma reformulates the information projection step in \ref{eq:formal_projection} as the fine-tuning objective in \ref{eq:formal_expansion}:

\begin{lemma}
    Let $\hat{\mathbf{Q}}^k$ and $\Bar{\mathbf{Q}}^k$ be defined as above. Then $\hat{\mathbf{Q}}^k = \Bar{\mathbf{Q}}^k$
\end{lemma}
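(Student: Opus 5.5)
The plan is to show that the two problems defining $\hat{\mathbf{Q}}^k$ and $\Bar{\mathbf{Q}}^k$ have the same feasible set of finite-value candidates and the same objective on that set. Hence they have the same maximizers.

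Two conventions come first. I read the reference measure in \eqref{eq:true_projection} as $\Tilde{\mathbf{Q}}^k$, as in the projection step \eqref{eq:main_proj} that the lemma is meant to justify. Any positive multiplicative constant on the KL term plays no role, because the $\log v$ term will act purely as a hard constraint. I also use the standard convention $\log 0=-\infty$ and $0\cdot(-\infty)=0$.

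\textbf{Step 1: $\log v$ acts as a barrier.} Since $v\in\{0,1\}$, we have $\log v(x)=0$ on $\Omega_v$ and $\log v(x)=-\infty$ off $\Omega_v$. Therefore
\[
\EV_{x\sim p_1}[\log v(x)] = 0 \text{ if } p_1(\Omega_v)=1, \qquad \EV_{x\sim p_1}[\log v(x)] = -\infty \text{ otherwise.}
\]
Moreover, $p_1(\Omega_v)=1$ is exactly the condition $\EV_{x\sim p_1}[v(x)]=1$.

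\textbf{Step 2: restrict to the feasible set.} Let $A=\{\mathbf{Q}: p_0=p_0^{k-1},\ \EV_{p_1}[v]=1\}$. For $\mathbf{Q}\notin A$ (but with $p_0=p^{k-1}_0$), the objective of \eqref{eq:true_projection} equals $-\infty$. For $\mathbf{Q}\in A$, it equals $-D_{KL}(\mathbf{Q}\|\Tilde{\mathbf{Q}}^k)$. So as long as some $\mathbf{Q}\in A$ has finite KL to $\Tilde{\mathbf{Q}}^k$, no $\mathbf{Q}\notin A$ can be a maximizer. On $A$, maximizing $-D_{KL}(\mathbf{Q}\|\Tilde{\mathbf{Q}}^k)$ is exactly the information projection \eqref{eq:formal_projection}. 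Both argmax sets therefore coincide, and $\hat{\mathbf{Q}}^k=\Bar{\mathbf{Q}}^k$.

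\textbf{Step 3: non-degeneracy, the main obstacle.} I would exhibit an explicit element of $A$ with finite KL, which also gives the solution in closed form. Take the Doob $h$-transform of $\Tilde{\mathbf{Q}}^k$ that conditions on the endpoint lying in $\Omega_v$ while preserving the initial marginal:
\[
\frac{d\mathbf{Q}^\star}{d\Tilde{\mathbf{Q}}^k}(X)=\frac{v(X_1)}{\Tilde{\mathbf{Q}}^k(X_1\in\Omega_v\mid X_0)} .
\]
By construction $\mathbf{Q}^\star$ has $p_0=p_0^{k-1}$ and $p_1(\Omega_v)=1$. Its KL is $\EV_{p_0}[-\log \Tilde{\mathbf{Q}}^k(X_1\in\Omega_v\mid X_0)]$. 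A chain-rule decomposition of KL into the initial marginal plus the conditional path law shows that $\mathbf{Q}^\star$ minimizes KL over $A$.

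The delicate point is the assumption needed: $\Tilde{\mathbf{Q}}^k(X_1\in\Omega_v\mid X_0)>0$ for $p_0$-a.e. $X_0$, with finite expected $-\log$. I would state this as a standing regularity assumption, in line with the "general regularity assumptions" invoked in Sec.~\ref{sec:algorithm}. I would also remark that for purely deterministic flow ODEs the conditional law given $X_0$ is a Dirac mass, so the assumption then amounts to $\Tilde p^k_1(\Omega_v)=1$. This is why the argument is naturally phrased for the stochastic (noised) process laws used in the control formulation. If the assumption fails, both problems have value $-\infty$ (or are infeasible) simultaneously, so the equality of solution sets still holds trivially.
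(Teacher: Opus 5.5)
Your Steps 1--2 are the paper's argument: because $\log v\in\{0,-\infty\}$, the reward term is finite only when $v=1$ holds $p_1$-a.s., and there it vanishes. It therefore collapses into the hard constraint $\EV_{x\sim p_1}[v(x)]=1$, and what remains is the information projection defining $\Bar{\mathbf{Q}}^k$. Beyond that you add rigor the paper skips: you read the reference measure as $\Tilde{\mathbf{Q}}^k$ (Eq.~\ref{eq:true_projection} literally writes $\mathbf{Q}^{k-1}$, and under that reading the two problems would generally have different solutions), and your Step~3 uses the endpoint-conditioned $h$-transform to check that some feasible $\mathbf{Q}$ has finite KL, which the paper's claim that ``the maximizer must belong to the set'' silently presupposes.
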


\begin{proof}
    Recall the definition of $\hat{\mathbf{Q}}^k$:

    \begin{equation}
    \hat{\mathbf{Q}}^k \in \argmax_{\mathbf{Q} : p_0 = p^{k - 1}_0 } \EV_{x \sim p_1}\left[\log v(x)\right] - \frac{1}{\gamma_k} \mathcal{D}_{\textrm{KL}}(\mathbf{Q} || \mathbf{Q}^{k - 1})
\end{equation}

and note that the expectation in the first term is finite only if $v(x) \neq 0, \; p_1 - \textrm{a.s.}$, in which case it vanishes. Thus the maximizer must belong to the set $\big\{\mathbf{Q}: p_0 = p^{k -1 }_0, \; \E_{x\sim p_1}[v(x)] = 1 \big\}$ effectively turning the first term into a constraint.
\end{proof}

\newpage 

\section{Proof for Theorem \ref{theorem:convex_case_convergence}}
\label{sec:app_theory1}
\trajConvexCase*
\newcommand{\Qbf}{\mathbf{Q}}

\renewcommand{\G}{(-\Lfunc)}
\begin{proof}

Fix an initial reference measure $\bar{\mathbf{Q}} \coloneqq \mathbf{Q}^0$, and define the function
\begin{equation}
\label{eq:relative_properties}
\Q(\mathbf{Q}) \coloneqq D_{\mathrm{KL}}\!\left(\mathbf{Q}\,\big\|\, \bar{\mathbf{Q}} \right),
\end{equation}
which measures the Kullback–Leibler divergence of $\mathbf{Q}$ from this reference. This choice of $\Q$ will serve as the \emph{reference function} in the framework of \emph{mirror descent with relative smoothness}~\citep{bauschke2017descent,lu2018relatively}. The key point is that the objective $\Lfunc$ in \cref{eq:noised_flow_expansion_problem} is not necessarily smooth in the classical sense, but it is \emph{$\lambda^\star$-smooth relative to $\Q$}.  

To formalize this, let $D_\Q(\Qbf, \Qbf')$ denote the \emph{Bregman divergence} generated by $\Q$. By definition,
\[
D_\Q(\Qbf, \Qbf') = \Q(\Qbf) - \Q(\Qbf') - \langle \delta \Q(\Qbf'), \Qbf - \Qbf' \rangle.
\]
A direct computation shows that when $\Q$ is the KL divergence from a fixed reference measure, the Bregman divergence reduces exactly to another KL divergence:
\[
D_\Q(\Qbf, \Qbf') = D_{\mathrm{KL}}\!\left(\Qbf \,\big\|\, \Qbf'\right).
\]
This equivalence will allow us to leverage classical properties of relative entropy in the convergence analysis.

\medskip
\noindent
Next, consider the mirror descent iterates $\{\Qbf^k\}$ for minimizing $\G$\footnote{We adopt the standard convention of convex \emph{minimization} rather than concave maximization, which explains the negative sign in the formulation.}. By the definition of relative smoothness, we have
\begin{align}
\label{eq:md_smoothness_step}
\G(\Qbf^k) 
&\leq \G(\Qbf^{k-1}) + \langle \delta \G(\Qbf^{k-1}), \Qbf^k - \Qbf^{k-1} \rangle + \lambda^\star D_\Q(\Qbf^k, \Qbf^{k-1}).
\end{align}
Here, the first inequality follows directly from the \emph{$\lambda^\star$-smoothness of $\G$ relative to $\Q$}, as defined in \cref{eq:relative_properties}. Intuitively, this is a generalization of the standard quadratic upper bound used in classical smooth optimization, but with the Bregman divergence replacing the squared Euclidean norm.  

We can refine this bound further by applying the \emph{three-point inequality} of the Bregman divergence~\citep[Lemma 3.1]{lu2018relatively}. Let us define a linearized function 
\[
\phi(\Qbf) \coloneqq \frac{1}{\lambda^\star} \langle \delta \G(\Qbf^{k-1}), \Qbf - \Qbf^{k-1} \rangle,
\] 
and let $z = \Qbf^{k-1}$, $z^+ = \Qbf^k$. Then the three-point identity gives
\begin{align}
\label{eq:three_point}
\langle \delta \G(\Qbf^{k-1}), \Qbf^k - \Qbf^{k-1} \rangle
&\leq \langle \delta \G(\Qbf^{k-1}), \mu - \Qbf^{k-1} \rangle + \lambda^\star D_\Q(\mu, \Qbf^{k-1}) - \lambda^\star D_\Q(\mu, \Qbf^k),
\end{align}
for any reference point $\mu$. Combining \cref{eq:md_smoothness_step} and \cref{eq:three_point} yields
\begin{align}
\G(\Qbf^k) 
&\leq \G(\Qbf^{k-1}) + \langle \delta \G(\Qbf^{k-1}), \mu - \Qbf^{k-1} \rangle + \lambda^\star D_\Q(\mu, \Qbf^{k-1}) - \lambda^\star D_\Q(\mu, \Qbf^k).
\end{align}

\medskip
\noindent
Finally, we can telescope this inequality over $k=1,\dots,K$. Using the monotonicity of $\G(\Qbf^k)$ along the iterates and the non-negativity of the Bregman divergence $D_\Q$, we obtain~\citep{lu2018relatively}:
\begin{align}
\sum_{k=1}^K \left( \G(\Qbf^k) - \G(\mu) \right)
&\leq \lambda^\star D_\Q(\mu, \Qbf^0) - \lambda^\star D_\Q(\mu, \Qbf^K) 
\leq \lambda^\star D_\Q(\mu, \Qbf^0),
\end{align}
for any $\Qbf$. Dividing both sides by $K$ and rearranging gives a simple \emph{ergodic convergence rate}:
\begin{equation}
\G(\Qbf^K) - \G(\Qbf) \leq \frac{\lambda^\star D_\Q(\Qbf, \Qbf^0)}{K}, 
\end{equation}
which shows that the iterates converge at an $O(1/K)$ rate in terms of the relative entropy.
\end{proof}
\newpage

\section{Proof for Theorem \ref{theorem:general_case_convergence}}
\label{sec:app_theory2}
\newtheorem{informalassumption}{Informal Assumption} 

To establish our main convergence theorem, we impose a few auxiliary assumptions that are widely used in the analysis of stochastic approximation and gradient flows. These assumptions are mild and typically satisfied in practical applications.

\begin{assumption}[Precompactness of Dual Iterates]
\label{asm:precompact}
The sequence of dual variables $\{\delta \Q(\Qbf^k)\}_k$ is precompact in the $L_\infty$ topology.  
\end{assumption}

\noindent
Precompactness ensures that the interpolated trajectories of the dual iterates remain within a bounded region in function space. This property is crucial for applying convergence results based on asymptotic pseudotrajectories, and similar precompactness assumptions have appeared in the literature on stochastic approximation and continuous-time interpolations of discrete dynamics \citep{benaim2006dynamics,hsieh2019finding,mertikopoulos2024unified}.

In our finite-dimensional parameter space, (E.1) essentially requires that the sequence of iterates produced by the solver remains in a bounded set. This is a very mild requirement: it is satisfied as soon as the solver does not diverge numerically (e.g., no exploding parameters or NaNs), which is exactly what we observe in all our experiments. Moreover, standard practices such as bounded initialization, weight decay, and gradient clipping can be viewed as explicit mechanisms that enforce this boundedness.

\begin{assumption}[Noise and Bias Control]
\label{asm:approximate}
The stochastic approximations in the updates satisfy, almost surely, the following conditions:
\begin{align}
   &\|\bias_k\|_\infty \to 0, \\
   &\sum_{k} \EV\!\left[\step_k^2 \big(\|\bias_k\|_\infty^2 + \|\noise_k\|_\infty^2\big)\right] < \infty, \\
   &\sum_{k} \step_k \|\bias_k\|_\infty < \infty.
   \label{eq:bias-step}
\end{align}
\end{assumption}

\noindent
These conditions are standard in the Robbins–Monro framework \citep{robbins1951stochastic,benaim2006dynamics,hsieh2019finding}. They guarantee that the bias of the stochastic updates vanishes asymptotically, and that the cumulative effect of the noise remains controlled. Together, they ensure that the stochastic perturbations do not prevent convergence of the iterates to the optima of the target objective.

With these assumptions in place, we are ready to restate the main result and present its proof.

\begin{tcolorbox}[colframe=white!, top=2pt,left=2pt,right=2pt,bottom=2pt]
\begin{restatable}[Convergence guarantee in the general trajectory setting (rigorous)]{theorem}{trajGeneralCase_rigorous}
\label{theorem:general_case_convergence_girorous}
Suppose the oracle satisfies \crefrange{asm:precompact}{asm:approximate}, and let the step-sizes $\{\gamma_k\}$ follow the Robbins--Monro rule 
($\sum_k \gamma_k = \infty$, $\sum_k \gamma_k^2 < \infty$).  
Then the iterates $\{\mathbf{Q}^k\}$ generated by \AlgNameShort satisfy
\begin{equation}
    \mathbf{Q}^k \rightharpoonup \mathbf{Q}^* \quad \text{a.s.},
\end{equation}
where $\mathbf{Q}^* \in \argmax_{\mathbf{Q}} \Lfunc(\mathbf{Q})$.
\end{restatable}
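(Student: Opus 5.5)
The plan is to follow the asymptotic-pseudotrajectory (APT) route of \citet{benaim2006dynamics}, as adapted to mirror descent over measures by \citet{hsieh2019finding} and \citet{mertikopoulos2024unified}. The argument runs in the \emph{dual} space. Write $\dual^k \coloneqq \delta \Q(\Qbf^k)$, the first variation of the reference function $\Q = D_{\mathrm{KL}}(\cdot\,\|\,\bar{\Qbf})$, which up to constants is the log-density $\log \frac{d\Qbf^k}{d\bar{\Qbf}}$. Since Bregman divergences of $\Q$ are KL divergences (as shown in the proof of Theorem~\ref{theorem:convex_case_convergence}), the optimality condition of the MD step (Eq.~\ref{eq:MD_step}) gives
\[
\dual^{k} = \dual^{k-1} + \gamma_k\, \delta \Lfunc(\Qbf^{k-1}) + c_k .
\]
This is the Gibbs-tilt relation already used in the proof of Proposition~\ref{prop:expandthenproject}. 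The constant $c_k$ is a normalization term, and the verifier constraint enters as a projection onto $\{\Qbf : \EV_{p_1}[v]=1,\ p_0 = p_0^{k-1}\}$, i.e.\ as the additive term $\log v$, which is $-\infty$ off $\Omega_v$ and so simply restricts support. In the approximate setting the realized increment uses the inexact oracle, so the recursion reads
\[
\dual^{k} = \dual^{k-1} + \gamma_k\bigl(\delta\Lfunc(\Qbf^{k-1}) + \bias_k + \noise_k\bigr).
\]
This is a Robbins--Monro scheme in $L_\infty$ with mean field $\dual \mapsto \delta\Lfunc(\nabla\Q^*(\dual))$, where $\nabla \Q^*$ is the mirror map sending a dual potential back to the normalized, constrained Gibbs measure.

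\textbf{Step 1 (APT).} Build the continuous-time affine interpolation $\dual(t)$ with times $\tau_k = \sum_{j\le k}\gamma_j$, and show it is an APT of the mirror flow $\dot{\dual} = \delta\Lfunc(\nabla\Q^*(\dual))$.
\begin{itemize}
\item The noise is handled by \cref{asm:approximate}: $\sum_k \gamma_k^2\,\EV\|\noise_k\|_\infty^2<\infty$ makes the martingale $\sum_k \gamma_k \noise_k$ converge a.s., by a Doob/Burkholder argument in $L_\infty$.
\item The bias is handled by $\sum_k\gamma_k\|\bias_k\|_\infty<\infty$ together with $\|\bias_k\|_\infty\to0$, which makes the cumulative bias negligible on every window $[\tau_k,\tau_k+T]$.
\item Lipschitz continuity of the mean field on bounded sets comes from the precompactness in \cref{asm:precompact}, which also supplies the compact ambient set that Bena\"im's theorem needs.
\end{itemize}
By \citet[Prop.~4.1]{benaim2006dynamics}, the limit set of $\dual(t)$ is then internally chain-transitive for the mirror flow.

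\textbf{Step 2 (Lyapunov function).} Fix a maximizer $\Qbf^*$ and take the Fenchel coupling $F(\dual) = \Q(\Qbf^*) + \Q^*(\dual) - \langle \dual, \Qbf^*\rangle = D_{\mathrm{KL}}(\Qbf^*\,\|\,\nabla\Q^*(\dual))$. Along the flow,
\[
\tfrac{d}{dt}F = \langle \delta\Lfunc(\Qbf), \Qbf - \Qbf^*\rangle \le \Lfunc(\Qbf)-\Lfunc(\Qbf^*) \le 0 ,
\]
using concavity of $\Lfunc$: entropy is concave, $-\alpha_t D_{\mathrm{KL}}$ is concave, and they are combined with nonnegative weights $\lambda_t$. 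The derivative vanishes only on the argmax set, which forms a strict Lyapunov structure. Since $\Lfunc(\mathrm{argmax})$ has empty interior, \citet[Prop.~6.4]{benaim2006dynamics} says that ICT sets lie in the argmax set.

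\textbf{Step 3 (conclusion).} Combining Steps 1 and 2, the dual iterates accumulate only at potentials whose mirror images maximize $\Lfunc$. Continuity of $\nabla\Q^*$ from $L_\infty$ to measures with the weak topology then gives $\Qbf^k \rightharpoonup \Qbf^*$ a.s. If the maximizer is not unique, convergence to the set is upgraded to convergence to a point using the fact that $F$ evaluated along the iterates converges a.s. for each maximizer, which is a quasi-Fej\'er argument.

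\textbf{Main obstacle.} The hardest part is Step 1, where the constraint has to be incorporated in the dual. The potential $\log v$ is not in $L_\infty$, so I would work on the restricted space of measures on $\Omega_v\times(\text{paths})$ with $p_0$ fixed. On that space the projected mirror map is still well defined and continuous, and the constraint becomes invisible to the dynamics. The remaining check is that the normalization constants $c_k$ cause no drift, which holds because the mirror map is invariant to constant shifts.
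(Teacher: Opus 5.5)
Your architecture matches the paper's. You interpolate the dual iterates, show they form a precompact asymptotic pseudotrajectory of the mirror flow, apply Bena\"im's limit-set theorem, and use a Lyapunov function to put the internally chain-transitive sets inside the argmax. The real difference is the Lyapunov function. The paper uses the objective itself, via the spherical Hellinger--Kantorovich gradient-flow structure and Bena\"im's Cor.~6.6, and then equates stationary points with optima by asserting strict convexity. You use the Fenchel coupling $F(h)=D_{\mathrm{KL}}(\mathbf{Q}^*\,\|\,\nabla\Q^*(h))$ with plain concavity of $\Lfunc$. That is the sounder justification: $\Lfunc$ is concave, but not strictly concave in $\mathbf{Q}$, since it sees only time marginals. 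Your quasi-Fej\'er step also handles non-uniqueness, which the paper ignores.

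\textbf{The gap in Step 2.} Bena\"im's Prop.~6.4 needs strict decrease and empty interior of $V(\Lambda)$ \emph{for the same} $V$. You proved strict decrease for $F$ but checked empty interior for $\Lfunc$. If $\nabla\Q^*$ maps $\Lambda$ onto two distinct maximizers $\mathbf{Q}^*,\mathbf{Q}''$, which is exactly the case your Step 3 anticipates, then $F$ takes every value in $[0,D_{\mathrm{KL}}(\mathbf{Q}^*\|\mathbf{Q}'')]$ along the segment of maximizers joining them. The proposition then yields nothing. There are two repairs.
\begin{enumerate}
\item Take $V=-\Lfunc\circ\nabla\Q^*$, as the paper does, and prove strictness directly. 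Because $\Q^*$ is a conditional log-partition function, its Hessian is a conditional covariance, so $\tfrac{d}{dt}\Lfunc(\mathbf{Q}_t)=\mathbb{E}_{p_0}[\mathrm{Var}_{\mathbf{Q}_t}(\delta\Lfunc(\mathbf{Q}_t)\mid X_0)]\ge 0$. Equality holds only if $\delta\Lfunc(\mathbf{Q}_t)$ is $\mathbf{Q}_t$-a.s.\ a function of $X_0$ alone. That is the first-order optimality condition on the constraint set, hence optimality by concavity. Now $V(\Lambda)$ is a single value, and $F$ is needed only in Step 3.
\item Work on the affine subspace $h_0+\overline{W}$ that the dynamics never leave, where $W$ consists of the functionals $\int_0^1\lambda_t\phi_t(X_t)\,\mathrm{d}t$ plus functions of $X_0$. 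Strict concavity of $\mathcal{G}_t$ forces all maximizers to share $p_t$ for $\lambda$-a.e.\ $t$. Since $\log\frac{d\mathbf{Q}_1}{d\mathbf{Q}_2}\in\overline{W}$, one gets $D_{\mathrm{KL}}(\mathbf{Q}_1\|\mathbf{Q}_2)+D_{\mathrm{KL}}(\mathbf{Q}_2\|\mathbf{Q}_1)=(\mathbb{E}_{\mathbf{Q}_1}-\mathbb{E}_{\mathbf{Q}_2})[\log\frac{d\mathbf{Q}_1}{d\mathbf{Q}_2}]=0$. So the maximizer on this subspace is unique and $F(\Lambda)$ is a point.
\end{enumerate}

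\textbf{The noise step in Step 1 also fails as written.} $L_\infty$ is not $2$-smooth, so no Doob or Burkholder inequality turns $\sum_k\gamma_k^2\mathbb{E}\|U_k\|_\infty^2<\infty$ into a.s.\ norm convergence of $M_n=\sum_{k\le n}\gamma_kU_k$. For example, take $U_k=\sum_j\epsilon_{k,j}\mathbf{1}_{A_j}$ with i.i.d.\ signs $\epsilon_{k,j}$ and a countable partition of path space into non-null cells $A_j$. Then $\|M_m-M_n\|_\infty=\sum_{k=n+1}^m\gamma_k$ a.s., which does not vanish on windows of length $T$. The paper skips the same point (``standard arguments'').

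\cref{asm:precompact} rescues it. On a window, $M_m-M_n=h^m-h^n-\sum_{k=n+1}^m\gamma_k(\delta\Lfunc(\mathbf{Q}^{k-1})+b_k)$ lies, up to $o(1)$, in a fixed compact set, given a continuous mean field. Meanwhile $\langle M_n,\ell\rangle$ is an $L^2$-bounded real martingale, hence a.s.\ convergent, for each $\ell$ in a countable dense subset of $L^1$. On a compact set, vanishing against this separating family forces norm convergence.

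Simpler still, the scalar recursion behind your Step 3 proves the theorem on its own. $1$-smoothness of $\Q^*$ in $\|\cdot\|_\infty$ and concavity give
$F(h^k)\le F(h^{k-1})-\gamma_k(\Lfunc(\mathbf{Q}^*)-\Lfunc(\mathbf{Q}^{k-1}))+\gamma_k\langle b_k+U_k,\mathbf{Q}^{k-1}-\mathbf{Q}^*\rangle+\tfrac{\gamma_k^2}{2}\|\delta\Lfunc(\mathbf{Q}^{k-1})+b_k+U_k\|_\infty^2$.
Robbins--Siegmund then yields a.s.\ convergence of $F(h^k)$ and $\sum_k\gamma_k(\Lfunc(\mathbf{Q}^*)-\Lfunc(\mathbf{Q}^k))<\infty$. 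Precompactness supplies a cluster point that is a maximizer, and your quasi-Fej\'er argument finishes. This route needs neither ICT sets nor vector-valued martingales.
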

\end{tcolorbox}

\renewcommand{\dual}{\mathbf{h}}
\renewcommand{\entropy}{\mathcal{Q}}

\begin{proof}

As in the proof of \cref{theorem:convex_case_convergence}, fix an initial reference measure 
\[
\bar{\mathbf{Q}} \coloneqq \mathbf{Q}^0,
\]
and define the relative entropy functional
\begin{equation}
\label{eq:relative_properties_2}
\Q(\mathbf{Q}) \coloneqq D_{\mathrm{KL}}\!\left(\mathbf{Q}\,\big\|\, {\mathbf{\bar{Q}}} \right).
\end{equation}
Correspondingly, we introduce the initial dual variable
\[
\dual_0 \coloneqq \delta \entropy(\mathbf{Q}^0) 
= -\log \frac{\mathrm{d} \mathbf{Q}^0}{\mathrm{d} {\mathbf{\bar{Q}}}},
\]
where $\frac{\mathrm{d} \mathbf{Q}}{\mathrm{d}{\mathbf{\bar{Q}}}}$ denotes the Radon–Nikodym derivative of $\mathbf{Q}$ with respect to $\bar{\mathbf{Q}}$. This dual representation encodes the convex geometry of the problem.

\paragraph{Continuous-time mirror flow.}  
We now consider the continuous-time mirror flow dynamics
\begin{equation}\label{eq:MF}
\tag{MF}
    \begin{cases}
    \dot{\dual}_t = \delta \G(\Qbf_t), \\
    \Qbf_t = \delta \entropy^\star(\dual_t),
    \end{cases}
\end{equation}
where $\entropy^\star$ denotes the Fenchel conjugate of the relative entropy functional.  
Explicitly, we recall that
\[
\entropy^\star(\dual) 
= \log_{\mathbf{\bar{Q}}} \mathbb{E}\!\left[ e^{\dual} \right],
\]
which follows from the variational characterization of the Kullback–Leibler divergence \citep{hsieh2019finding,hiriart2004fundamentals}.

\paragraph{Discrete-to-continuous interpolation.}  
To connect the discrete algorithm with the flow \eqref{eq:MF}, we introduce an interpolation of the iterates.  
Define the linearly interpolated process $\apt{t}$ by
\begin{equation}
\tag{Int}
\label{eq:interpolation}
\apt{t} = \curr + \frac{t - \curr[\efftime]}{\next[\efftime] - \curr[\efftime]} \big(\next - \curr\big), 
\quad 
\curr = \delta \entropy(\mathbf{Q}^k), \quad 
\curr[\efftime] = \sum_{r=0}^k \alpha_r,
\end{equation}
where $\alpha_r$ are the step sizes.  
This construction yields a continuous-time trajectory $\{\apt{t}\}_{t \geq 0}$ that faithfully tracks the discrete iterates in the limit of vanishing step sizes.

\paragraph{Asymptotic pseudotrajectories.}  
We recall the notion of an asymptotic pseudotrajectory (APT), which provides the precise mathematical bridge between discrete stochastic processes and deterministic flows.

Let $\flowmap$ denote the flow map associated with \eqref{eq:MF}; that is, $\flowmap_h(\mathbf{f})$ is the solution of \eqref{eq:MF} at time $h$ when initialized at $\mathbf{f}$.  

\begin{definition}[Asymptotic Pseudotrajectory (APT)]
\label{def:APT}
A trajectory $\apt{t}$ is called an \ac{APT} of \eqref{eq:MF} if, for every finite horizon $T>0$,
\[
\lim_{t\to\infty} \sup_{0 \leq h \leq T} 
\|\apt{t+h} - \flowmap_h(\apt{t})\|_\infty = 0.
\]
\end{definition}

Intuitively, this condition requires that the interpolated sequence asymptotically shadows the exact flow on every bounded time interval.

\paragraph{Limit set characterization.}  
The central result of \cite{benaim2006dynamics} asserts that the long-term behavior of an APT is governed by the internally chain transitive (ICT) sets of the limiting flow.

\begin{theorem}[APT Limit Set Theorem {\citep[Thm.\ 4.2]{benaim2006dynamics}}]
\label{thm:apt2ict}
If $\apt{t}$ is a precompact \ac{APT} of \eqref{eq:MF}, then almost surely its limit set lies within the set of \ac{ICT} points of the flow.
\end{theorem}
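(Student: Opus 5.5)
The plan is to prove the statement directly from Definition~\ref{def:APT}, following the classical argument of Bena\"im. The only properties of \eqref{eq:MF} it needs are that the flow map $\flowmap$ is well defined and jointly continuous in $(h,\mathbf{f})$ on a neighborhood of the closure $K$ of $\{\apt{t}\}_{t\ge0}$. Precompactness makes $K$ compact in the $L_\infty$ topology (Assumption~\ref{asm:precompact}).

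Write $L \coloneqq \bigcap_{s\ge0}\overline{\{\apt{t}: t\ge s\}}$ for the limit set. It is a decreasing intersection of nonempty compact sets, so it is nonempty and compact. It is connected because $t\mapsto \apt{t}$ is continuous (piecewise linear by \eqref{eq:interpolation}). Moreover, $\mathrm{dist}(\apt{t},L)\to0$; otherwise a subsequence would stay away from $L$, yet by compactness of $K$ it would accumulate at a point of $L$.

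\textbf{Step 1 (invariance).} Fix $x\in L$ with $\apt{t_n}\to x$ and fix $h\in[0,T]$. I split
\[
\|\apt{t_n+h}-\flowmap_h(x)\|_\infty \le \|\apt{t_n+h}-\flowmap_h(\apt{t_n})\|_\infty+\|\flowmap_h(\apt{t_n})-\flowmap_h(x)\|_\infty .
\]
The first term vanishes by the APT property and the second by continuity of $\flowmap_h$, so $\flowmap_h(x)\in L$. For backward invariance, extract by compactness a subsequence with $\apt{t_n-T}\to y\in L$. The same estimate gives $\flowmap_T(y)=x$. Hence $L$ is invariant, and $\flowmap$ restricted to $L$ is a flow on a compact set.

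\textbf{Step 2 (internal chain transitivity).} Fix $x,y\in L$, $\varepsilon>0$ and $T>0$. I need points $x=z_0,z_1,\dots,z_m=y$ in $L$ and times $T_i\ge T$ with $\|\flowmap_{T_i}(z_i)-z_{i+1}\|_\infty<\varepsilon$. The order is as follows.
\begin{enumerate}
\item Uniform continuity of $\flowmap$ on $[T,2T]\times K'$ ($K'$ a compact neighborhood of $L$) gives a $\delta<\varepsilon/3$ such that inputs within $\delta$ yield outputs within $\varepsilon/3$.
\item Choose $s_0$ so large that for $t\ge s_0$ two things hold: $\mathrm{dist}(\apt{t},L)<\delta$, and $\sup_{h\in[0,2T]}\|\apt{t+h}-\flowmap_h(\apt{t})\|_\infty<\varepsilon/3$.
\item Pick $a\ge s_0$ with $\apt{a}$ within $\delta$ of $x$, and $b\ge a+T$ with $\apt{b}$ within $\delta$ of $y$. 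Partition $[a,b]$ into consecutive pieces $a=s_0'<s_1'<\dots<s_m'=b$, each of length in $[T,2T]$.
\item Let $z_i\in L$ be a $\delta$-close point to $\apt{s_i'}$, taking $z_0=x$ and $z_m=y$. The triangle inequality (continuity term, then APT term, then the $\delta$ term) gives $\|\flowmap_{T_i}(z_i)-z_{i+1}\|_\infty<\varepsilon$, with $T_i=s_{i+1}'-s_i'$.
\end{enumerate}
Steps 1 and 2 together show that $L$ is internally chain transitive, which is the claim. In the stochastic setting, the APT property holds almost surely, so the conclusion holds almost surely as well.

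\textbf{Main obstacle.} The delicate point is the continuity/well-posedness input in the infinite-dimensional $L_\infty$ setting. Specifically, I need joint continuity of $\flowmap$ and uniform continuity on $[0,2T]\times K'$, which compactness in finite dimensions would give for free. I would first try to obtain this from Lipschitz continuity of the vector field $\mathbf{h}\mapsto\delta(-\Lfunc)(\delta\Q^\star(\mathbf{h}))$ on bounded sets, via a Gr\"onwall estimate. The map $\delta\Q^\star$ is the Gibbs map $\mathbf{h}\mapsto e^{\mathbf{h}}\bar{\mathbf{Q}}/\EV_{\bar{\mathbf{Q}}}[e^{\mathbf{h}}]$, which is locally Lipschitz in $L_\infty$. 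The Gr\"onwall estimate yields uniform continuity on compacta, and precompactness (Assumption~\ref{asm:precompact}) supplies the compact $K$.
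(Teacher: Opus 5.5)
Your argument is correct. The paper itself gives no proof of this statement: it imports it from Bena\"im, where it holds for an arbitrary continuous (semi)flow on a metric space, and it simply posits that \eqref{eq:MF} has a flow map $\flowmap$.

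You instead give a self-contained version of the classical argument. You first show invariance of $L$ by transporting the APT estimate through $\flowmap_h$. You then build an explicit $(\varepsilon,T)$-chain by reading off the trajectory on $[a,b]$ and snapping each node to $L$. What your route buys is transparency about what is actually used: continuity of each $\flowmap_h$ on $K$ (Step 1), and equicontinuity of $\{\flowmap_h\}_{h\in[T,2T]}$ on $K$ (Step 2). Continuity in $h$ is never needed. What the citation buys is that this regularity is folded into the standing hypothesis that $\flowmap$ is a continuous flow. That is also how you should treat your ``main obstacle'': the statement does not ask you to derive it.

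\textbf{Slip to fix.} In the infinite-dimensional space $L_\infty$, a compact set has empty interior, so no compact neighborhood $K'$ of $L$ exists. Use $K=\overline{\{\apt{t}:t\ge 0\}}$ instead. It is compact, it contains $L$, and it contains every $\apt{s_i'}$. So both points compared in item 4 lie in $K$, and the $\varepsilon/3$ estimate goes through unchanged.

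\textbf{On the Gr\"onwall route.} If you do want to check the regularity for \eqref{eq:MF}, your idea works at the level of differences. Write $p_t^{\mathbf{h}}$ for the time-$t$ marginal of the Gibbs measure $\delta\mathcal{Q}^\star(\mathbf{h})$. Then $|\log p_t^{\mathbf{h}}-\log p_t^{\mathbf{h}'}|\le 2\|\mathbf{h}-\mathbf{h}'\|_\infty$. Hence the difference of the vector fields at $\mathbf{h}$ and $\mathbf{h}'$ is bounded in $L_\infty$ by $C\|\mathbf{h}-\mathbf{h}'\|_\infty$, with $C$ depending only on $\{\lambda_t\}$ and $\{\alpha_t\}$. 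This holds even though $\delta\Lfunc$ itself contains $-\log p_t$ terms and need not be bounded. Gr\"onwall then gives exactly the equicontinuity your Step 2 consumes.
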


\paragraph{Reduction of the convergence proof.}  
With these tools, the convergence analysis reduces to verifying two key claims:
\begin{enumerate}
    \item[\textbf{(C1)}] Under \crefrange{asm:precompact}{asm:approximate}, the interpolated sequence $\{\apt{t}\}$ indeed forms a precompact APT of \eqref{eq:MF}.
    \item[\textbf{(C2)}] The set of \ac{ICT} points of the flow \eqref{eq:MF} coincides with the set of stationary points of $\Lfunc$.
\end{enumerate}

\paragraph{Verification of Claim (C1).}  
Precompactness follows directly from Assumption \ref{asm:precompact}, which guarantees uniform tightness of the sequence of measures and hence compactness of their trajectories in the weak topology. In addition, standard arguments from stochastic approximation \citep{hsieh2019finding,benaim2006dynamics,mertikopoulos2024unified} yield the following quantitative estimate: for every finite horizon $T>0$, there exists a constant $C(T)>0$ such that
\[
\sup_{0 \leq h \leq T} 
\|\apt{t+h} - \flowmap_h(\apt{t}) \| 
\;\leq\; 
C(T) \big[ \Delta(t-1, T+1) + b(T) + \gamma(T) \big],
\]
where $\Delta(t-1, T+1)$ denotes the cumulative effect of noise over the interval $[t-1, t+T+1]$, while $b(T)$ and $\gamma(T)$ capture, respectively, the bias and step-size contributions. This bound quantifies the deviation of the interpolated process from the deterministic mirror flow \eqref{eq:MF}.

\medskip
\noindent\textbf{APT approximation.}  
Under the noise and bias conditions of Assumption \ref{asm:approximate}, both perturbations vanish asymptotically:
\[
\lim_{t \to \infty} \Delta(t-1, T+1) \;=\; 
\lim_{t \to \infty} b(T) \;=\; 0,
\]
uniformly over bounded horizons $T$. Consequently, the discrepancy in the above bound vanishes in the limit, and the interpolated process $\apt{t}$ shadows the continuous-time flow arbitrarily well.

\medskip
Altogether, these arguments show that $\apt{t}$ is indeed a precompact asymptotic pseudotrajectory of the mirror flow.

\paragraph{Verification of Claim (C2).}  
The flow \eqref{eq:MF} is precisely the continuous-time mirror descent dynamics associated with $\G$, which is known to be a \emph{gradient flow} in the spherical Hellinger–Kantorovich geometry \citep{mielke2025hellinger}.  
As such, $\G$ acts as a strict Lyapunov function for the system: along any non-stationary trajectory, $\frac{d}{dt}\G(\Qbf_t) < 0$.  
By \cite[Corollary 6.6]{benaim2006dynamics}, every precompact APT converges to the set of stationary points of the Lyapunov function.  
Since the objective function $\Lfunc$ is the relative entropy, and hence strictly convex, its stationary point coincide with its global minimizer.  

\paragraph{Conclusion.}  
Combining (C1) and (C2) with \cref{thm:apt2ict}, we deduce that the interpolated process $\apt{t}$ converges almost surely to the set of minimizers of $\G$, which readily implies that the original sequence $\{\mathbf{Q}^k\}$ inherits the same convergence guarantee.  
\end{proof}

\newpage

\section{Detailed Example of Algorithm Implementation}
\label{sec:alg_implementation}
In this section we provide comprehensive pseudocode of an example implementation for the two \FineTuningSolver subprocedure in Alg. \ref{alg:expand_then_project}. It is implemented using a variation of Adjoint Matching (AM) which is introduced comprehensively in \citet{domingo2024adjoint}, although we provide pseudocode below for completeness. We note that in principle one could substitute for any other linear fine-tuning method.

Before presenting the implementations, we shortly clarify some relevant notation. The algorithm makes explicit use of the interpolant schedules $\kappa_t$ and $\omega_t$ introduced in \eqref{eq:flow_diff_eq}. We note that in flow model literature they are more commonly known as $\alpha_t$ and $\beta_t$. We denote by $u^\textrm{pre}$ the velocity field corresponding to the pre-trained policy $\pi^\textrm{pre}$, and likewise use $u^\textrm{fine}$ for the velocity field corresponding to the fine-tuned policy. In short, \FineTuningSolver first samples trajectories, which are then used to approximate the solution of a surrogate ODE whose marginals are used as regression targets for the control policy (see \citet{domingo2024adjoint} Section 5 for a full discussion). We note that \FineTuningSolver can be used for objectives with and without trajectory rewards, simply by setting trajectory rewards to zero.

\begin{algorithm}[H]
\caption{Adjoint Matching for fine-tuning Flow Matching models (\FineTuningSolver)}
\begin{algorithmic}[1]
\Require $u^{\text{pre}}$: pre-trained FM velocity field, $\{\nabla f_t\}_{t \in [0,1]}$: gradients of trajectory rewards,$\{\lambda_t\}_{t\in[0,1]}$: (optional) trajectory reward weights, $\gamma$: fine-tuning strength 
\State Initialize fine-tuned vector fields: $u^{\text{finetune}} = u^{\text{pre}}$ with parameters $\theta$.
\For{$n \in \{0, \ldots, N-1\}$}
  \State Sample $m$ trajectories $\bm{X} = (X_t)_{t \in \{0,\ldots,1\}}$ with memoryless noise schedule
  $\sigma(t) = \sqrt{2 \kappa_t \!\left(\tfrac{\dot\omega_t}{\omega_t}\kappa_t - \dot\kappa_t\right)}$, e.g.:
  \begin{equation}
    X_{t+h} = X_t + h\Big( 2 u_\theta^{\text{finetune}}(X_t, t) - \tfrac{\dot\omega_t}{\omega_t}X_t \Big) 
      + \sqrt{h}\,\sigma(t)\,\varepsilon_t, 
      \quad \varepsilon_t \sim \mathcal{N}(0,I), \quad X_0 \sim \mathcal{N}(0,I). \tag{51}
  \end{equation}
  \State For each trajectory, solve the \emph{lean adjoint ODE} backwards in time from $t=1$ to $0$, e.g.:
  \begin{equation}
    \tilde{a}_{t-h} = \tilde{a}_t + h\,\tilde{a}_t^\top \nabla_{X_t} 
    \Big( 2 v^{\text{base}}(X_t, t) - \tfrac{\dot\omega_t}{\omega_t}X_t \Big) - h \gamma \lambda_t\nabla_{X_t} f_t(X_t),
    \quad \tilde{a}_1 = \gamma \lambda_1 \nabla_{X_1} f_1(X_1). \tag{52}
  \end{equation}
  \State Note that $X_t$ and $\tilde{a}_t$ should be computed without gradients, i.e.,
  $X_t = \texttt{stopgrad}(X_t), \ \tilde{a}_t = \texttt{stopgrad}(\tilde{a}_t)$.
  \State For each trajectory, compute the following Adjoint Matching objective:
  \begin{equation}
    \mathcal{L}_{\text{Adj-Match}}(\theta) 
      = \sum_{t \in \{0,\ldots,1-h\}} 
      \left\| \tfrac{2}{\sigma(t)} \Big(v_\theta^{\text{finetune}}(X_t, t) - u^{\text{base}}(X_t, t)\Big) 
        + \sigma(t)\,\tilde{a}_t \right\|^2. \tag{53}
  \end{equation}
  \State Compute the gradient $\nabla_\theta \mathcal{L}(\theta)$ and update $\theta$ using favorite gradient descent algorithm.
\EndFor
\label{alg:AM_oracle}
\end{algorithmic}
\textbf{Output:} Fine-tuned vector field $v^{\text{finetune}}$
\end{algorithm}

Crucially, we employ the fine-tuning oracle in Alg. \ref{alg:AM_oracle} also to implement the projection step within Sec. \ref{sec:algorithm}, as indicated within Alg. \ref{alg:expand_then_project}. Moreover, in the case of a non-differentiable (weak or strong) verifier, the projection step can be implemented via a $0$-th order RL-based fine-tuning method, e.g.,~\citep{fan2023dpok, black2023training}, which induce the same closed-form solution as Alg. \ref{alg:AM_oracle}.

\newpage

\section{Experimental Details}
\label{sec:experimental_detail}
\subsection{Illustrative Examples Experimental Details}
Numerical values in all plots shown within Sec. \ref{sec:experiments} are means computed over diverse runs of \AlgNameShort via $5$ different seeds. Error bars correspond to $95\%$ Confidence Intervals. For the following comparisons, we aimed to tune each algorithm parameters so that the method would work well in the specific illustrative example.

\mypar{Pre-trained models} The pre-trained models appearing in Sec. \ref{sec:experiments}, in the context of illustrative examples, are learned on synthetically generated data, via standard learning procedures. In particular, in Sec. \ref{sec:experiments} we always show samples generated by such pre-trained models.  

\mypar{Global Flow Expansion}
\begin{itemize}
    \item For \AlgNameShortGlobal, we use $\lambda_t = 0$ if $t > 1-0.05$, and $\lambda_t = 1.2$ otherwise, $\gamma_k = \frac{1.5}{(1 + 3(k-1))}$, $\eta =2$ and $K=10$. 
    \item For \AlgNameShortCONSTR we employ $\eta = 2$.
    \item For \AlgNameShortSMEME we employ $\gamma_k = \frac{0.345}{(1 + 3(k-1))}$ and $K=10$ and use $s^\pi_1(x) = s^\pi_{1-\epsilon}(x)$ with $\epsilon = 0.02$  as discussed in Sec. \ref{sec:algorithm}.
\end{itemize}

\mypar{Local Flow Expansion}
The models used act on a $2$-dim state $(x_1,x_2)$, of which is shown only the $x_1$ coordinate in the process-level figures reported in Sec. \ref{sec:experiments}. Since we use as oracle AM, which requires differentiable gradient, we consider a binary verifier (shown in Fig. \ref{fig:toy_local_c} in grey), which we smoothen, thus rendering it differentiable and approximate. Notice that differentiability is not required by \AlgNameShort, but is rather an implementation detail due to the specific oracle used (\ie AM~\citep{domingo2024adjoint}, see Sec. \ref{sec:alg_implementation} for further details). In particular, there exist several analogous oracles that do not require function differentiability~\citep[\eg][]{fan2023dpok}.
\begin{itemize}
    \item For \AlgNameShortLocal we employ $K=8$, $\lambda_t = 0$ if $t > 1-0.015$, and proportional with the process variance, \ie $\lambda_t = \sqrt{2(\kappa_t(\tfrac{\dot\omega_t}{\omega_t}\kappa_t - \dot\kappa_t))}$, otherwise; $\gamma_k = 0.3$, $\eta_k = 0.1$.
    \item For \AlgNameShortFDC we use $K=8$, $\gamma_k = 0.06$ and use $s^\pi_1(x) = s^\pi_{1-\epsilon}(x)$ with $\epsilon = 0.02$ as discussed in Sec. \ref{sec:algorithm}.
\end{itemize}

\subsection{Conformer VENDI}\label{sec:vendi}

We begin with a detailed explanation of our diversity metric: conformer VENDI. In general, VENDI \citep{friedman2022vendi} is a diversity metric operating on arbitrary inputs based on a pairwise distance kernel $k: \mathcal{X} \times \mathcal{X} \rightarrow [0, 1]$. For a list of inputs $x_1, \ldots, x_n$ and a symmetric pairwise distance kernel $k$, VENDI is defined as:

\begin{equation}
    VS(x_1, \ldots, x_n) = \exp\big(- \sum_{i = 1}^n \lambda_i \log \lambda_i\big)
\end{equation}

where $\lambda_i$ are eigenvalues of the distance matrix $K$ with $K_{ij} = k(x_i, x_j)$. In our work, we use a molecular fingerprinting method combined with a kernel simply defined as the Euclidean distance between fingerprints, thereby inducing a kernel on molecules themselves. The particular fingerprinting method is defined as the sorted list of pairwise atomic distances. Formally, for a molecule with $N$ atoms at positions $a_1, \ldots a_N$ we first compute the matrix of pairwise distances $A_{ij} = \lVert a_i - a_j\rVert_2, \; 1 \leq i < j\leq N$, which is then sorted as $\Tilde{A}_{i_1j_1} \ldots \Tilde{A}_{i_{N(N-1)/2}j_{N(N-1)/2}}$ yielding the fingerprint $\Tilde{A} \in \mathbb{R}^{N(N-1)/2}$.

\subsection{PCA Projection for Fig. \ref{fig:qm9_a}}

In this section we explain the dimensionality reduction method used to generate the plot in \ref{fig:qm9_a}. We first generated 25000 molecules from both the pre-trained model $\pi^\textrm{pre}$ (yielding $D^\textrm{pre}$ and the fine-tuned model $\pi^K$ (yielding $D^\textrm{finetuned}$), which was computed by the \AlgNameShortLocal algorithm on $\pi^\textrm{pre}$ for $K=5$ iterations with $\alpha = 9$. We then fingerprinted each set of molecules using the method described above, and fit a $1$-dim PCA on the fingerprints for $D^\textrm{pre}$ using \textsc{scikit-learn} \citep{scikit-learn}, which was then used to transform both $D^\textrm{pre}$ and $D^\textrm{finetuned}$ into $1$-dimensional vectors. Fig. \ref{fig:qm9_a} corresponds to a histogram plot of each of the resulting sets of vectors.

\subsection{Validity Computation in Molecular Experiments}

In the context of our experiments on molecules the concept of validity is defined through a pipeline of several checks, defined below. Our validity function passes through each one sequentially, returning an invalid result if any fail, and a valid result only if all checks pass.

\begin{enumerate}
    \item First, we attempt to sanitize each molecule using RDKit's \citep{rdkit} \textsc{Chem.SanitizeMol} function. As an added check, we test if it is possible to convert the molecule to and back from SMILES \citep{Weininger1988} notation.
    \item We then iterate over each atom in the molecule, checking for any implicit hydrogens (our model must generate explicit hydrogens as FlowMol \citep{dunn2024mixed} does) or any radical electrons which would make the molecule invalid.
    \item Finally we perform our weak verifier check, filtering out molecules for which any two atoms are closer than 0.9Å. Details on this weak verifier are explained below.
\end{enumerate}

The final validity check is evaluate the weak verifier on molecules that pass the previous steps. The weak verifier itself is evaluated by first computing the vector of pairwise distances between atoms $\Tilde{A}$ (see discussion in Section \ref{sec:vendi} above), then taking the minimum element $\Tilde{A}_0$ and checking if it is lower than $0.9$Å, in which case a molecule is classified as invalid. Including this check in the validity function guarantees by construction that the weak verifier satisfies Definition \ref{definition:weak_verifier}, since failing the weak verifier check implies failing the validity check as well.

\subsection{Practical Details for Experiments on Molecules}

In this section we discuss the practical choices behind the molecular design experiments discussed in Section \ref{sec:experiments}. We start with a discussion of hyperparameter settings, followed by some implementation techniques adapting the verifier feedback for a first-order solver, and finally discuss hardware and platform used for training.

\subsubsection{Hyperparameter choices}

For our experiments on molecules we use the FlowMol CTMC model from \citet{dunn2024mixed} trained on the QM9 dataset as a pre-trained model. We run each algorithm (\AlgNameShortLocal and \textsc{FDC}) with the following parameters:

\begin{itemize}
    \item $K = 5$ iterations
    \item Regularization strength of $\alpha = 9$
    \item Decreasing stepsize of  $\gamma_k = \frac{\gamma_0}{1 + k}$ with $\gamma_0 = 0.00001$
    \item For the trajectory reward weighting (\AlgNameShortLocal only) we use $\lambda_t = \sigma_t = \sqrt{2\kappa_t \big(\tfrac{\dot\omega}{\omega}\kappa_t - \dot\kappa_t\big)}$, ensuring $\lambda_t \rightarrow 0$ as $t\rightarrow 1$ for stability as discussed at the end of section \ref{sec:algorithm}
    \item For both, we clip the score near the end of the trajectory as $s^\pi_t = s^\pi_{\{\min{t, 1 - \epsilon}\}}$ for $\epsilon = 0.005$. 
    \item We fix the number of atoms in generated molecules (for model training and metric calculation) to $10$, in order to simplify metric calculations.
\end{itemize}

When using Adjoint Matching (AM) \citep{domingo2024adjoint} to implement the subroutines of any algorithm we use $N = 4$ iterations, we sample a batch of $m = 4$ trajectories of length $40$ at each iteration and update the parameters $\theta$ using Adam \cite{kingma2015} with a learning rate of $0.00055$. We note that since FlowMol CTMC is a mixed categorical and continuous flow model, we only use AM to update the parameters corresponding to the continuous outputs of the model, \ie the atom positions.

\subsubsection{Hyperparameter Choices for Experiments on GEOM}\label{sec:hypergeom}

In order to test the performance of our model in a more practical setting, we performed additional experiments using the GEOM dataset. We again use a FlowMol CTMC model from \citet{dunn2024mixed} as a pre-trained model, however this time using checkpoints from training on the GEOM dataset. Note that for simpler hyperparameter search we use the reparametrization discussed in section \ref{sec:reparametrization}. The optimal parameter set for each algorithm (\AlgNameShortLocal, \AlgNameShortGlobal, \AlgNameShortExplore, and \AlgNameShortFDC) is not identical in this setting, therefore we first report the hyperparameters in common before listing the differences for each algorithm below:

\textbf{Common hyperparameters for \AlgNameShortLocal, \AlgNameShortGlobal, \AlgNameShortExplore, \AlgNameShortFDC}:
\begin{itemize}
    \item $K = 3$ iterations
    \item Constant (adjusted) stepsize $\Tilde{\gamma_k} = \Tilde{\gamma_0}$, although the magnitude $\Tilde{\gamma_0}$ differs for each algorithm 
    \item For the trajectory reward weighting (\AlgNameShortLocal, \AlgNameShortGlobal and \AlgNameShortExplore) we use $\lambda_t = \sigma_t = \sqrt{2\kappa_t \big(\tfrac{\dot\omega}{\omega}\kappa_t - \dot\kappa_t\big)}$, ensuring $\lambda_t \rightarrow 0$ as $t\rightarrow 1$ for stability as discussed at the end of section \ref{sec:algorithm}
    \item For all methods, we clip the score near the end of the trajectory as $s^\pi_t = s^\pi_{\min\{t, 1 - \epsilon\}}$ for $\epsilon = 0.005$. 
    \item We fix the number of atoms in generated molecules (for model training and metric calculation) to $30$, in order to simplify metric calculations.
\end{itemize}

\textbf{Per-algorithm hyperparameter variations:}

\begin{itemize}
    \item \AlgNameShortLocal: 
     \begin{itemize}
         \item $\beta = 0.4$
         \item $\Tilde{\gamma}_k = 0.0005$ (constant)
         \item $\eta_k = 5.0$ (constant)
     \end{itemize}
    \item \AlgNameShortExplore:
    \begin{itemize}
        \item $\beta = 0.0$
        \item $\Tilde{\gamma}_k = 0.0002$ (constant)
    \end{itemize}
    \item \AlgNameShortFDC:
    \begin{itemize}
        \item $\beta = 0.9$
        \item $\Tilde{\gamma}_k = 0.0005$ (constant)
    \end{itemize}
\end{itemize}

When using Adjoint Matching (AM) \citep{domingo2024adjoint} to implement the subroutines of any algorithm usign the GEOM FlowMol model we use $N = 4$ iterations, we sample a batch of $m = 1$ trajectories of length $40$ at each iteration and update the parameters $\theta$ using Adam \cite{kingma2015}, with a learning rate of $0.0001$. We note that since FlowMol CTMC is a mixed categorical and continuous flow model, we only use AM to update the parameters corresponding to the continuous outputs of the model, \ie the atom positions.

\subsubsection{Smoothing the Weak Verifier}

Since we use Adjoint Matching for all fine-tuning tasks we need all rewards to be differentiable. While our weak verifier is formally defined as $v(x) = 1 \iff$ $x$ respects the minimum atom separation bound of $0.9$Å, we use the following differentiable approximation using a sigmoid soft indicator function:

\begin{equation}
    v(x) = \frac{1}{N(N-1)/2}\sum_{i = 1}^{N (N-1) /2}\frac{\exp(\Tilde{A}_i - 0.9)}{\exp(\Tilde{A}_i - 0.9) + 1}
\end{equation}

where $\Tilde{A}_i$ are the pairwise atomic distances introduced in Section \ref{sec:vendi} above. This alternative verifier is differentiable and provides gradient feedback everywhere and therefore can be used in Adjoint Matching.

\subsubsection{Ablation Study}
\label{sec:ablationstudy}

We report an ablation study of the key hyperparameters $\alpha$, $\gamma_k$, and $\eta_k$. Notice first that since \AlgNameShortLocal is a generalisation of both \AlgNameShortExplore and \AlgNameShortGlobal, we recover \AlgNameShortExplore by setting $\eta_k = 0$, and recover \AlgNameShortGlobal by instead setting $\alpha = 0$. The following plot shows a comparison of the Pareto fronts for each method, all compared against \AlgNameShortFDC. Furthermore, at the end of this section, we report results of running the projection step alone (effectively setting $\gamma_k = 0$).

\begin{figure}[ht]
    \centering
    \includegraphics[width=0.95\linewidth]{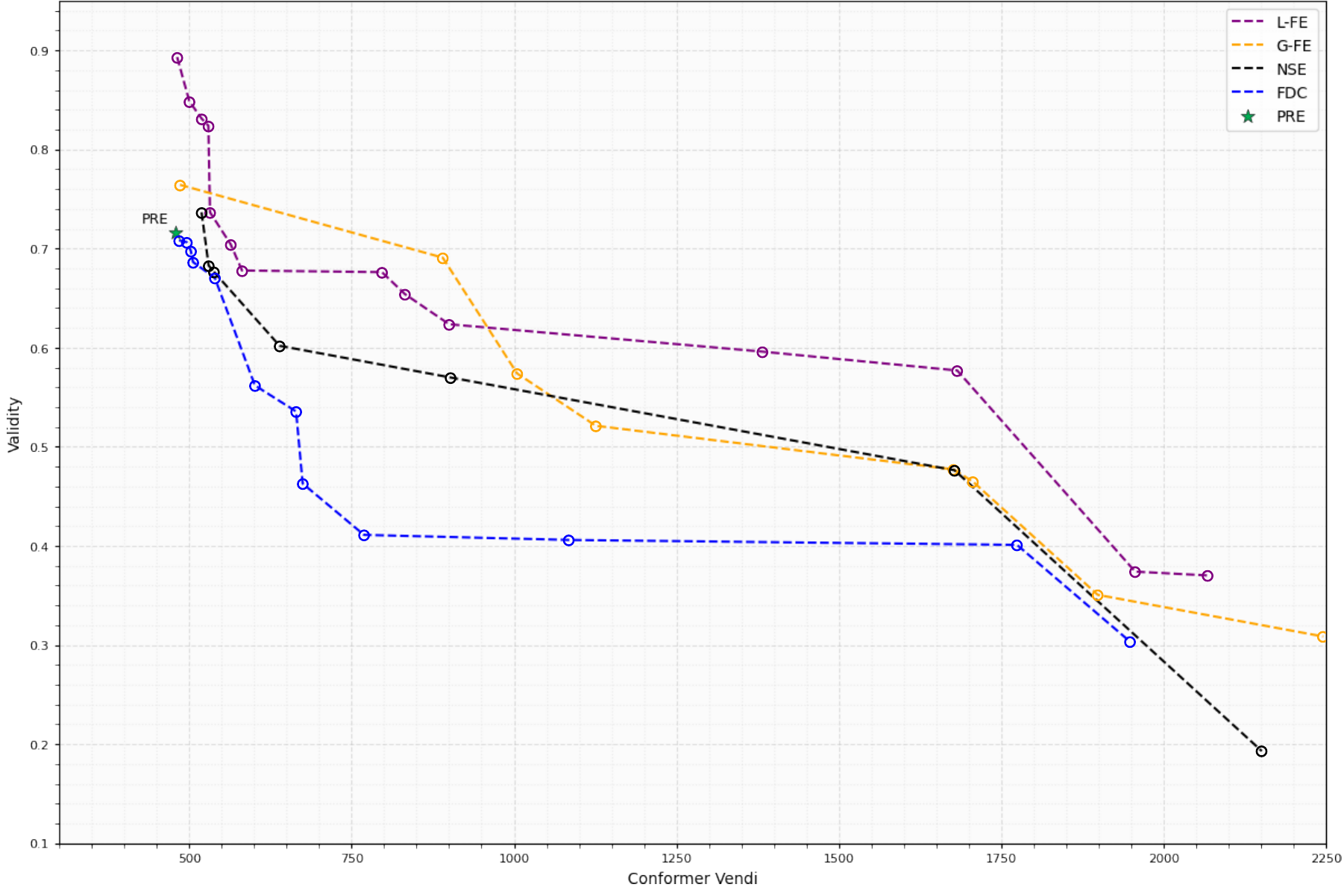}
    \caption{Comparison of different parametrization of \AlgNameShortLocal, \AlgNameShortGlobal, \AlgNameShortExplore, \AlgNameShortFDC, for $K=3$. We compute Conformer Vendi and validity over 5000 samples, and average results over 8 runs for each parameterization with 8 different seeds.}
    \label{fig:ablations}
\end{figure}

In Fig. \ref{fig:ablations} we report all the Pareto dominant points chosen for each method evaluated in the following ranges of hyperparameters (using the reparametrized notation from Appendix \ref{sec:reparametrization}):

\begin{itemize}
    \item $\beta$: $0.0$, $0.1$, $0.2$, $0.3$, $0.4$, $0.5$, $0.6$, $0.7$, $0.8$, $0.9$, $0.95$
    \item $\Tilde{\gamma_k}$ (constant): $0.0001$, $0.0002$, $0.0003$, $0.0004$, $0.0005$, $0.001$
    \item $\eta_k$ (constant): $0.0$, $0.1$, $0.5$, $1.0$, $2.0$, $5.0$
\end{itemize}

For each combination of the hyperparameters above, we run $K=3$ iterations of each method (\AlgNameShortLocal, \AlgNameShortGlobal and \AlgNameShortExplore) and average the results across 8 seeds. Conformer Vendi and validity are always computed for batches of 5000 samples. For all other hyperparameters refer to the previous section \ref{sec:hypergeom} discussing general hyperparameter choices for the GEOM model. To build the plot above, we simply drop all points that are Pareto dominated (i.e. same/greater Conformer Vendi and same/greater validity) by some other point, and plot the remaining points. We also drop all points with Conformer Vendi lower than the pre-trained model. The results shown in Fig. \ref{fig:ablations} clearly demonstrate the ability of each method to trade off validity for higher diversity (measured by Conformer Vendi). However notice that only \AlgNameShortGlobal and \AlgNameShortLocal can significantly increase validity due to their use of verifier information. Overall, notice that both \AlgNameShortLocal and \AlgNameShortGlobal outperform all other methods almost uniformly, and \AlgNameShortExplore uniformly outperforms \AlgNameShortFDC (both unconstrained exploration algorithms). \AlgNameShortLocal is especially effective at retaining or even increasing validity: it significantly dominates all other methods in terms of validity in the range of $475$ to $550$ Conformer Vendi (top left of Fig. \ref{fig:ablations}).

\subsubsection{Tables of results}

We report the numerical results (value and confidence interval)  for each point in Fig. \ref{fig:ablations} in the tables below. For each method we report the mean and confidence interval corresponding to each point from left to right, and report the parameterization ($\beta$, $\Tilde{\gamma}_0$ and $\eta$ if applicable) used to generate the point. We acknowledge the size of the confidence intervals increases drastically in the regime of more exploratory points (higher Conformer Vendi): increasing stability of exploration methods in that regime remains an open problem. Still, we note that \AlgNameShortLocal dominates other methods most significantly when the Conformer Vendi increase is modest (less than $550$ Conformer Vendi), and in that regime the confidence intervals are reasonably concentrated.

\begin{table}[ht]
  \centering
  \caption{Values for \AlgNameShortLocal reported in Fig \ref{fig:ablations} with 95\% confidence intervals}
  \label{tab:ciablations1}
  \begin{tabular}{>{}c >{}c >{}c >{}c >{}c}
    \toprule    
    $\beta$ & $\Tilde{\gamma}_0$ & $\eta_0$ & Mean Validity (95\% CI) & Mean Conformer Vendi (95\% CI) \\
    \midrule
    0.3 & 0.0002 & 5.0 & 0.89 (0.87 $\pm$ 0.91) & 481.35 (460.24 $\pm$ 500.42)\\
    0.6 & 0.0005 & 5.0 & 0.85 (0.79 $\pm$ 0.89) & 500.11 (460.72 $\pm$ 546.91)\\
    0.4 & 0.0005 & 5.0 & 0.83 (0.76 $\pm$ 0.89) & 519.05 (454.45 $\pm$ 585.99)\\
    0.1 & 0.0004 & 5.0 & 0.82 (0.70 $\pm$ 0.90) & 529.47 (465.92 $\pm$ 629.29)\\
    0.2 & 0.0004 & 5.0 & 0.74 (0.53 $\pm$ 0.88) & 531.90 (452.14 $\pm$ 633.92)\\
    0.2 & 0.0005 & 5.0 & 0.70 (0.53 $\pm$ 0.86) & 563.75 (472.19 $\pm$ 683.79)\\
    0.2 & 0.0004 & 2.0 & 0.68 (0.50 $\pm$ 0.83) & 581.26 (467.44 $\pm$ 728.66)\\
    0.3 & 0.0005 & 5.0 & 0.68 (0.46 $\pm$ 0.85) & 796.64 (475.55 $\pm$ 1371.21)\\
    0.2 & 0.0005 & 2.0 & 0.65 (0.43 $\pm$ 0.84) & 831.91 (496.49 $\pm$ 1423.91)\\
    0.1 & 0.0004 & 2.0 & 0.62 (0.39 $\pm$ 0.82) & 900.08 (515.31 $\pm$ 1591.61)\\
    0.1 & 0.0005 & 5.0 & 0.60 (0.33 $\pm$ 0.83) & 1382.46 (516.37 $\pm$ 2493.28)\\
    0.4 & 0.0005 & 0.5 & 0.58 (0.31 $\pm$ 0.81) & 1682.32 (560.86 $\pm$ 3320.25)\\
    0.1 & 0.0005 & 2.0 & 0.37 (0.13 $\pm$ 0.64) & 1956.33 (684.10 $\pm$ 3322.31)\\
    0.1 & 0.0005 & 0.5 & 0.37 (0.12 $\pm$ 0.64) & 2068.02 (866.88 $\pm$ 3366.56)\\

    \bottomrule
  \end{tabular}

  \vspace{1ex}
  \footnotesize{  Values are mean (95\% confidence interval). Confidence intervals computed using bootstrapping and shown to two decimal places.}
\end{table}

\begin{table}[ht]
  \centering
  \caption{ Values for \AlgNameShortGlobal reported in Fig \ref{fig:ablations} with 95\% confidence intervals}
  \label{tab:ciablations2}
  \begin{tabular}{>{}c >{}c >{}c >{}c}
    \toprule    
    $\Tilde{\gamma}_0$ & $\eta_0$ & Mean Validity (95\% CI) & Mean Conformer Vendi (95\% CI) \\
    \midrule
    0.0002 & 0.5 & 0.76 (0.69 $\pm$ 0.83) & 485.65 (442.52 $\pm$ 524.71)\\
    0.0005 & 5.0 & 0.69 (0.46 $\pm$ 0.86) & 890.19 (434.36 $\pm$ 1731.52)\\
    0.0004 & 0.5 & 0.57 (0.33 $\pm$ 0.79) & 1004.79 (531.74 $\pm$ 1790.78)\\
    0.0004 & 1.0 & 0.52 (0.27 $\pm$ 0.77) & 1125.94 (521.23 $\pm$ 2182.77)\\
    0.0004 & 0.1 & 0.48 (0.22 $\pm$ 0.72) & 1676.67 (533.47 $\pm$ 2888.04)\\
    0.0005 & 1.0 & 0.46 (0.22 $\pm$ 0.71) & 1706.53 (664.86 $\pm$ 3100.05)\\
    0.0005 & 2.0 & 0.35 (0.12 $\pm$ 0.60) & 1898.72 (796.05 $\pm$ 3221.32)\\
    0.0005 & 0.5 & 0.31 (0.08 $\pm$ 0.56) & 2245.62 (1135.37 $\pm$ 3500.43)\\
    \bottomrule
  \end{tabular}

  \vspace{1ex}
  \footnotesize{  Values are mean (95\% confidence interval). Confidence intervals computed using bootstrapping and shown to two decimal places.}
\end{table}

\begin{table}[ht]
  \centering
  \caption{ Values for \AlgNameShortExplore reported in Fig \ref{fig:ablations} with 95\% confidence intervals}
  \label{tab:ciablations3}
  \begin{tabular}{>{}c >{}c >{}c >{}c >{}c}
    \toprule    
    $\beta$ & $\Tilde{\gamma}_0$ & Mean Validity (95\% CI) & Mean Conformer Vendi (95\% CI) \\
    \midrule
    0.0 & 0.0002 & 0.74 (0.68 $\pm$ 0.81) & 518.92 (505.70 $\pm$ 531.90)\\
    0.4 & 0.0004 & 0.68 (0.58 $\pm$ 0.77) & 529.85 (484.34 $\pm$ 580.24)\\
    0.3 & 0.0004 & 0.68 (0.55 $\pm$ 0.79) & 537.83 (491.25 $\pm$ 588.98)\\
    0.1 & 0.0004 & 0.60 (0.42 $\pm$ 0.77) & 639.04 (501.23 $\pm$ 857.00)\\
    0.2 & 0.0004 & 0.57 (0.37 $\pm$ 0.74) & 902.23 (534.29 $\pm$ 1559.95)\\
    0.1 & 0.0005 & 0.48 (0.25 $\pm$ 0.69) & 1678.29 (669.60 $\pm$ 3024.30)\\
    0.0 & 0.0005 & 0.19 (0.02 $\pm$ 0.40) & 2151.14 (1256.65 $\pm$ 3162.47)\\

    \bottomrule
  \end{tabular}

  \vspace{1ex}
  \footnotesize{  Values are mean (95\% confidence interval). Confidence intervals computed using bootstrapping and shown to two decimal places.}
\end{table}

\begin{table}[ht]
  \centering
  \caption{ Values for \AlgNameShortFDC reported in Fig \ref{fig:ablations} with 95\% confidence intervals}
  \label{tab:ciablations4}
    \begin{tabular}{>{}c >{}c >{}c >{}c >{}c}
        \toprule    
    $\beta$ & $\Tilde{\gamma}_0$ & Mean Validity (95\% CI) & Mean Conformer Vendi (95\% CI) \\
    \midrule
    0.4 & 0.0001 & 0.71 (0.67 $\pm$ 0.74) & 484.34 (461.47 $\pm$ 506.75)\\
0.5 & 0.0002 & 0.71 (0.63 $\pm$ 0.77) & 496.06 (467.29 $\pm$ 523.97)\\
0.7 & 0.0002 & 0.70 (0.64 $\pm$ 0.75) & 502.49 (483.86 $\pm$ 526.78)\\
0.5 & 0.0001 & 0.69 (0.65 $\pm$ 0.74) & 505.84 (468.66 $\pm$ 538.20)\\
0.2 & 0.0003 & 0.67 (0.58 $\pm$ 0.77) & 539.29 (504.70 $\pm$ 585.65)\\
0.9 & 0.001 & 0.56 (0.49 $\pm$ 0.62) & 601.06 (531.19 $\pm$ 699.07)\\
0.3 & 0.0004 & 0.54 (0.34 $\pm$ 0.72) & 664.45 (531.77 $\pm$ 847.98)\\
0.5 & 0.0005 & 0.46 (0.30 $\pm$ 0.66) & 674.63 (544.65 $\pm$ 824.76)\\
0.4 & 0.0005 & 0.41 (0.25 $\pm$ 0.58) & 768.98 (599.68 $\pm$ 975.95)\\
0.1 & 0.0004 & 0.41 (0.24 $\pm$ 0.56) & 1084.21 (574.88 $\pm$ 2062.43)\\
0.1 & 0.0005 & 0.40 (0.15 $\pm$ 0.66) & 1774.38 (828.37 $\pm$ 2904.44)\\
0.2 & 0.0005 & 0.30 (0.07 $\pm$ 0.56) & 1948.54 (954.87 $\pm$ 3004.34)\\

    \bottomrule
  \end{tabular}

  \vspace{1ex}
  \footnotesize{  Values are mean (95\% confidence interval). Confidence intervals computed using bootstrapping and shown to two decimal places.}
\end{table}

\newpage

\subsubsection{Ablation: pure \textsc{EXPAND} and \textsc{PROJECT} steps}

In order to isolate the effects of the \textsc{EXPAND} and \textsc{PROJECT} steps, we report the result of running each for $K=3$ iterations in Fig. \ref{fig:projection}. Fig. \ref{fig:projection} shows the Pareto optimal points of the following hyperparameter combinations:

\textsc{EXPAND} step (\AlgNameShortExplore, black circles in Fig. \ref{fig:projection})
\begin{itemize}
    \item $\beta$: $0.0$, $0.1$, $0.2$, $0.3$, $0.4$, $0.5$, $0.6$, $0.7$, $0.8$, $0.9$, $0.95$
    \item $\Tilde{\gamma_0}$: $0.0001$, $0.0002$, $0.0003$, $0.0004$, $0.0005$
\end{itemize}

\textsc{PROJECT} step (green circles in Fig. \ref{fig:projection})
\begin{itemize}
    \item $\eta_k = \eta_0$ constant set to: $0.0$, $0.1$, $0.5$, $1.0$, $2.0$, $5.0$
\end{itemize}

We remark again that running only the \textsc{EXPAND} step without the \textsc{PROJECT} step is equivalent to running \AlgNameShortExplore. For each parameter combination above we average results over 8 seeds.
Notice that the \textsc{EXPAND} and \textsc{PROJECT} steps have the reverse effect of each other: the \textsc{EXPAND} step trades off validity for increased diversity (Conformer Vendi) where as the \textsc{PROJECT} step increases validity at a cost of marginally reduced diversity. 

\begin{figure}[ht]
    \centering
    \includegraphics[width=0.95\linewidth]{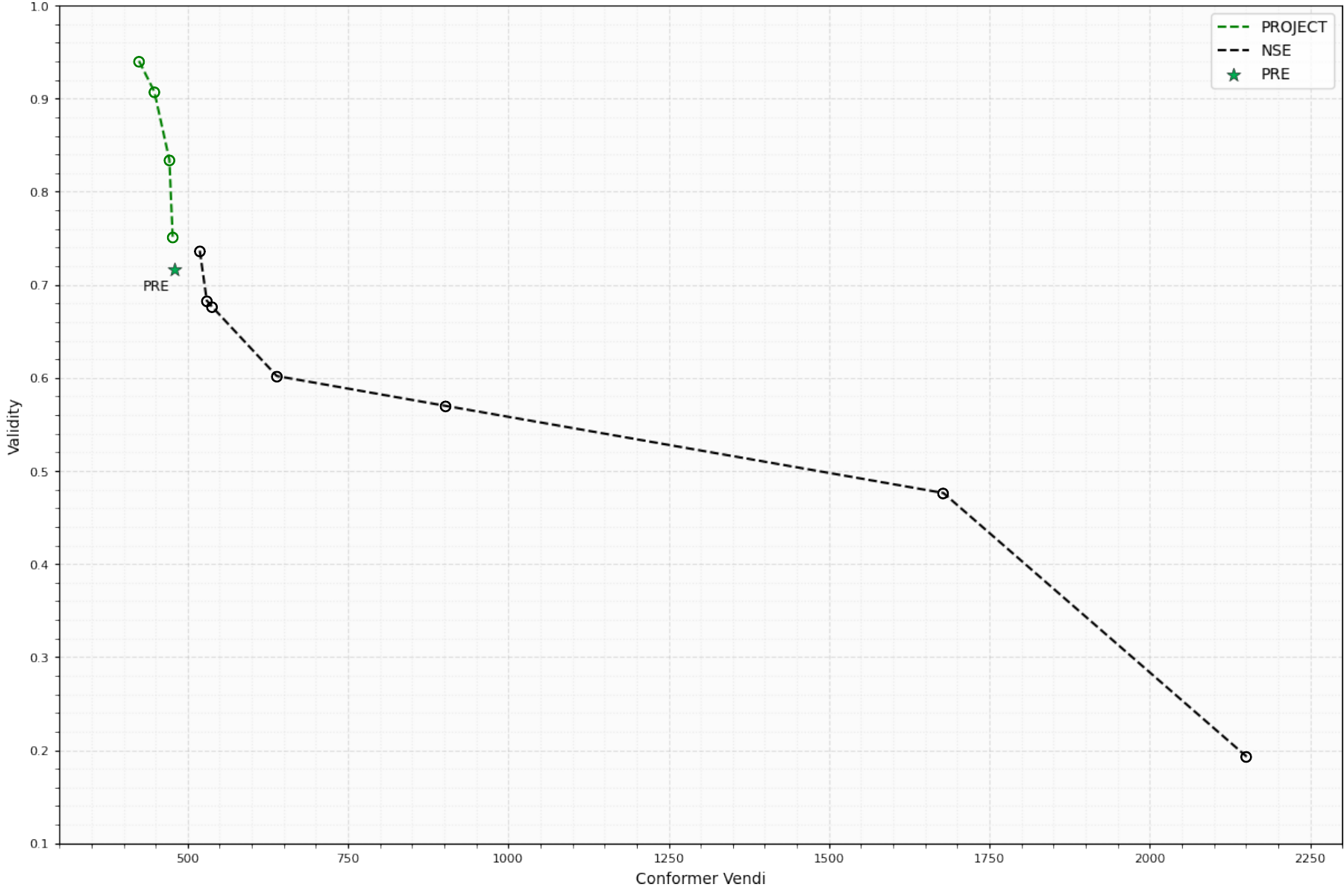}
    \caption{Illustrating the result of isolating the \textsc{EXPAND}  step (\AlgNameShortExplore) and the \textsc{PROJECT} step}
    \label{fig:projection}
\end{figure}

\newpage

\subsubsection{Comparison with Inference Time Filtering}

In this section, we consider an additional comparison of the proposed methods against a baseline corresponding to \AlgNameShortFDC plus inference-time filtering. We report the results in Fig. \ref{fig:inferencefilter} below. In particular, we consider the following algorithms:
\begin{itemize}
    \item \AlgNameShortFDC ~\citep{santi2025flow} + inference-time filtering (blue in Fig. \ref{fig:inferencefilter})
    \item \AlgNameShortExplore (Alg. \ref{alg:algorithm_nse} + inference-time filtering (black in Fig. \ref{fig:inferencefilter})
    \item \AlgNameShortLocal 
\end{itemize}
In particular, for the first two cases above, given a model fine-tuned using \AlgNameShortFDC or \AlgNameShortExplore, we filter the samples generated at inference time by using the same weak verifier employed by \AlgNameShortLocal. We note that this schemes induce a closed-form solution formally mathematically equivalent to conditional sampling, with the condition that the samples are in $\Omega_v$. Thus, this schemes effectively amounts to simulating a perfect projection step, albeit at the price of costly per-sample inference-time filtering (\eg, via rejection sampling). We calculate the Conformer Vendi and validity on each set of 5000 molecules, and report the results in Fig. \ref{fig:inferencefilter} below. As one can notice, \AlgNameShortExplore + filtering shows superior performance compared against \AlgNameShortFDC + filtering, while \AlgNameShortLocal shows slightly less validity than \AlgNameShortExplore + filtering, while being significantly cheaper at sampling time, which is a needed requirement for certain generative modeling applications. Compared with \AlgNameShortFDC + filtering, \AlgNameShortLocal shows superior exploration capabilities and slightly lower validity. Nonetheless, it might be possible to parametrize \AlgNameShortLocal to achieve lower diversity and higher validity, similarly to \AlgNameShortFDC + filtering. In particular, these two algorithms theoretically have the same closed-form solutions, and the concrete differences amount to the high per-sample cost of inference-time filtering, and the superior exploration capabilities of \AlgNameShortLocal and \AlgNameShortExplore over \AlgNameShortFDC likely due to noised space exploration, as discussed in Sec. \ref{alg:algorithm}.

\begin{figure}[ht]
    \centering
    \includegraphics[width=0.95\linewidth]{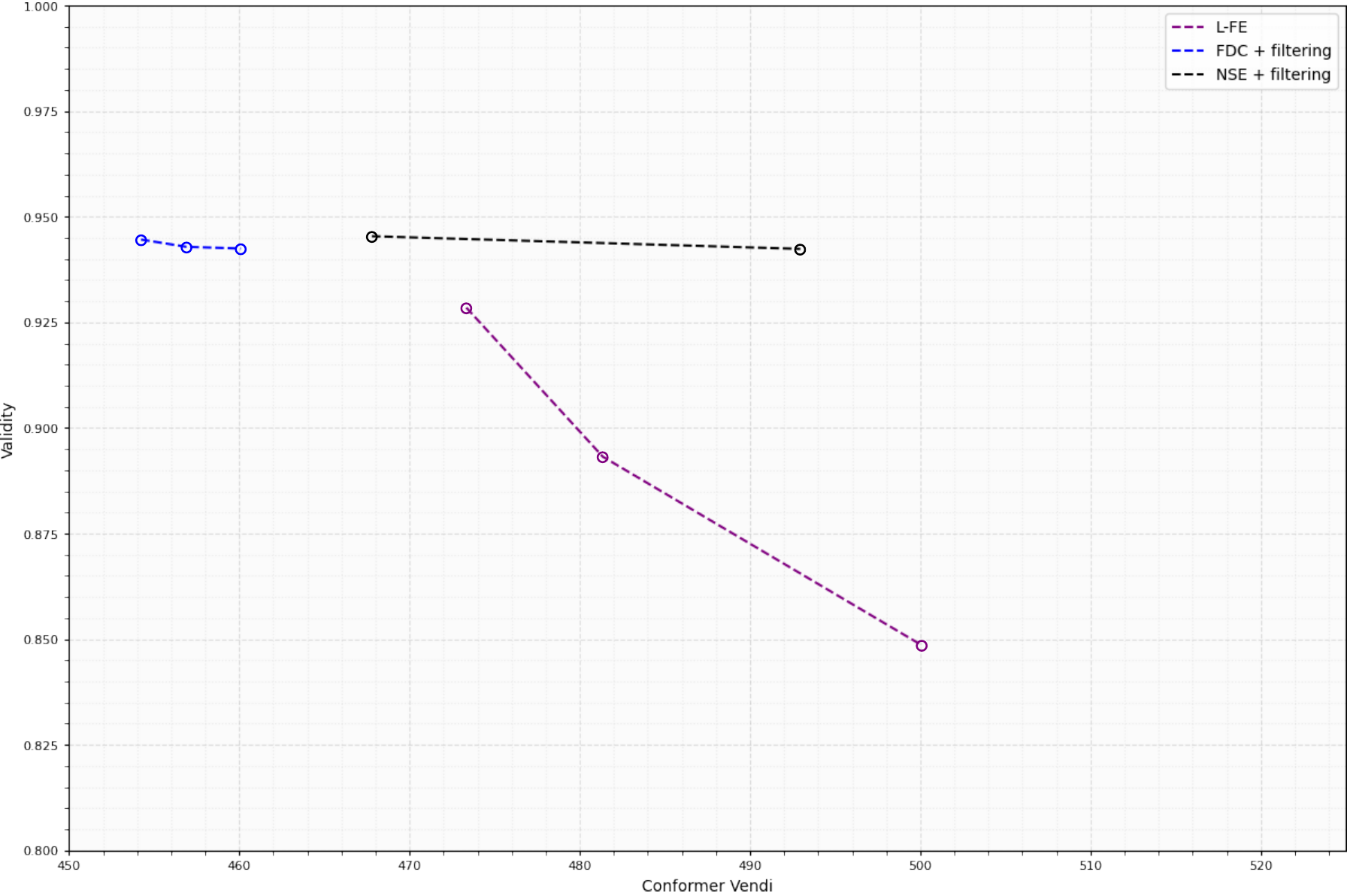}
    \caption{\AlgNameShortLocal compared to \AlgNameShortExplore and \AlgNameShortFDC with inference-time filtering for different parameterizations. We note that \AlgNameShortLocal maintains competitive performance with both methods despite having no filter applied to the output samples, and \AlgNameShortExplore still outperforms \AlgNameShortFDC when both are corrected with an inference-time filter.}
    \label{fig:inferencefilter}
\end{figure}

Each point in Fig. \ref{fig:inferencefilter} corresponds to a different Pareto optimal parametrization ($\beta$, $\gamma$ and $\eta$ if applicable) for each method. Each method is run for $K=3$ iterations, with results averaged across 8 seeds. See section \ref{sec:hypergeom} for details about other hyperparameters.

\subsubsection{Hardware}

We ran all of our experiments using a single NVIDIA RTX 2080Ti GPU per run (QM9 experiments) or a single NVIDIA RTX 4090 GPU per run (GEOM experiments).
\newpage

\section{Update Step Reparametrization}
\label{sec:reparametrization}
Recall the expression for the gradient of running costs in the Local Flow Expander algorithm (Alg. \ref{alg:algorithm}, \eqref{eq:running-cost}):

\begin{align}
    \lambda_t\nabla_{x_t}\delta\mathcal{G}(p^\pi_t) &= \lambda_t\nabla_{x_t}\delta(\mathcal{H}(p^\pi_t) - \alpha_t \mathcal{D}_\textrm{KL}(p^\pi_t || p^\textrm{pre}_t))\\
    &= \lambda_t(-s^\pi_t + \alpha(s^\pi_t - s^\textrm{pre}_t))\\
    &= -\lambda_t((\alpha + 1)s^\pi_t - \alpha s^\textrm{pre}_t)
\end{align}

which are then multiplied by the stepsize $\gamma_k$ at each iteration, resulting in the following expression being plugged into the Adjoint Matching algorithm as the gradient of the running cost:

\begin{equation}
    \nabla f_t = - \gamma_k \lambda_t ((\alpha + 1)s^\pi_t - \alpha s^\textrm{pre}_t) \, .
\end{equation}

While $\alpha$ has an intuitive interpretation as the regularization strength in objective \ref{eq:local_flow_expansion_problem}, it has the unfortunate side-effect of scaling the magnitude of the running cost which could potentially have the opposite effect. Indeed, notice that as $\alpha \rightarrow \infty$ the running costs explode. For practical applications it seems more suitable to reparametrize the running cost as follows:

\begin{equation}
    \nabla f_t = -\Tilde{\gamma}_k \lambda_t (s^\pi_t - \beta s^\textrm{pre}_t)
\end{equation}

for $\beta = \frac{\alpha}{\alpha + 1} \in [0, 1]$, absorbing a $(\alpha + 1)$ factor into the new stepsize $\Tilde{\gamma}_k$:

\begin{equation}
    \Tilde{\gamma}_k = (\alpha + 1) \gamma_k \,.
\end{equation}

Note that this parametrization is as expressive as before but is easier to tune as it disentangles the effect of the stepsize and the regularization strength.

\newpage

\end{document}